\providecommand{\tabularnewline}{\\}
\providecommand{\argsA}[2]{ {#1}_{#2} } % 'agent' argument
\providecommand{\argsG}[2]{ {\joint{#1}}_{#2} } % 'group' argument - NOTE this handles making it 'joint' (since we might not always want that...
\providecommand{\argsT}[2]{ {#1}^{#2} } % 'time' argument
\providecommand{\argsI}[2]{ {#1}^{#2} } % 'index' argument
\providecommand{\argsAT}[3]{ {#1}_{#2}^{#3} }
\providecommand{\argsGT}[3]{ {\joint{#1}}_{#2}^{#3} } 
\providecommand{\argsIT}[3]{ {#1}^{#2,#3} }
\providecommand{\argsAIT}[4]{ {#1}_{#2}^{#3,#4} }
\providecommand{\V}{V}
\providecommand{\ed}{e}
\providecommand{\edS}{\set{E}}
\providecommand{\RA}[1]     {\argsA{R}{#1}}
\providecommand{\QT}[1]     {\argsT{Q}{#1}}
\providecommand{\QAT}[2]    {\argsAT{Q}{#1}{#2}}
\providecommand{\VA}[1]     {\argsA{V}{#1}}
\providecommand{\VAT}[2]    {\argsAT{V}{#1}{#2}}
\providecommand{\KroD}[2]   { \IndF_{\{ #1, #2 \}} }
\providecommand{\IndF}      { \boldsymbol{1} }
\providecommand{\lbA}[1]    {\argsAT{b}{#1}{l}}
\providecommand{\lbAT}[2]   {\argsAT{b}{#1}{l,{#2}}}
\providecommand{\gbA}[1]    {\argsAT{b}{#1}{g}}
\providecommand{\gbAT}[2]   {\argsAT{b}{#1}{g,{#2}}}
\providecommand{\NMF}        {y}                %non-modeled factor
\providecommand{\MF}         {x}                %modeled factor
\providecommand{\MFn}        {\tilde{x}}               %non-locally-affected MF (NLAF)
\providecommand{\MFl}        {\mathring{x}}               %only-locally-affected MF (LAF)
\providecommand{\PF}         {xPRIVATE}         %private factor = only locally affected (LAF)
\providecommand{\mfI}[1]    {\argsI {\MF}{#1}}
\providecommand{\mfIT}[2]   {\argsIT{\MF}{#1}{#2}}
\providecommand{\mfA}[1]    {\argsA {\joint{\MF}}{#1}}       %<--- AKA: a local state!
\providecommand{\mfAT}[2]   {\argsAT{\joint{\MF}}{#1}{#2}}
\providecommand{\LSF}{S}        %the local state function.
\providecommand{\slfmAS}[1]    {\argsA {\joint{\set{X}}} {#1}}    
\providecommand{\mflI}[1]   {\argsI {\MFl}{#1}}
\providecommand{\mflIT}[2]  {\argsIT{\MFl}{#1}{#2}}
\providecommand{\mflAT}[2]  {\argsAT{\joint{\MFl}}{#1}{#2}}
\providecommand{\mfnI}[1]   {\argsI {\MFn}{#1}}
\providecommand{\mfnT}[1]   {\argsT {\MFn}{#1}}
\providecommand{\mfnIT}[2]  {\argsIT{\MFn}{#1}{#2}}
\providecommand{\mfnAT}[2]   {\argsAT{\joint{\MFn}}{#1}{#2}}
\providecommand{\mfnAIT}[3]  {\argsAIT{\joint{\MFn}}{#1}{#2}{#3}}
\providecommand{\pfI}[1]    {\argsI {\PF}{#1}}
\providecommand{\nmfA}[1]   {\argsA{\joint{\NMF}}{#1}}
\providecommand{\nmfAT}[2]  {\argsAT{\joint{\NMF}}{#1}{#2}}
\providecommand{\nmfI}[1]   {\argsI{\NMF}{#1}}
\providecommand{\nmfT}[1]   {\argsT{\NMF}{#1}}
\providecommand{\nmfIT}[2]  {\argsIT{\NMF}{#1}{#2}}
\providecommand{\INFL}        {I} %the influence base symbol
\providecommand{\ifSD}[2]   {\INFL_{#1\rightarrow#2}}
\providecommand{\iffunc}    {\INFL}                               %influence function (used for I(..|..) probability style notation
\providecommand{\ifpiA}[1]  {\argsA{\INFL}{\rightarrow#1}}        
\providecommand{\ifpiAT}[2] {\argsAT{\INFL}{\rightarrow#1}{#2}}  
\providecommand{\ifpoAT}[2] {\argsAT{\INFL}{#1\rightarrow}{#2}} 
\providecommand{\jifp}      {{\INFL}}                 %a joint influence point.
\providecommand{\IFSOURCE}  {u}
\providecommand{\ifsT}[1]   {\argsT     {\IFSOURCE}{#1}}
\providecommand{\ifsIT}[2]  {\argsIT    {\IFSOURCE}{#1}{#2}}
\providecommand{\ifsAT}[2]  {\argsAT    {\joint{\IFSOURCE}}{\rightarrow #1}{#2}} 
\providecommand{\IFDEST}    {v}
\providecommand{\ifdT}[1]   {\argsT     {\IFDEST}{#1}}
\providecommand{\DSET}          {\joint{D}}
\providecommand{\dsetA}[1]      {\argsA{\DSET}{#1}}
\providecommand{\dsetAT}[2]     {\argsAT{\DSET}{#1}{#2}}
\providecommand{\dHistAT}[2]    {\vec{\DSET}\argsAT{{}}{#1}{#2}}
\providecommand{\dsetUF}{d}
\providecommand{\dsetCompF}{\sigma}
\providecommand{\sect}{Section~}
\newcommand{\fig}{Figure~}
\newcommand{\tab}{Table~}
\newcommand{\app}{Appendix~}
\newcommand{\lem}{Lemma~}
\newcommand{\thm}{Theorem~}
\providecommand{\dfn}{Definition~}
\providecommand{\defas}     {\operatorname{\triangleq}}
\providecommand{\PrS}{\Delta} %set of probabilities over .
\newcommand{\E}{\mathbf{E}}
\providecommand{\reals}{\mathbb {R} }
\providecommand{\real}{\reals} %<deprecated shorthand
\providecommand{\algName}[1]{{\sc{#1}}}
\providecommand{\problemName}[1]{\textsc{#1}}
\newcommand{\set}[1]{\mathcal{#1}}
\providecommand{\joint}[1]{\boldsymbol{#1}}
\providecommand{\discount}{\gamma}
\providecommand{\ts}{t}             %time steps, ranging from 0,...,h-1
\providecommand{\hor}{H}
\providecommand{\h}{\hor}              %the horizon
\providecommand{\agentSymb}{D}
\providecommand{\agentS}{\mathcal{\agentSymb}}
\providecommand{\agentI}[1]{{#1}}
\providecommand{\nrA}{n} % the number of agents
\providecommand{\excl}[1]{-{#1}}
\providecommand{\Aug}[1]        {\bar{#1}}
\providecommand{\sAugA}[1]      {\argsA{\Aug{s}}{#1}}
\providecommand{\sAugAT}[2]     {\argsAT{\Aug{s}}{#1}{#2}}
\providecommand{\sS}{\set{S}}   % the state set (state space)
\providecommand{\s}{s}  
\providecommand{\sT}[1]{\argsT{\s}{#1}} 
\providecommand{\sA}[1]{\argsA{\s}{#1}}    %<-- note the link with \mfA (that is a local state in an LFM, this more general)
\providecommand{\sAS}[1]{\argsA{\sS}{#1}}  %<-- 
\providecommand{\sAT}[2]{\argsAT{\s}{#1}{#2}} 
\providecommand{\nrSF}{|\sfacS|} % the number of state-factors
\providecommand{\factorSymb}{F}
\providecommand{\factorSetSymb}{\set{F}}    %the set of state factors
\providecommand{\sfacS}     {\factorSetSymb}                    % the set of all state factors
\providecommand{\sfac}      {\factorSymb}                       % a state factor 
\providecommand{\sfacI}[1]  {\argsI{\factorSymb}{#1}}           % a state factor with index #1
\providecommand{\sfacT}[1]  {\argsT{\factorSymb}{#1}}           %                       
\providecommand{\factorValueSymb}{f}
\providecommand{\factorValueSetSymb}{\set{F}}
\providecommand{\sfacvI}[1]     {\argsI{\factorValueSymb}{#1}}     
\providecommand{\sfacvIS}[1]    {\argsI{\factorValueSetSymb}{#1}}  
\providecommand{\AC}{a}                 % an action (of the single agent)
\providecommand{\ACS}{\set{A}}                     % the set of actions
\providecommand{\aA}[1] {\argsA {\AC}   {#1}}
\providecommand{\aAS}[1]{\argsA {\ACS}  {#1}}      
\providecommand{\aAT}[2]{\argsAT{\AC}   {#1}{#2}}
\providecommand{\ja}    {\joint {\AC}}          % a joint action
\providecommand{\jaS}   {\joint {\ACS}}         % the set of joint actions
\providecommand{\jaT}[1]{\argsT {\ja}   {#1}}   % a joint action with index #1
\providecommand{\jaG}[1]    {\argsG{\AC}    {#1}} % a (joint) action of group #1 
\providecommand{\jaGT}[2]   {\argsGT{\AC}   {#1}{#2}} % a (joint) action of group #1 
\providecommand{\OB}{o}                
\providecommand{\OBS}{\set{O}}
\providecommand{\oA}[1] {\argsA {\OB}   {#1}}
\providecommand{\oAS}[1]{\argsA {\OBS}  {#1}}      
\providecommand{\oAT}[2]{\argsAT{\OB}   {#1}{#2}}
\providecommand{\jo}        {\joint {\OB}}      
\providecommand{\joS}       {\joint {\OBS}}    
\providecommand{\joT}[1]    {\argsT {\jo}   {#1}}
\providecommand{\joGT}[2]   {\argsGT{\OB}   {#1}{#2}}
\providecommand{\Tfunc}     {T}
\providecommand{\Ofunc}     {O}
\providecommand{\REW}{r}    %base notation for a (sampled, or received) reward
\providecommand{\REWF}{R}   %base notation for the reward function
\providecommand{\rA}[1] {\argsA {\REW}   {#1}}
\providecommand{\R}     {\REWF}                     % the global reward function
\providecommand{\RA}[1] {\argsA {\REWF}   {#1}}     % an agent's local reward function
\providecommand{\RAT}[2]{\argsAT{\REWF}   {#1}{#2}}
\providecommand{\bSymbol}           {{b}}
\providecommand{\bO}                {\argsT\bSymbol{0}} %- the initial belief
\providecommand{\bA}[1]             {\argsA\bSymbol{#1}}
\providecommand{\gbA}[1]             {\argsAT\bSymbol{#1}{g}}
\providecommand{\AOH}{\vec{h}}
\providecommand{\AOHS}{\vec{\mathcal{H}}}
\providecommand{\JAOH}{\vec{\joint{h}}}
\providecommand{\aoHist}        {\JAOH}
\providecommand{\aoHistA}[1]    {\argsA   {\AOH}{#1}}
\providecommand{\aoHistAT}[2]   {\argsAT{\AOH}{#1}{#2}}
\providecommand{\aoHistATS}[2]  {\argsAT  {\AOHS}{#1}{#2}}
\providecommand{\aoHistGT}[2]   {\JAOH\argsGT{}{#1}{#2}}
\providecommand{\OH}{\vec{o}}
\providecommand{\JOH}{\vec{\joint{o}}{}}
\providecommand{\oHistT}[1]    {\JOH\argsT{}{\,#1}}
\providecommand{\oHistAT}[2]   {\argsAT  {\OH}{#1}{\,#2}}
\providecommand{\oHistGT}[2]   {\argsGT\JOH{#1}{\,#2}}
\providecommand{\aoHistEmpty}{()}
\providecommand{\POL}{\pi}
\providecommand{\jpol}      {\joint{\POL}}	                % a joint policy - general term
\providecommand{\jpolG}[1]  {\argsG     {\joint{\POL}}{#1}}	% a joint policy for group #1
\providecommand{\jpolGT}[2] {\argsGT    {\joint{\POL}}{#1}{#2}}	% a joint policy for group #1
\providecommand{\polA}[1]   {\argsA{\POL}{#1}}	% a policy - general term
\providecommand{\DR}{\delta}
\providecommand{\drAT}[2]{\argsAT{\DR}{#1}{#2}}
\renewcommand{\joint}[1]    {#1}
\setlist{nolistsep}
\setlist{topsep=2mm,partopsep=0mm,parsep=.1ex,itemsep=.5ex,leftmargin=7mm}
\numberwithin{equation}{section}
\theoremstyle{plain}
\newtheorem{theorem}{Theorem}%[section]
\newtheorem{lemma}{Lemma}%[section]
\newtheorem{proposition}{Proposition}%[section]
\theoremstyle{definition}
\newtheorem{definition}{Definition}%[section]
\theoremstyle{remark}
\newtheorem*{example}{Example}
\newtheorem*{observation}{Observation}
\def\<#1>{%
    \expandafter\ifx\csname<#1>\endcsname\relax
        \errmessage{abbreviation <#1> undefined!}
    \else
        \csname<#1>\endcsname
    \fi
}
\def\abbr#1#2{%
    \expandafter\def\csname<#1>\endcsname{#2}%
}
\begin{document}
\title{A Sufficient Statistic for Influence \\
in Structured Multiagent Environments}
\author{\name Frans A. Oliehoek 
\email f.a.oliehoek@tudelft.nl \\
\addr Department of Intelligent Systems\\
Delft University of Technology\\
Mourik Broekmanweg 6\\ 
Delft, 2628 XE, The Netherlands  
\AND
\name Stefan Witwicki
\email stefan.witwicki@nissan-usa.com\\
\addr Alliance Innovation Lab Silicon Valley\\ 
Nissan Technical Center North America \\
3400 Central Expressway\\ 
Santa Clara, CA 95051, USA  
\AND
\name Leslie P. Kaelbling 
\email lpk@csail.mit.edu \\        
\addr CSAIL\\ 
Massachusetts Institute of Technology\\        
32 Vassar Street \\
Cambridge, MA 02139, USA
}
\maketitle
\begin{abstract}
Making decisions in complex environments is a key challenge in artificial
intelligence~(AI). Situations involving multiple decision makers
are particularly complex, leading to computational intractability
of principled solution methods. A body of work in AI has tried to
mitigate this problem by trying to distill interaction to its essence:
how does the policy of one agent \emph{influence} another agent? If
we can find more compact representations of such influence, this can
help us deal with the complexity, for instance by searching the space
of influences rather than the space of policies. However, so far these
notions of influence have been restricted in their applicability to
special cases of interaction. In this paper we formalize \emph{influence-based
abstraction~(IBA)}, which facilitates the elimination of latent state
factors \emph{without any loss in value}, for a very general class
of problems described as factored partially observable stochastic
games (fPOSGs). On the one hand, this generalizes existing descriptions
of influence, and thus can serve as the foundation for improvements
in scalability and other insights in decision making in complex multiagent
settings. On the other hand, since the presence of other agents can
be seen as a generalization of single agent settings, our formulation
of IBA also provides a sufficient statistic for decision making under
abstraction for a single agent. We also give a detailed discussion
of the relations to such previous works, identifying new insights
and interpretations of these approaches. In these ways, this paper
deepens our understanding of abstraction in a wide range of sequential
decision making settings, providing the basis for new approaches and
algorithms for a large class of problems.
\end{abstract}
\input{./preamble.natbib-emul.tex}

\section{Introduction}

\label{sec:Introduction}

One of the important ideas in the development of algorithms for multiagent
systems (MASs) is the identification of compressed representations
of the information that is relevant for an agent~\citep{Becker03AAMAS,Becker04AAMAS,Varakantham09ICAPS,Petrik09JAIR,Witwicki10AAMAS,Witwicki10ICAPS,Witwicki11AAMAS,Velagapudi11AAMAS,Witwicki11PhD,Witwicki12AAMAS,Oliehoek12AAAI_IBA,HernandezLeal17arxiv,Bazinin18GCAI}.
For instance, when a cook and a waiter collaborate, the waiter might
not need to know all details of how the cook prepares the food; it
may be sufficient if he/she has an understanding of the time that
it will take.

In this paper we investigate abstractions that aim at decomposing
structured MASs into a set of smaller interacting problems~\citep{Oliehoek12AAAI_IBA,Witwicki10ICAPS}.
In particular, we describe in detail the concept of \emph{influence-based
abstraction (IBA)}, which facilitates the abstraction of latent state
variables without sacrificing task performance. It constructs a smaller,
local model for one of the agents given the policies of the other
agents. IBA consists of two steps: first, we compute a so-called \emph{influence
point\textemdash }a more abstract representation of how an agent's
local problem is affected by other agents and external (i.e., non-local)
parts of the problem\textemdash , second, this influence is used to
construct the smaller \emph{influence-augmented local model (IALM).
}This IALM can subsequently be used to compute a best response.

IBA does not only give a new perspective on best-response computations
themselves, but this new perspective also has broader implications.
For instance, it forms the basis of \emph{influence search }\citep{Becker03AAMAS,Witwicki10ICAPS,Witwicki12AAMAS,Bazinin18GCAI},
which can provide significant speedup for multiagent planning by searching
the space of \emph{joint influences} rather than the potentially much
bigger space of joint policies. It also can underpin guarantees on
the quality of heuristic solutions, by considering \emph{optimistic
}influences \citep{Oliehoek15IJCAI}, or approximate influences~\citep{Congeduti20arxiv}\textbf{.
}While in this article, we assume that the model (which can be seen
as a specific type of dynamic Bayesian network) is known in advance,
future work could consider learning such representations. Moreover,
IBA can serve as inspiration, in the context of deep reinforcement
learning, for neural network architectures that compute approximate
versions of influence, which can improve learning, both in terms of
speed as well as performance \citep{Suau19ALA}.

This article gives a formal definition of influence that can be used
to perform IBA for general factored partially observable stochastic
games (fPOSGs)~\citep{Hansen04AAAI,Boutilier99JAIR}, and proves
that an IALM constructed using this definition of influence in fact
allows computation of an \emph{exact} best-response. In other words,
it shows that this description of influence is a \emph{sufficient
statistic} of the policy of the other agents: it is sufficient to
predict observations and rewards and to thereby optimize value. This
article extends our previous paper \citep{Oliehoek12AAAI_IBA} in
the following ways:
\begin{enumerate}
\item it provides a complete proof of the claimed exactness of IBA;
\item it elaborates on a number of technical subtleties, such as dealing
with multiple sources of influence, and specifying initial beliefs
in the IALM;
\item it provides an extension of IBA and corresponding proofs to fPOSGs
with intra-stage dependencies, which are critical for the expressiveness
of the formalism (cf. \sect\ref{sec:IBA:def-of-influence:links-sources-destinations});
\item it provides additional illustration and explanation, making the concept
of IBA more accessible;
\item it deepens the discussion of the relation to special cases of fPOSGs,
and more explicitly identifies ways in which future work can improve
scalability of these sub-classes;
\item it provides a much more extensive discussion of related work, including
the more recent work on deep reinforcement learning (RL). Specifically,
by building on the theoretical results provided in this paper, it
generates insights into the nature of the `approximate value factorization'
assumption which has been successfully exploited by a popular class
of deep RL methods.
\end{enumerate}
Additionally, in \sect\ref{sec:LFMs-and-best-responses} we make
a simple (but, in the context of IBA, novel) observation: the presence
of other agents can be seen as a generalization of single agent settings,
which directly implies that \emph{our formulation of IBA also provides
a sufficient statistic for decision making under abstraction for a
single agent}. While there is a multitude of performance loss bounds
available for abstractions, e.g., see \citet{Dearden97AIJ,Dean97UAI,Givan03AIJ,Iyengar05MOR,Li06ISAIM,Petrik14NIPS,Abel16ICML},
these are usually based on\emph{ }assumed\emph{ }quality bounds on
the transition probabilities and rewards of the abstracted model (see
\sect\ref{sec:Other-Forms-of-abstraction} for more details). In
contrast, our work here shows how an abstracted model can preserve
exact transition and reward predictions, by `remembering' appropriate
elements of the local history. In the words of \citet{McCallum95PhD},
we detail an approach to \emph{perfectly }``uncover {[}...{]} hidden
state'' in abstractions for a large class of structured problems.

As such, the contributions of this paper are of a theoretical nature:
they provide a principled understanding of lossless abstractions in
structured (multiagent) decision problems by providing a formal framework
that gives a unified perspective on previous work, while at the same
time providing new insights and extending the scope of applicability.
The main technical result is the proof of sufficiency given in \prettyref{sec:Sufficiency}:
the smaller influence-augmented local model produced by IBA can be
used instead of the original larger model \emph{without any loss }in
solution quality (i.e., value). The proof is not only a certification
of the theory, it also serves a practical purpose: it isolates the
core technical property that needs to hold for sufficiency, thus providing
1) insight into \emph{how }abstraction of latent state factors affects
value, 2) a derivation that can be used to obtain a simplification
of influence in simpler cases, and 3) a recipe of how to prove similar
results for more complex settings.

This paper is organized as follows: First, \prettyref{sec:background}
provides the necessary background by introducing single and multiagent
models for decision making. \prettyref{sec:LFMs-and-best-responses}
introduces the concept of computing best responses (using global value
functions) to the policies of other agents and the concept of `local
form models' which formalizes a desired abstraction for an agent.
Next, in \prettyref{sec:IBA}, we bring these concepts together: we
show how an agent can locally compute a best-response (compute a local
value function) provided it is given an influence point. \prettyref{sec:IBA-with-IS-deps}
extends this framework to problems with intra-stage dependencies.
\prettyref{sec:Sufficiency} then presents the main proof of sufficiency
of our influence points, i.e., it shows that they provide sufficient
information to compute optimal policies without any loss in value.
\prettyref{sec:sub-classes-with-Compact-representations} discusses
reinterpretations of previous work on forms of influence-based abstraction
in our more general framework, while \prettyref{sec:Related-Work}
details the relations to other related work. Finally, \prettyref{sec:conclusions}
concludes.

\section{Background}

\label{sec:background} Here we concisely provide background on some
of the models that we use. The main purpose is to introduce the notation
formally. For an extensive introduction to \emph{partially observable
Markov decision processes (POMDPs)} we refer to \citet{Kaelbling98AI}
and \citet{Spaan12RLBook}, for an introduction to multiagent variants
see \citet{Seuken08JAAMAS,Oliehoek12RLBook} and \citet{Oliehoek16Book}.

Unavoidably, this manuscript contains a fair amount of terminology
and mathematical notations. To aid the reader we have included a list
of acronyms (\app\ref{sec:List-of-Acronyms}) and a list of recurring
notation~(\app\ref{sec:List-of-Notation}).

\subsection{Single-Agent Models: POMDPs}

\label{sec:background:Single-Agent-Models:-POMDPs}

Partially observable Markov decision processes, or POMDPs, provide
a formal framework for the interaction of an agent with a stochastic,
partially observable environment. That is, they provide an agent with
the capabilities to reason about both action uncertainty as well as
state uncertainty.

\subsubsection{Model}

A POMDP is a discrete time model, in which the agent selects an action
at every time step or \emph{stage. }It extends the regular \emph{Markov
decision process (MDP) }\citep{Puterman94} to settings in which the
state of the environment cannot be observed.\emph{ }It can be formally
defined as follows.

\begin{definition}[POMDP]\label{def:POMDP}

A \emph{partially observable Markov decision process (POMDP) }is defined
as a tuple $\mathcal{M}^{POMDP}=\left\langle \sS,\aAS{},\Tfunc,\R,\oAS{},\Ofunc,\hor,{\bO}\right\rangle $
with the following components:
\begin{itemize}
\item $\sS$ is a (finite) set of states $\s$. The state at some stage
$\ts$ is denoted $\sT{\ts}$;
\item $\aAS{}$ is the (finite) set of actions $\aA{}$;
\item $\Tfunc$ is the transition probability function, that specifies $\Tfunc(\s'|\s,\aA{})=\Pr(\sT{\ts+1}=\s'\mid\sT{\ts}=\s,\ \aAT{}{\ts}=\aA{}),$
the probability of a next state $\s'$ given a current state $\s$
and action $\aA{}$. This directly demonstrates the primed shorthand
notation we will occasionally use;
\item $\R$ is the immediate reward function $\R:\sS\times\jaS\times\sS\rightarrow\real$.
With $R(\s,\aA{},\s')$ we denote the reward specified for a particular
transition $s,a,s'$;
\item $\oAS{}$ is the set of observations;
\item $\Ofunc$ is the observation probability function, which specifies
$\Ofunc(\oAT{}{}|\aA{},\s')=\Pr(\oAT{}{\ts+1}=\oAT{}{}\mid\aAT{}t=\aA{},\,\sT{\ts+1}=\s')$,
the probability of a particular observation $\oA{}$ after $\aA{}$
and resulting state $s'$;
\item $\hor$ is the horizon of the problem as mentioned above;
\item $\bO\in\PrS(\mathcal{S})$, is the initial state distribution at time
$\ts=0$.\footnote{$\PrS(\cdot)$ denotes the set of probability distributions over $(\cdot)$.}
\end{itemize}
\end{definition} 

In many cases, the set of states is huge, and states can be thought
of as composed of values assigned to different variables:

\begin{definition}[Factored POMDP]\label{def:fPOMDP} In a \emph{factored
POMDP} \emph{(fPOMDP)}, the state space $\sS$ is spanned by a set
$\sfacS=\left\{ \sfacI1,\dots,\sfacI\nrSF\right\} $ of state variables
$\sfacI k$ (that are also called \emph{factors}). Each of these can
take values from its domain $\sfacvIS k$, such that the set of states
is defined as $\sS=\sfacvIS1\times\dots\times\sfacvIS\nrSF$. \end{definition} 

The merit of such a factored POMDP is that, by making the structure
of the problem (i.e., how different factors influence each other)
explicit, the model can be much more compact. In particular, the initial
state distribution can be compactly represented as a \emph{Bayesian
network}~\citep{Pearl88,Bishop06book,KollerFriedman09}, and the
transition and reward model can be specified compactly using a \emph{two-stage
dynamic Bayesian network (2DBN)}~\citep{Boutilier99JAIR}, and a
similar approach can be taken for the observation model~\citep{Poupart05PHD}.
(An example of a 2DBN will be discussed in \fig\ref{fig:house-dbn}
\vpageref{fig:house-dbn}.)

The fPOMDP model is closely related to the framework of \emph{influence
diagrams }\citep{Howard84,Tatman90IEEESMC}. In fact, by unrolling
(over time) the 2DBN we create an influence diagram. We point out,
however, that our notion of \emph{influence }(i.e., the influence
point and the resulting influence-based abstraction we will detail
in \sect\ref{sec:IBA}) is novel; it has not been considered in influence
diagrams. 

\subsubsection{Beliefs}

In contrast to regular MDPs, in a POMDP the agent cannot observe the
state; it only observes the observations. However, the observations
are not a Markovian signal: i.e., the last observation $\oAT{}{\ts}$
made by the agent does not provide the same amount of information
(to predict the rewards and the future of the process) as the \emph{action-observation
history (AOH)}, the entire history of actions and observations $\aoHistAT{}{\ts}=\left(\aAT{}0,\oAT{}1,\dots,\aAT{}{\ts-1},\oAT{}{\ts}\right)$.
This means that in general the agent needs to select its actions based
on $\aoHistAT{}{\ts}$ in order to achieve optimal performance.

Luckily, for a POMDP this history can be summarized compactly as a
\emph{belief, }which is defined as the posterior probability distribution
over states given the history: 
\[
\bA{}(\s)\defas\Pr(\s|\bO,\aoHistAT{}{\ts}).
\]
The belief does not only summarize the history, it does so in a lossless
way. That is, a belief is a \emph{sufficient statistic }for optimal
decision making~\citep{Bertsekas05DPBook_vol1}; it allows an agent
to reach the same performance as an agent that would act optimally
based on the AOH $\aoHistAT{}{\ts}$.

This belief can be recursively computed, which means that an agent
can update its belief as it interacts with its environment. We write
$\bA{}'=BU(\bA{},\aA{},\oA{})$, where $BU(\bA{},\aA{},\oA{})$ is
the belief update operator that, given a previous belief $b$ taken
action $\aA{}$ and received observation $\oA{}$, produces the next
belief:
\begin{equation}
\forall\s'\qquad BU(\bA{},\aA{},\oA{})(\s')=\frac{1}{\Pr(\oA{}|\bA{},\aA{})}\Pr(\oA{}|\aA{},\s')\sum_{\s}\Pr(\s'|\s,\aA{})\bA{}(\s).\label{eq:POMDP_BU}
\end{equation}
Here, $\Pr(\oA{}|\bA{},\aA{})$ is a normalization constant:

\[
\Pr(\oA{}|\bA{},\aA{})=\E_{\sT{}\sim\bA{},\sT{\prime}\sim T(\sT{},\aAT{}{},\cdot)}\left[\Ofunc(\aAT{}{},\sT{\prime},\oAT{}{})\right]=\sum_{\s'}\Pr(\oA{}|\aA{},\s')\sum_{\s}\Pr(\s'|\s,\aA{})\bA{}(\s).
\]

\subsubsection{Policies and Value Functions}

In a POMDP, the agent employs a \emph{policy,} $\polA{}$, to interact
with its environment. Such a policy is a (deterministic) mapping from
beliefs to actions. Note that, given the initial belief~$\bO$, such
a policy will specify an action for each observation history.\footnote{This can be seen as follows: for $\bO$ the policy specifies an action,~$\aAT{}0$,
then given $\oAT{}1$ we can compute $\argsT{\bA{}}{1}$ which we
can use to look up $\aAT{}1$, etc. }

The goal of the decision maker, or agent, in the POMDP is to choose
a policy $\polA{}$ that maximizes the expected (discounted) cumulative
reward:
\begin{equation}
\E\left[\sum_{\ts=0}^{\hor-1}\gamma^{\ts}\R(\sT{\ts},\aAT{}{\ts},\sT{\ts+1})|\bO,\polA{}\right],
\end{equation}
here
\begin{itemize}
\item $\hor$ is the horizon, i.e., the number of time steps, or \emph{stages,
}for which we want to plan,
\item the expectation is over sequences of states and observations induced
by the policy $\polA{}$,
\item $\gamma\in[0,1]$ is the discount factor.
\end{itemize}
In this work, we focus on the finite-horizon case, in which it is
typical (but not necessary) to assume $\gamma=1$.

For a finite-horizon POMDP, the optimal (action-)value function for
stage~$\ts$ can be expressed as
\begin{equation}
\QT{\ts}(\bA{},\aA{})=\begin{cases}
\R(\bA{},\aA{}), & \ts=\hor-1\\
\R(\bA{},\aA{})+\discount\sum_{\oA{}}\Pr(\oA{}|\bA{},\aA{})\V^{{\ts+1}}(BU(\bA{},\aA{},\oA{})), & \text{otherwise}
\end{cases}
\end{equation}
where $\V^{{\ts+1}}(\bA{}')=\max_{\aA{}'}\QT{\ts+1}(\bA{}',\aA{}')$
is the value of acting optimally in the next time step and $\R(\bA{},\aA{})$
is the expected immediate reward: 
\begin{equation}
\R(\bA{},\aA{})=\E_{\s\sim\bA{},\s'\sim\Tfunc(\cdot|\s,\aA{})}\left[\R(\s,\aA{},\s')\right]=\sum_{\s}\bA{}(\s)\sum_{\s'}\Pr(\s'|\s,\aA{})\R(\s,\aA{},\s').
\end{equation}

\subsection{Multiagent Models: POSGs}

The POMDP model can be extended to include multiple self-interested
agents as follows.

\begin{definition}[POSG]\label{def:POSG}

A \emph{partially observable stochastic game (POSG) }is defined as
a tuple $\mathcal{M}^{POSG}=\left\langle \agentS,\sS,\jaS,\Tfunc,\set{\REWF},\joS,\Ofunc,\hor,{\bO}\right\rangle $
with the following components:
\begin{itemize}
\item $\agentS=\{\agentI1,\dots,\agentI\nrA\}$ is the set of \emph{$\nrA$}
agents. 
\item $\sS$ is a (finite) set of states.
\item $\jaS=\aAS1\times\dots\aAS\nrA$ is the set of \emph{joint} actions
$\ja=\left\langle \aA1,\dots,\aA{\nrA}\right\rangle $, with $\aAS i$
the set of individual actions for agent~$i$.
\item $\Tfunc$ is the transition probability function, that now depends
on joint actions: $\Tfunc(\sT{\ts+1}|\sT{\ts},\jaT{\ts})=\Pr(\sT{\ts+1}|\sT{\ts},\jaT{\ts})$.
\item $\set{\REWF}=\left\langle \RA1,\dots\RA{\nrA}\right\rangle $ is the
collection of immediate reward functions (one for each agent). Each
$\RA i:\sS\times\jaS\times\sS\rightarrow\real$ maps from states,
joint actions and next states to an immediate reward for agent~$i$.
\item $\joS=\oAS1\times\dots\times\oAS\nrA$ is the set of joint observations~$\jo=\langle\oA1,\dots,\oA\nrA\rangle$,
with $\oAS i$ the set of individual observations for agent~$i$,
\item $\Ofunc$ is the observation probability function, which specifies
$\Pr(\jo|\ja,\s')$, the probability of a particular joint observation
$\jo$ after $\ja$ and resulting state $\s'$.
\item $\hor$ is the horizon of the problem as mentioned above.
\item $\bO\in\PrS(\sS)$, is the initial state distribution at time $\ts=0$.
\end{itemize}
\end{definition} 

Since in a POSG each agent has its own goal, there no longer is a
definition of optimality. Instead it is customary to focus on game-theoretic
solution concepts~\citep{Hansen04AAAI}. Such solutions, e.g., Nash
equilibria, typically specify a tuple of policies $\jpol=\left\langle \polA1,\dots,\polA{\nrA}\right\rangle $,
one for each agent, that are in equilibrium. In general, we will refer
to a tuple of policies $\jpol$ as a \emph{joint policy. }

Of course, it is also possible to consider cooperative teams of agents.
In this case, we align the goals of the agents by giving them the
same reward function:

\begin{definition}[Dec-POMDP]\label{def:Dec-POMDP} A \emph{decentralized
partially observable Markov decision process (Dec-POMDP)} is a POSG
where all agents share the same reward function: $\forall_{i,j}\;\RA i=\RA j$.
\end{definition} 

Since interests are aligned, in a Dec-POMDP we can speak about optimality.
Moreover, there is guaranteed to be at least one \emph{deterministic
}joint policy that is optimal~\citep{Oliehoek08JAIR}. As was the
case for POMDPs, we can also consider variants of the multiagent models
with factored state spaces. We will refer to these as \emph{factored
POSGs (fPOSGs) }and \emph{factored Dec-POMDPs (fDec-POMDPs)}~\citep{Oliehoek08AAMAS}.\footnote{More recently, researchers have also investigated deterministic and
non-deterministic versions, called (factored) qualitative Dec-POMDP~\citep{Brafman13AAAI}.
We will not particularly target this special case in this paper, but
note that ideas of influence search can be exploited in this context
too \citep{Bazinin18GCAI}.}

\begin{figure}
\begin{centering}
{
\psfrag{1}[cc][cc]{1}
\psfrag{2}[cc][cc]{2}
\psfrag{g}[cc][cc]{T}\includegraphics[width=4cm]{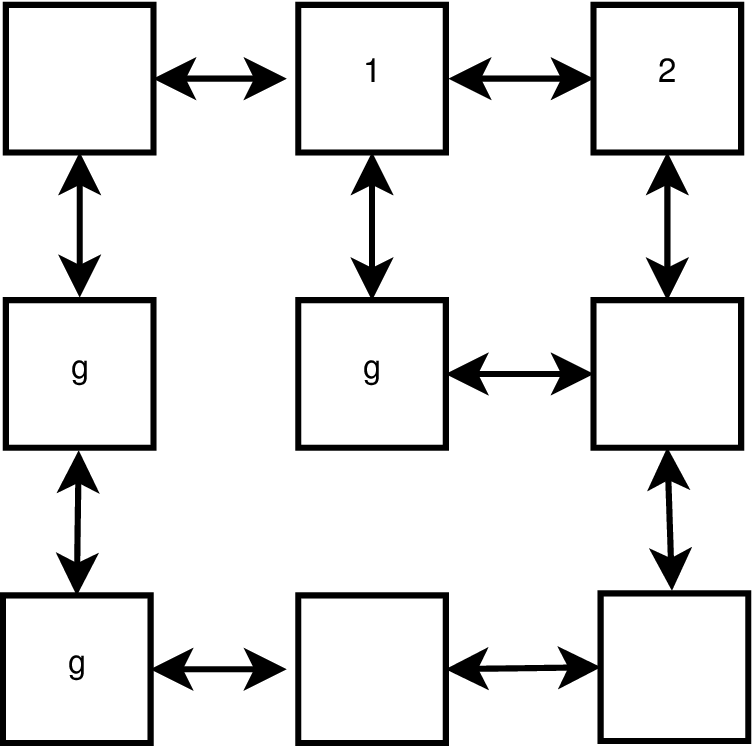}}
\par\end{centering}
\caption{A possible instantiation of the \<housesearch> problem: 1, 2 represent
the starting locations of the agents, while `T' encodes the possible
locations of the target.}

\label{fig:gsearch}

\end{figure}

As an example, we consider the \<housesearch>  problem \citep{Oliehoek11EUMAS},
in which a team of robots must find a target (say a remote control)
in a house with multiple rooms. This task is representative of an
important class of problems in which a team of agents needs to locate
objects or targets. In \<housesearch> the assumption is that a prior
probability distribution over the location of the target is available
and that the target is stationary or moves in a manner that does not
depend on the strategy used by the searching agents.

\begin{example} The \<housesearch>  environment can be represented
by a graph, as illustrated in \fig\ref{fig:gsearch} for the case
of two agents. At every time step each agent can stay in the current
room or move to a next one. The location of an agent~$i$ at time
step $t$ is denoted $l_{i}^{t}$ and that of the target is denoted
$l_{tgt}^{t}$. In general, the target could move with probabilities
$p(l_{tgt}^{'}|l_{tgt})$. The actions (movements) of each agent have
a specific cost $c_{i}(l_{i},\aA i)$ (e.g., the energy consumed by
navigating to a next room) and can fail; we allow for stochastic transitions
$p(l_{i}'|l_{i},\aA i)$. Also, each robot might receive a penalty
$c_{time}$ for every time step that the target is not found yet.
When a robot is in (or near) the same node as the target, there is
a probability of detecting the target $p(detect_{i}|l_{tgt},l_{i})$,
which will be modeled by a Boolean state variable `target found'
$\argsT{f}{\ts}$, which both agents can observe (thus modeling a
communication channel which the agents can only use to inform each
other of detection). When the target is detected, the agents also
receive a reward $r_{detect}$. Given the prior distribution and model
of target behavior, the goal is to optimize the sum (over time) of
rewards, thus trading off movement cost and probability of detecting
the target as soon as possible. In this paper, we focus on the local
perspective of a protagonist agent and therefore will assume that
each agent has its individual rewards (so the POSG setting).\footnote{In previous work, the house search problem was treated as a Dec-POMDP
by defining the team reward as the sum of the individual rewards~\citep{Oliehoek11EUMAS}.}\end{example}

\begin{figure}[tbh]
\hfill{}\subfloat[With intra-stage connections.]{\begin{centering}
{\input{figs/frag_housesearch_2DBN.tex}\includegraphics[scale=0.4]{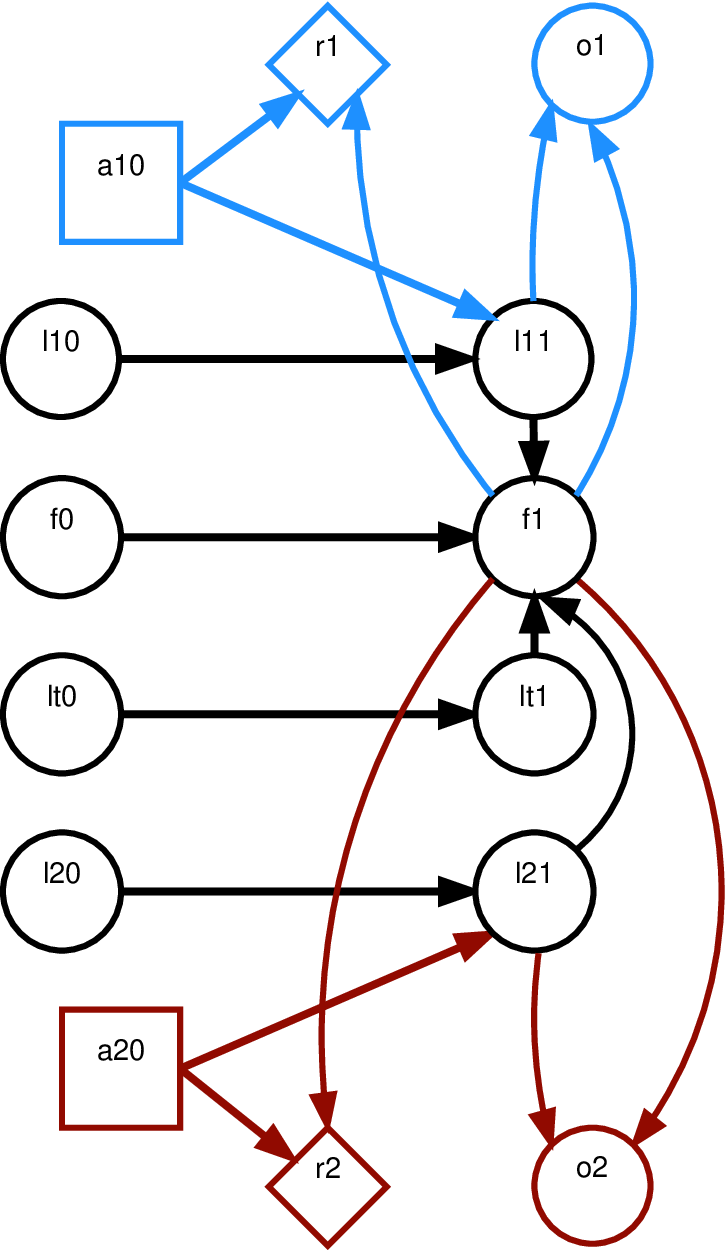}}
\par\end{centering}

\label{fig:house_dbn_intrastage}}\hfill{}\subfloat[Without intra-stage connections.]{\begin{centering}
{\input{figs/frag_housesearch_2DBN.tex}\includegraphics[scale=0.4]{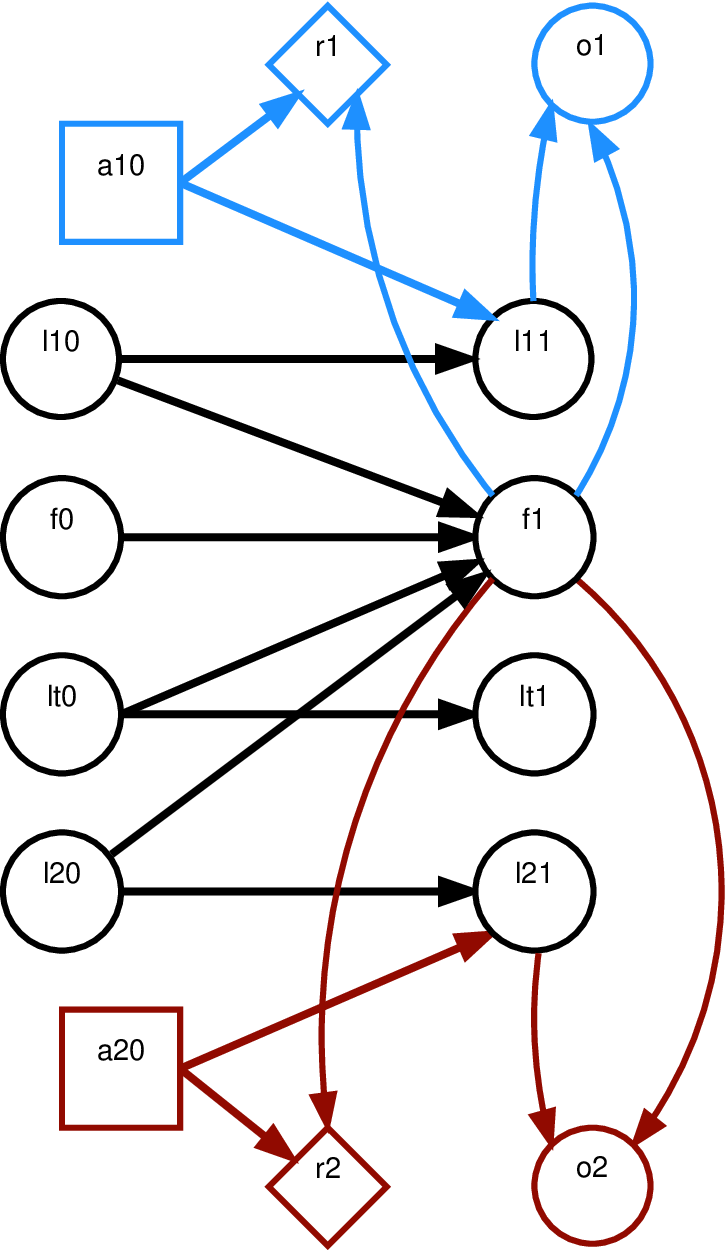}}
\par\end{centering}

\label{fig:house_dbn_no-intra}}\hfill{}

\caption{Factored representation of the \<housesearch> problem. Actions, observations
and rewards of the first agent are in light blue, while those of agent~2
are in dark red. State variables are in black. We use standard shapes
for influence diagrams: rectangles for actions, circles for random
variables, and diamonds for rewards~\protect\citep[e.g.,][]{RussellNorvig09Book3rdEd}.}

\label{fig:house-dbn}
\end{figure}

\fig\ref{fig:house_dbn_intrastage} demonstrates how a two-stage
dynamic Bayesian network (2DBN) can be used to compactly represent
the transition, observation, and reward model~\citep{Boutilier99JAIR}.\footnote{More formally, since we include actions (decisions) and rewards (utilities),
diagrams like this are a type of influence diagram~or decision network.
However, not to introduce further terminology, we will refer to them
simply as 2DBN.} For instance, for each state variable at a state $t+1$, the 2DBN
shows which other entities (state factors and actions) influence it.
The figure illustrates that most dependencies are \emph{across-stage}
(e.g., $l_{2}^{t}$ influences $l_{2}^{t+1}$) but that it is also
possible to have \emph{intra-stage} \emph{dependencies (ISDs). }For
instance, whether the target will be detected at stage $t+1$ depends
on $l_{2}^{t+1}$ not on $l_{2}^{t}$. The representation of the transition
model is compact since it can be represented as a product of \emph{conditional
probability tables (CPTs)}, each of which are exponential only in
the number of incoming dependencies. So as long as the number of incoming
connections is limited, the transition probabilities can be represented
compactly. \fig\ref{fig:house_dbn_intrastage} also shows that this
type of representation can also be employed for observation probabilities,
as well as rewards.

Since ISDs complicate the notation and definition of influence, we
also consider a version of the problem that has no intra-stage connections,
shown in \fig\ref{fig:house_dbn_no-intra}. For rewards and observations,
intra-stage connections are still allowed. (In fact, since the observation
probabilities in the standard POMDP definition depend on the next
state $\s'$, there is no way of representing them without intra-stage
connections). Note that this is a slightly different problem than
the problem represented in \fig\ref{fig:house_dbn_intrastage}: in
the problem without ISDs the agents have a chance of detecting the
target at stage $\ts+1$ if they are co-located with the target at
stage $\ts$, which means that there is a one-step delay incurred
before they receive the reward. This illustrates the fact the ISDs
do allow for a more expressive model, and that therefore developing
theory that support such connections is an important goal. 

To facilitate easier exposition, in \sect\ref{sec:IBA} we will first
introduce the concept of influence-based abstraction without ISDs.
These will be considered in \sect\ref{sec:IBA-with-IS-deps}. Before
we can jump to the topic of influence-based abstraction, however,
we will need to discuss decision problems from a local perspective,
in \sect\ref{sec:LFMs-and-best-responses}, which covers problems
with ISDs.

\section{Best Responses and Local-Form Models }

\label{sec:LFMs-and-best-responses}

In contrast to the typical solutions to POSGs and Dec-POMDPs, which
try and identify a joint policy as the solution, this paper focuses
on the local perspective of an individual agent. From this perspective,
the agent's goal is to compute a best-response to the policies used
by other agents. That is, given a multiagent model with state uncertainty
(either a POSG or Dec-POMDP) and given some policy for the other agents
$\jpolG{\excl i}=\left\langle \polA1,\dots,\polA{i-1},\polA{i+1},\dots,\polA{\nrA}\right\rangle $,
we want to compute the best response $\polA i^{BR}$ for agent~$i$.
Such best-response computation is obviously important for self-interested
agents (i.e., in POSGs), but is also an important component in many
Dec-POMDP solution methods~\citep{Nair03IJCAI,Nair05AAAI,Kim06AAAISS,Pajarinen11NIPS,Lauri20JAAMAS}.
Also, let us point out that we make no restrictions on the policies
employed by the other agents: they are general mappings from the action-observation
histories $\aoHistAT j{\ts}$ to probability distributions over actions.
For instance, their policies could be learning algorithms such as
Q-learning. As such, the setting we consider is very general.

\begin{figure}
\begin{centering}
\includegraphics[scale=0.75]{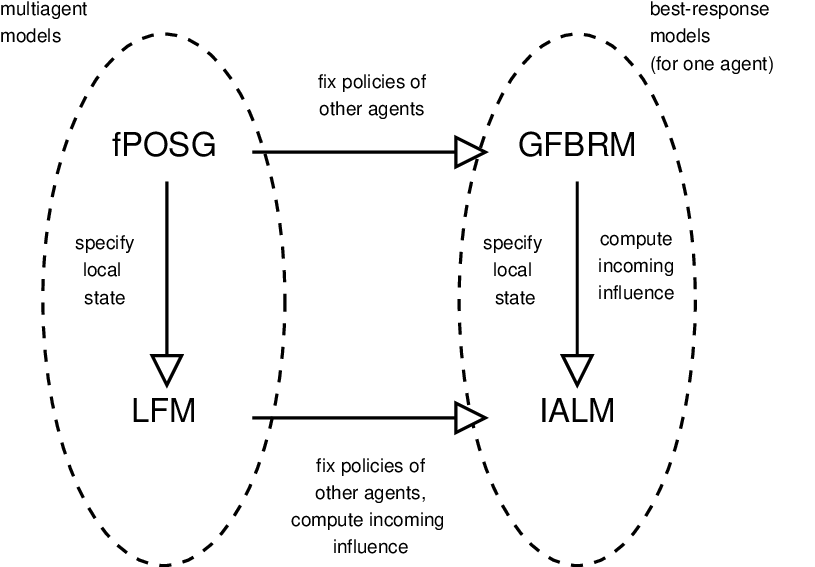}
\par\end{centering}
\caption{Overview of various models used.}

\label{fig:modelTypes}
\end{figure}

As illustrated in \fig\ref{fig:modelTypes}, we will consider a number
of different types of models in this paper. The starting point is
given by the fPOSG or a special case thereof (e.g., a Dec-POMDP).
We refer those models as global-form models. For such models, it is
possible to directly compute a best-response by fixing the policies
of the other agents. We refer to the resulting POMDP as a global-form
best-response model (GFBRM); these models will be introduced next.
Subsequently, we will introduce \emph{local-form models (LFMs), }which
restrict the state factors that each agent primarily cares about.
That is, an agent in an LFM only reasons about a subset of factors.
This will then form the basis for computing best-responses in such
a local model, called \emph{influence-augmented local model (IALM),}
which will be enabled by influence-based abstraction introduced in
\prettyref{sec:IBA}.

\subsection{Global-Form Best-Response Model }

\label{sec:GFBRM}

In this section we define a Global-Form Best-Response Model (GFBRM)
that an agent can use in order to compute a best-response in a general
POSG. We first define this model and then talk about value functions
for this model.\footnote{Our formulation here is closely related to the way best-responses
are computed in DP-JESP~\citep{Nair03IJCAI}: essentially our representation
here is a reformulation that makes explicit the fact that fixing the
policies of other agents leads to a single-agent POMDP model.}

\paragraph{Specification of the Model}

The basic idea of defining a best-response model is shown in \fig\ref{fig:GFBRM}.
By fixing $\jpolG{\excl i}$, the policies of the other agents, all
the choice nodes are turned into random variables that now depend
on the AOHs that those agents observed~\citep{Nair03IJCAI}. So the
key construct here is that the AOH of the other agent(s) is made part
of the hidden state (often termed latent state factors) of the best-response
model. This can be formalized as follows.
\begin{figure}[tb]
\centering{}{\input{figs/frag_brm.tex}\includegraphics[scale=0.4]{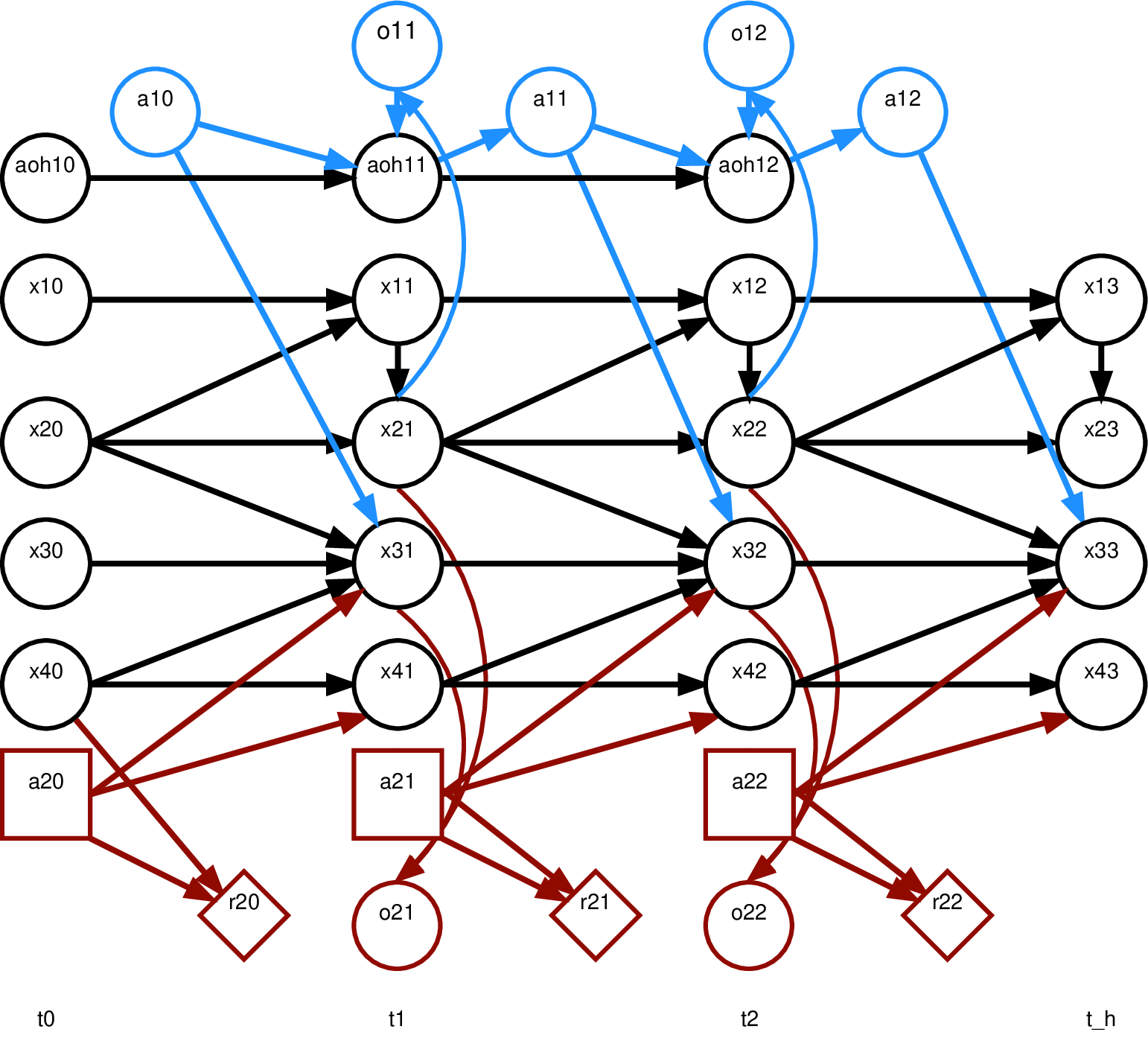}}\caption{A hypothetical global-form best-response model for agent~2, unrolled
over time. This model has a number of state factors $\sfacI k$. In
addition, the action-observation history $\aoHistAT1{\ts}$ (or, more
general, internal state) of agent~1 can be interpreted as a state
factor in this model. \label{fig:GFBRM}}
\end{figure}

\begin{definition}[Global-Form Best-Response Model]{\sloppy Let
$\mathcal{M}^{POSG}=\left\langle \agentS,\sS,\jaS,\Tfunc,\set{\REWF},\joS,\Ofunc,\hor,{\bO}\right\rangle $
be a (f)POSG and let $\jpolG{\excl i}$ be a profile of policies for
all agents but $i$. We say that the POMDP $\mathcal{M}_{i}^{GFBR}(\mathcal{M}^{POSG},\jpolG{\excl i})=\left\langle \bar{\sAS i},\aAS{i},\bar{\Tfunc}_{i},\bar{\REWF}_{i},\oAS{i},\bar{\Ofunc}_{i},\hor,\bar{\bA i}{}^{0}\right\rangle $
is a \emph{Global-Form Best-Response Model (GFBRM) }for agent~$i$,
where}
\begin{itemize}
\item $\bar{\sAS i}$ is the set of augmented states $\sAugAT i\ts=\langle\s,\aoHistAT{\excl i}{\ts}\rangle$
that specify an underlying state of the POSG as well as an AOH history
for all the other agents.
\item $\aAS{i},\oAS{i}$ are the (unmodified) sets of actions and observations
for agent~$i$.
\item The transitions
\begin{eqnarray}
\bar{\Tfunc}_{i}(\sAugAT i{t+1}|\sAugAT it,\aAT i{\ts}) & = & \bar{\Tfunc}_{i}(\langle\sT{\ts+1},\aoHistAT{\excl i}{\ts+1}\rangle|\langle\sT{\ts},\aoHistAT{\excl i}{\ts}\rangle,\aAT i{\ts})\nonumber \\
 & = & \bar{\Tfunc}_{i}(\langle\sT{\ts+1},\left(\aoHistAT{\excl i}{\ts},\aAT{\excl i}{\ts},\oAT{\excl i}{\ts+1}\right)\rangle|\langle\sT{\ts},\aoHistAT{\excl i}{\ts}\rangle,\aAT i{\ts})\nonumber \\
 & = & \Pr(\oAT{\excl i}{\ts+1},\sT{\ts+1},\aAT{\excl i}{\ts}|\sT{\ts},\aAT i{\ts},\aoHistAT{\excl i}{\ts})\nonumber \\
 & = & \Pr(\oAT{\excl i}{\ts+1}|\aAT i{\ts},\aAT{\excl i}{\ts},\sT{\ts+1})\Pr(\sT{\ts+1}|\sT{\ts},\aAT i{\ts},\aAT{\excl i}{\ts})\Pr(\aAT{\excl i}{\ts}|\aoHistAT{\excl i}{\ts})\nonumber \\
 & = & \left[\sum_{\oAT i{\ts+1}}O(\joT{\ts+1}|\jaT{\ts},\sT{\ts+1})\right]T(\sT{\ts+1}|\sT{\ts},\jaT{\ts})\polA{\excl i}(\aAT{\excl i}{\ts}|\aoHistAT{\excl i}{\ts})
\end{eqnarray}
with $\polA{\excl i}(\aAT{\excl i}{\ts}|\aoHistAT{\excl i}{\ts})=\prod_{j\neq i}\polA j(\aAT j{\ts}|\aoHistAT j{\ts})$
the probability of $\aAT{\excl i}{\ts}$ given $\aoHistAT{\excl i}{\ts}$
according to $\polA{\excl i}$.
\item The observations
\begin{eqnarray}
\bar{\Ofunc}_{i}(\oAT i{\ts+1}|\aAT i{\ts},\sAugAT i{t+1}) & = & \bar{\Ofunc}_{i}(\oAT i{\ts+1}|\aAT i{\ts},\langle\sT{\ts+1},\left(\aoHistAT{\excl i}{\ts},\aAT{\excl i}{\ts},\oAT{\excl i}{\ts+1}\right)\rangle)\nonumber \\
 & = & \Pr(\oAT i{\ts+1}|\aAT i{\ts},\aAT{\excl i}{\ts},\sT{\ts+1},\oAT{\excl i}{\ts+1})\nonumber \\
 & = & \frac{\Pr(\oAT i{\ts+1},\oAT{\excl i}{\ts+1}|\aAT i{\ts},\aAT{\excl i}{\ts},\sT{\ts+1})}{\Pr(\oAT{\excl i}{\ts+1}|\aAT i{\ts},\aAT{\excl i}{\ts},\sT{\ts+1})}\nonumber \\
 & = & \frac{O(\joT{\ts+1}|\jaT{\ts},\sT{\ts+1})}{\sum_{\oAT i{\ts+1}}O(\joT{\ts+1}|\jaT{\ts},\sT{\ts+1})}
\end{eqnarray}
(Note that $\joT{\ts+1}=\left\langle \oAT i{\ts+1},\oAT{\excl i}{\ts+1}\right\rangle $,
such that the summation is over the component of $\joT{\ts+1}$ corresponding
to agent~$i$).
\item $\bar{\R}_{i}$ is the augmented reward model
\begin{eqnarray}
\bar{\R}_{i}(\sAugAT it,\aAT i{\ts},\sAugAT i{t+1}) & = & \bar{\R}_{i}(\langle\sT{\ts},\aoHistAT{\excl i}{\ts}\rangle,\aAT i{\ts},\langle\sT{\ts+1},\aoHistAT{\excl i}{\ts+1}=\left(\aoHistAT{\excl i}{\ts},\aAT{\excl i}{\ts},\oAT{\excl i}{\ts+1}\right)\rangle)\nonumber \\
 & = & \RA i(\sT{\ts},\aAT i{\ts},\aAT{\excl i}{\ts},\sT{\ts+1})
\end{eqnarray}
Note that $\aAT{\excl i}{\ts}$ is specified by $\sAugAT i{t+1}$.
\item $\hor$ is the (unmodified) horizon.
\item $\bar{\bA i}{}^{0}$ is the initial belief\emph{. }
\end{itemize}
\end{definition}

A GFBRM is a POMDP, which means that an agent can track a belief,
which is now a distribution over \emph{augmented states }$\sAugA i=\langle\sT{\ts},\aoHistAT{\excl i}{\ts}\rangle$,
as usual. We will refer to such beliefs as \emph{global-form beliefs,
}denoted $\gbA i$. The initial global-form belief follows directly
from the initial belief of the POSG. Since at the first stage, the
history of the other agents is the empty history $\aoHistEmpty$,
it is trivially constructed from $\bO$: 
\[
\forall_{s}\ \gbAT i0(\left\langle s,\aoHistEmpty\right\rangle )\defas\bO(s).
\]

Note that the description of the GFBRM depends rather crucially on
the fact that we choose AOHs for the representation of the internal
state of the other agent(s). That is, we assume that the policies
of the other agent(s) are based on their AOHs. While this is a very
general model, other models of other agents with a more limited description
of internal state (e.g., finite state controllers) can be useful too.
For such more compact descriptions, however, it is not always possible
to construct a POMDP model with an independent transition and observation
model. Instead, one may need to replace $\bar{T},\bar{O}$ by a combined
`dynamics function' $\bar{D}$ that specifies $\bar{D}(\sAugAT i{t+1},\oAT i{\ts+1}|\sAugAT it,\aAT i{\ts})$.
For more details see \citet{Oliehoek14MSDM}.\footnote{{\renewcommand{\aoHistAT}[2]{I_{#1}^{#2}}Essentially in such a setting
we have that augmented states are tuples of nominal states and internal
states of other agents $\sAugAT it=\langle\sT{\ts},\aoHistAT{\excl i}{\ts}\rangle$.
The internal states of the other agent are updated based upon the
taken actions and observations, but do not store those actions and
observations. This means that, in general, $\bar{D}$ is specified
as a marginal:
\begin{multline*}
\bar{D}(\sAugAT i{t+1},\oAT i{\ts+1}|\sAugAT it,\aAT i{\ts})=\Pr(\langle\sT{\ts+1},\aoHistAT{\excl i}{\ts+1}\rangle,\oAT i{\ts+1}|\langle\sT{\ts},\aoHistAT{\excl i}{\ts}\rangle,\aAT i{\ts})\\
=\sum_{\aAT{\excl i}{\ts},\oAT{\excl i}{\ts+1}}\Pr(\aoHistAT{\excl i}{\ts+1}|\aoHistAT{\excl i}{\ts},\aAT{\excl i}{\ts},\oAT{\excl i}{\ts+1})O(\joT{\ts+1}|\jaT{\ts},\sT{\ts+1})T(\sT{\ts+1}|\sT{\ts},\jaT{\ts})\polA{\excl i}(\aAT{\excl i}{\ts}|\aoHistAT{\excl i}{\ts})
\end{multline*}
and it is not possible to decompose it into a separate transition
and observation function.}}

\paragraph{Value Function}

Since a GFBRM is just a POMDP, all POMDP theory and solution methods
apply. E.g., the optimal (action-)value function is given by:

\begin{equation}
\QAT i\ts(\gbA i,\aAT i\ts)=\RA i(\gbA i,\aAT i\ts)+\discount\sum_{\oAT i{\ts+1}}\Pr(\oAT i{\ts+1}|\gbA i,\aAT i\ts)\VAT i{\ts+1}(BU(\gbA i,\aAT i\ts,\oAT i{\ts+1}))\label{eq:Q(b,a)__GFBRM}
\end{equation}
where 
\begin{align}
\RA i(\gbA i,\aAT i\ts) & =\E_{\sAugAT it\sim\gbA i,\sAugAT i{t+1}\sim\Aug T_{i}(\sAugAT it,\aAT it,\cdot)}\left[\Aug{\RA i}(\sAugAT i\ts,\aAT i\ts,\sAugAT i{\ts+1})\right]\nonumber \\
 & =\sum_{\sT{\ts}}\sum_{\sT{\ts+1}}\sum_{\jaG{\excl i}}\Pr(\sT{\ts+1}|\sT{\ts},\ja)\RA i(\sT{\ts},\ja,\sT{\ts+1})\sum_{\aoHistAT{\excl i}{\ts}}\Pr(\jaG{\excl i}|\aoHistAT{\excl i}{\ts})\gbA i(\sT{\ts},\aoHistAT{\excl i}{\ts})\label{eq:R(gfb,a)__GFM}
\end{align}
(see \app\ref{app:GFBRM_reward}) and
\begin{multline}
\Pr(\oAT i{\ts+1}|\gbA i,\aAT it)=\E_{\sAugAT it\sim\gbA i,\sAugAT i{t+1}\sim\Aug T_{i}(\sAugAT it,\aAT it,\cdot)}\left[\Aug O_{i}(\oAT i{\ts+1}|\aAT it,\sAugAT i{t+1})\right]\\
=\sum_{\sT{\ts}}\sum_{\sT{\ts+1}}\sum_{\jaG{\excl i}}\sum_{\joGT{\excl i}{\ts+1}}\Pr(\sT{\ts+1}|\sT{\ts},\ja)\Pr(\joT{\ts+1}|\ja,\sT{\ts+1})\sum_{\aoHistAT{\excl i}{\ts}}\Pr(\jaG{\excl i}|\aoHistAT{\excl i}{\ts},\jpolG{\excl i})\gbA i(\sT{\ts},\aoHistAT{\excl i}{\ts})\label{eq:P(o|gfb,a)__GFM}
\end{multline}
(see \app\ref{app:GFBRM_observ}.)

Solution of the GFBRM gives the best-response value for agent~$i$:

\begin{equation}
\VA i(\polA{\excl i})\defas\VAT i0(\gbAT i0).\label{eq:V(b0)_JESP}
\end{equation}

\subsection{Local-Form Model }

GFBRMs allow an agent~$i$ to compute a best-response policy against
the fixed policies $\jpolG{\excl i}$ of the other agents. A difficulty
here is that agent~$i$ needs to reason about many state factors
as well as the internal state (the action-observation history) of
the other agents. That is, drawing an analogy to human interactions,
it is like in a simple collaborative task (e.g., carrying a table),
we would need to reason over the inner working of our collaborator's
brain, as well as over the sequence of images that he or she perceives.
Clearly, such an approach is infeasible in general. To make a step
in the direction to overcome this problem, here we introduce \emph{local-form
models (LFMs)} which restrict the set of state factors that each agent
primarily cares about, and eliminates the dependence on the AOH of
other agents. 

\paragraph{Local States}

An LFM augments an fPOSG with a function that provides a description
of each agent\textquoteright s \emph{local state}, i.e., the set of
variables that each agent will model as part of its local problem.\footnote{Note that the word `local' does not need to imply any form of spatial
proximity. For instance, in \<housesearch> the agent might model
its own location (which is spatial), and whether the target has been
found (not spatial).}  Local state descriptions comprise potentially overlapping subsets
of state factors that will allow us to decompose an agent\textquoteright s
best-response computation from the global state. We start with some
definitions.

\begin{definition}[Local state function] The \emph{local state function
}$\LSF:\agentS\rightarrow2^{\sfacS}$ maps from agents to subsets
of state factors $\LSF(\agentI i)\subseteq\sfacS$. \end{definition}

The local state function defines the local state space of each agent.
In particular, we say that a state factor $\sfac\in\sfacS$ is \emph{modeled}
by an agent~$i$ if it is part of its local state space: $\sfac\in\LSF(\agentI i)$.

\begin{definition}[Local state space] The\emph{ local state space
}of agent~$i$ is defined as the Cartesian product of the values
that its modeled state factors can take:
\begin{equation}
\slfmAS i\;\defas\;\prod_{\substack{k\text{ s.t.}\sfacI k\in\LSF(\agentI i)}
}\sfacvIS k
\end{equation}
(remember that $\sfacS$ is the set of state factors, while $\sfacvIS k$
is the set of values that the $k$-th state factor $\sfacI k$ can
take).\end{definition}

\begin{definition}[Observation-relevant factor]We say that a state
factor $\sfac$ is \emph{observation-relevant} for an agent~$i$,
denoted $\text{ORel}{}_{i}(\sfac)$, if it affects the probability
of the agent\textquoteright s observation. That is, when in the 2DBN
there is a link from $\sfacT t$ to $\oAT i{\ts}$ (i.e., $\sfac$
is a parent of $\oAT i{\ts}$).\end{definition}

\begin{definition}[Reward-relevant factor]Similarly, a state factor
$\sfac$ is \emph{reward-relevant} for an agent~$i$, $\text{RRel}{}_{i}(\sfac)$
if it affects the agent\textquoteright s rewards, i.e., if $\sfacT t$
or $\sfacT{t+1}$ is a parent of $\RAT{i}{t}$. \end{definition}

We can now define the local-form model.

\begin{definition}[Local-form model] \label{dfn:lfm} A \emph{local-form
POSG,} also referred to as\emph{ local-form model (LFM),} is a pair
$\mathcal{M}^{LFM}=\left\langle \mathcal{M},\LSF\right\rangle $,
where $\mathcal{M}$ is an fPOSG and $\LSF$ is a local state function
such that, for all agents: 
\begin{enumerate}
\item All observation-relevant factors are in the local state: $\forall_{\agentI i}\forall_{\sfac}\;\text{ORel}{}_{i}(\sfac)\implies\sfac\in\LSF(\agentI i).$
\item All reward-relevant factors are in the local state: $\forall_{\agentI i}\forall_{\sfac}\;\text{RRel}{}_{i}(\sfac)\implies\sfac\in\LSF(\agentI i).$
\end{enumerate}
\end{definition} 

\paragraph{Modeled and Non-modeled Factors}

The basic idea behind the definition of the local-form model is to
avoid reasoning over the subset of variables from the global-form
model that are superfluous when it comes to computing the best response.
Therefore, these non-modeled factors can be abstracted away. The requirements
on observation- and reward-relevant factors make certain that the
observation probabilities and rewards are still specified in this
abstracted model. Note also that this means that we will only be
able to abstract away (latent) state variables, not observation variables
themselves. We will show that such latent factor abstraction can,
in principle, be performed without loss in value. This certainly would
not be the case for abstracting away observation variables: in general
this would lead to a loss of information and a corresponding drop
in achievable value~\citep{Oliehoek08JAIR}.

The focus in this text is on the best-response perspective for one
agent~$i$. This allows us to divide the set of state factors in
ones modeled by agent~$i$'s local problem (indicated with $\MF$)
and ones that are not modeled (indicated with $\NMF$).\footnote{More generally, from the perspective of agent~$i$, $\LSF$ partitions
the modeled factors $\LSF(i)$ in two sets: a set of \textit{private}
factors that it models but other agents do not, and a set of \textit{mutually-modeled
factors} (MMFs) that are modeled by agent~$i$ as well as some other
agent~$j$. This distinction plays a crucial role in influence search
for TD-POMDPs~\citep{Witwicki10AAMAS}, but is less important for
computing best-responses as considered in this document.} To reduce the notational load, we will no longer distinguish between
a factor ($\sfacI k$ above) and its values ($\sfacvIS k$ above).
In particular, we will simply write
\begin{itemize}
\item $\mfI k$ (an instantiation of) a modeled factor (with index $k$),
\item $\mfA i$ (an instantiation of) all modeled factors of agent~$i$,
\item $\nmfI k$ (an instantiation of) a non-modeled factor (with index
$k$),
\item $\nmfA i$ (an instantiation of) all non-modeled factors of agent~$i$,
\end{itemize}
such that $\sT\ts=\langle\mfAT i{\ts},\nmfAT i{\ts}\rangle$. We stress
that `modeled' is different from `observed'. In particular, our
aim is to construct a smaller POMDP with fewer (modeled) factors,
but those factors may not be observable. In fact, all state factors
$\mfI k$ (and of course also $\nmfI k$) are expressed as latent
variables. When an agent can somehow (noisily) perceive information
about $\mfI k$, this should be modeled by the observation function:
there should be an arrow from such factors to the observation $\oA i$
of the agent and the CPT of $\oA i$ should appropriately express
the observability of factor. Note that by construction of the LFM
(cf.\ \dfn\ref{dfn:lfm}), no such dependencies may exist from a
$\nmfI k$ to $\oA i$. In general, the observation $\oA i$ may itself
consists of multiple observation factors, but we will not consider
this in this paper.

\paragraph{Transition Probabilities}

In an LFM, the probability of the next local state is the marginal
of the entire state:
\begin{equation}
\Pr(\mfAT i{\ts+1}|\sT{\ts},\aA i,\jaG{\excl i})=\sum_{\nmfAT i{\ts+1}}\Pr(\mfAT i{\ts+1},\nmfAT i{\ts+1}|\sT{\ts},\aA i,\jaG{\excl i})\label{eq:P_xm_sa}
\end{equation}

In an LFM, just as in a normal fPOSG, the flat transition probabilities
on the right hand side of this equation are given by the product of
the CPTs. However, from the perspective of an agent~$i$ we can now
group these CPTs in three different categories: 1) those corresponding
to modeled factors that are only affected by other factors and actions
that are modeled, 2) those corresponding to modeled factors that are
affected by at least one factor or action of the external problem,
and 3) those corresponding to non-modeled factors. We will refer to
the state factors corresponding to these as:
\begin{enumerate}
\item \emph{Only-locally-affected factors (OLAFs)~}$\mflI k$. These can
have incoming arrows from all modeled factors $\mfAT i{\ts}$ at the
previous stage, and from all modeled factors $\mfAT i{\ts+1}$ intra-stage
(but, obviously, excluding $\mflIT k{\ts+1}$ itself, and respecting
a non-cyclic structure as any 2DBN).
\item \emph{Non-locally-affected factors (NLAFs)}~$\mfnI k$. These are
affected by at least one non-modeled (intra-stage or previous-stage)
factor or action of another agent.
\item \emph{Non-modeled factors (NMFs)}~$\nmfI k$.
\end{enumerate}
(Note that the oversets on $\mfI{}$ were chosen to resemble `o'
and `n' for OLAF and NLAF respectively). These three types of factors
are illustrated in \fig\ref{fig:LFM-factor-types}, which shows a
hypothetical local-form model. Using the introduced notation, we can
write the transition probabilities as: 
\begin{multline}
\Pr(\sT{\ts+1}|\sT{\ts},\aA i,\jaG{\excl i})=\left[\Pr(\mflAT i{\ts+1}|\dots)\Pr(\mfnAT i{\ts+1}|\dots)\Pr(\nmfAT i{\ts+1}|\dots)\right]\\
=\Pr(\mflAT i{\ts+1}|\mfAT i{\ts},\mfnAT i{\ts+1},\aA i)\Pr(\mfnAT i{\ts+1}|\mfAT i{\ts},\mflAT i{\ts+1},\nmfAT i{\ts},\nmfAT i{\ts+1},\aA i,\jaG{\excl i})\Pr(\nmfAT i{\ts+1}|\mfAT i{\ts},\mfAT i{\ts+1},\nmfAT i{\ts},\aA i,\jaG{\excl i})\label{eq:P_xxm_sa__LFM_transitionProbs}
\end{multline}
with
\begin{itemize}
\item $\Pr(\mflAT i{\ts+1}|\mfAT i{\ts},\mfnAT i{\ts+1},\aA i)$ representing
a product of CPTs of OLAFs $\mflI k$:
\begin{equation}
\Pr(\mflAT i{\ts+1}|\mfAT i{\ts},\mfnAT i{\ts+1},\aA i)=\prod_{k\in OLAF(i)}\Pr(\mflIT k{\ts+1}|\mfAT i{\ts},\mfAT i{\ts+1},\aA i)\label{eq:LFM:T:OLAFs}
\end{equation}
Note that although such individual factors $\mflIT k{\ts+1}$ can
have intra-stage dependencies on other OLAFs $\mflIT l{\ts+1}$ (i.e.,
 $\mflIT k{\ts+1}$ can depend on $\mfAT i{\ts+1}$ which can include
other OLAFs $\mflIT l{\ts+1}$), the product term $\Pr(\mflAT i{\ts+1}|\mfAT i{\ts},\mfnAT i{\ts+1},\aA i)$
itself can only have intra-stage dependencies on $\mfnAT i{\ts+1}$.
\footnote{Note that the intra-stage OLAFs $\mflAT i{\ts+1}$ will not appear
in the conditioning set (`behind the pipe') as they have all been
multiplied in (they are `before the pipe'). Since the 2DBN is non-cyclical
per definition, this does not present any problems. A more explicit
way of writing this is as follows. In general the OLAFs can now depend
on some NLAFs $\mfnAIT i{ISD}{\ts+1}$ that act as intra-stage dependencies:
\[
\Pr(\mflAT i{\ts+1}|\mfAT i{\ts},\mfnAIT i{ISD}{\ts+1},\aAT i{\ts})\defas\prod_{k\in OLAF(i)}\Pr(\mflIT k{\ts+1}|\mfAT i{\ts},\aAT i{\ts},\mfIT{ISD(k)}{\ts+1})
\]
with $\mfIT{ISD(k)}{\ts+1}$ denoting the intra-stage parents of $\mflIT k{\ts+1}$.
To reduce the notational burden, however, we will use the shorthands
from \eqref{eq:LFM:T:OLAFs}.}
\item $\Pr(\mfnAT i{\ts+1}|\mfAT i{\ts},\mflAT i{\ts+1},\nmfAT i{\ts},\nmfAT i{\ts+1},\aA i,\jaG{\excl i})$
the product of NLAF probabilities: 
\begin{equation}
\Pr(\mfnAT i{\ts+1}|\mfAT i{\ts},\mflAT i{\ts+1},\nmfAT i{\ts},\nmfAT i{\ts+1},\aA i,\jaG{\excl i})=\prod_{k\in NLAF(i)}\Pr(\mfnIT k{\ts+1}|\mfAT i{\ts},\mfAT i{\ts+1},\nmfAT i{\ts},\nmfAT i{\ts+1},\aA i,\jaG{\excl i})\label{eq:LFM:T:NLAFs}
\end{equation}
\item $\Pr(\nmfAT i{\ts+1}|\mfAT i{\ts},\mfAT i{\ts+1},\nmfAT i{\ts},\aA i,\jaG{\excl i})$
the product of probabilities of the NMFs~$\nmfI k$:
\begin{equation}
\Pr(\nmfAT i{\ts+1}|\mfAT i{\ts},\mfAT i{\ts+1},\nmfAT i{\ts},\aA i,\jaG{\excl i})=\prod_{k\in NMF(i)}\Pr(\nmfIT k{t+1}|\mfAT i{\ts},\mfAT i{\ts+1},\nmfAT i{\ts},\nmfAT i{\ts+1},\aA i,\jaG{\excl i})\label{eq:LFM:T:NMFs}
\end{equation}
\end{itemize}
\begin{figure}
\hfill{}\subfloat[Illustration of an abstract local-form model for agent~2. Factors
can be divided into non-modeled factors ($\sfacI1$), non-locally-affected
factors ($\sfacI2$, $\sfacI3$, shaded in this figure), and locally-affected
factors ($\sfacI4$). Also note that $\sfacI4$ is reward-relevant,
while $\sfacI2$ and $\sfacI3$ are observation-relevant factors.]{\begin{centering}
{\input{figs/frag_brm_2DBN.tex}\includegraphics[scale=0.42]{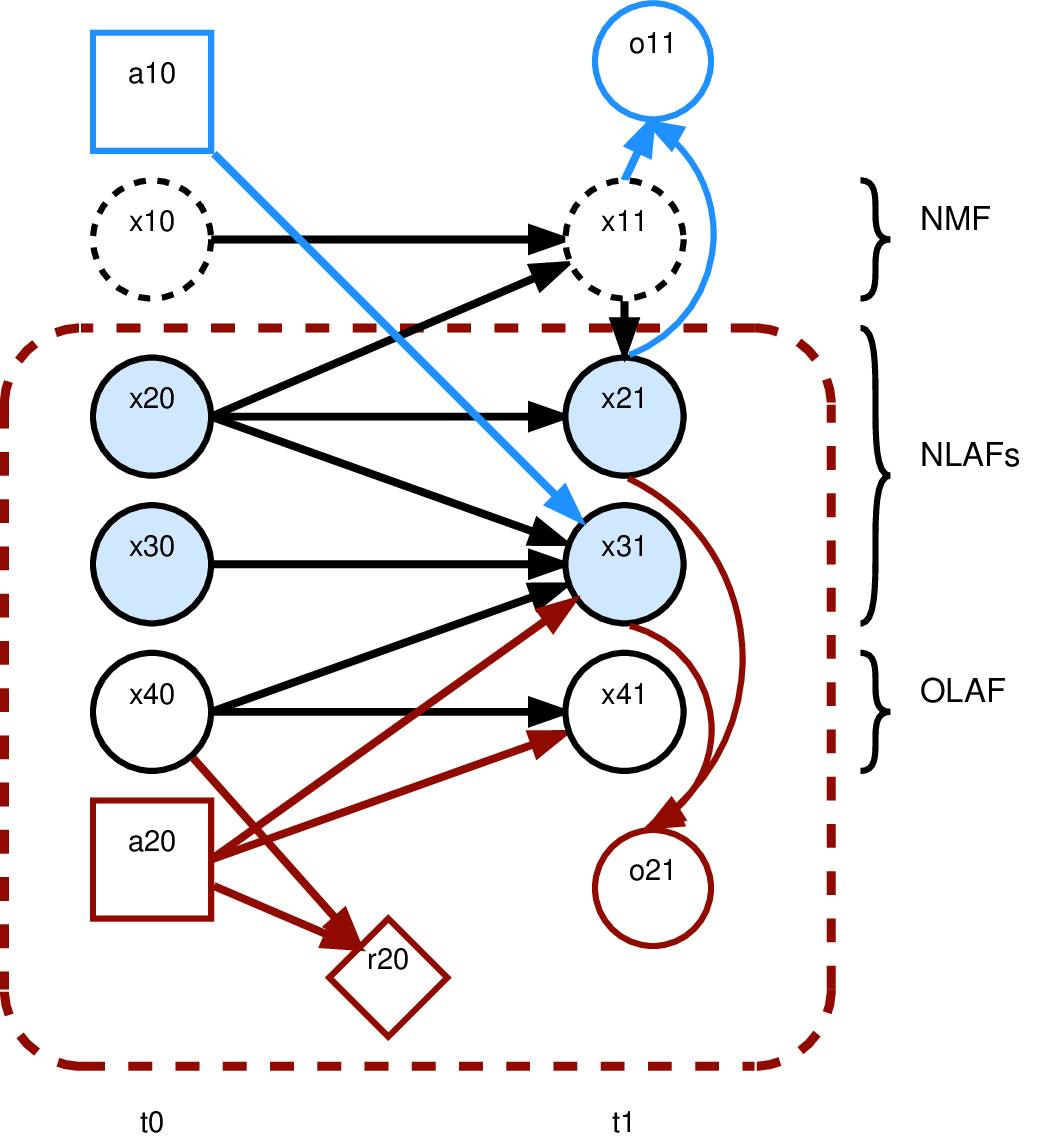}}
\par\end{centering}

\label{fig:LFM-factor-types}}\hfill{}\subfloat[Local-form model for agent~2 in the house search problem without
intra-stage dependencies. The `found' variable $f$ is the only NLAF
since it is affected by NMF $\argsAT l1\ts.$]{\begin{centering}
{\input{figs/frag_housesearch_2DBN.tex}\includegraphics[scale=0.42]{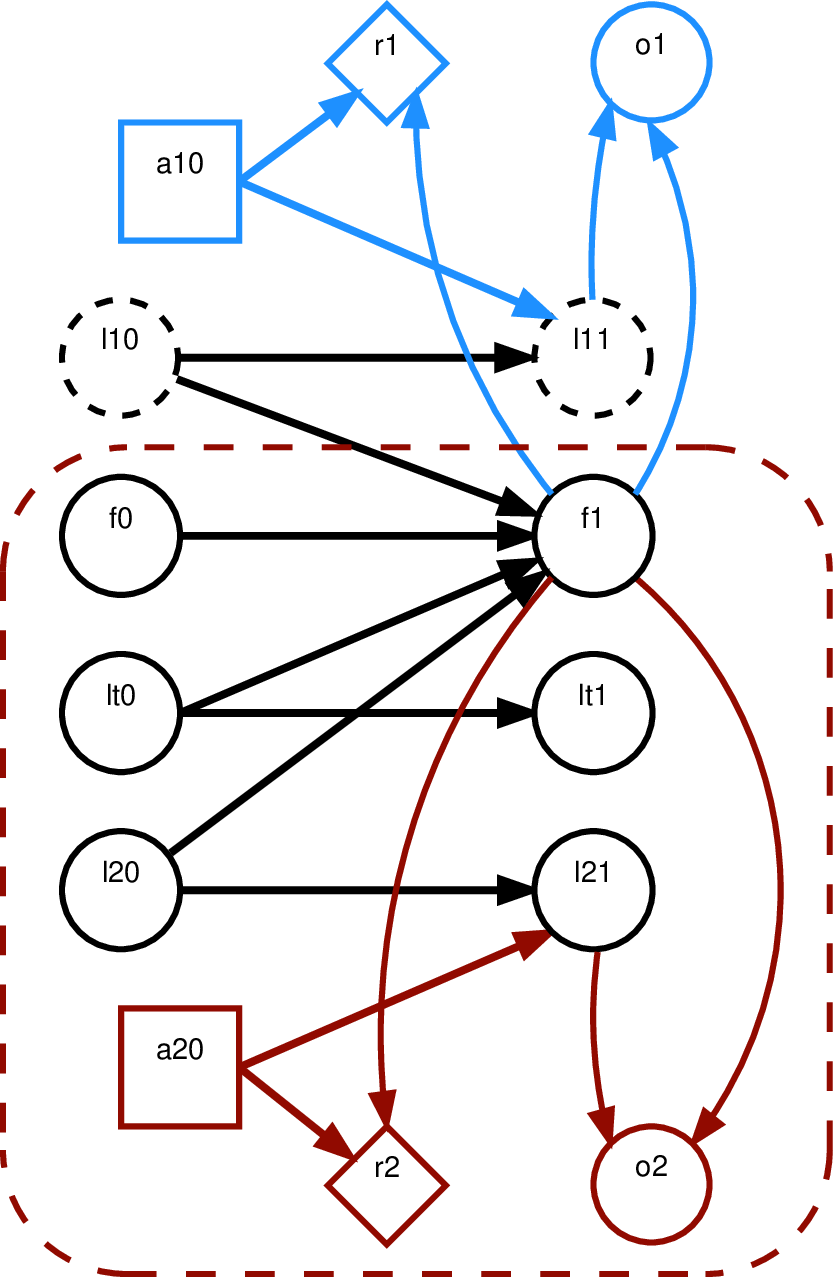}}
\par\end{centering}

\label{fig:LFM-housesearch--no-intrastage}}\hfill{}

\caption{Local-form models.}

\end{figure}

\paragraph{Value Function}

An LFM contains an fPOSG and as such best-response values for an agent~$i$
can be defined using the techniques discussed above in \prettyref{sec:GFBRM}.
In particular, we can just ignore the local state function and apply
the definition of Q-value \eqref{eq:Q(b,a)__GFBRM} with the previously
stated definitions of $R_{i}(\gbA i,\aA i)$ \eqref{eq:R(gfb,a)__GFM}
and $\Pr(\oAT i{\ts+1}|\gbA i,\aA i)$ \eqref{eq:P(o|gfb,a)__GFM}.

Clearly, however, we would like to now rewrite the value function
in a way that represents the local structure imposed by the LFM requirements
and exploits this for computational benefits. The former is possible:
for an LFM, we can indeed derive a expression for $R_{i}(\gbA i,\aA i)$
that is more local (see \app\ref{app:LFM:exp_reward}). 
\begin{equation}
R_{i}(\gbA i,\aAT i\ts)=\sum_{\mfAT i{\ts}}\sum_{\mfAT i{\ts+1}}R_{i}(\mfAT i{\ts},\aAT i\ts,\mfAT i{\ts+1})\Pr(\mfAT i{\ts},\mfAT i{\ts+1}|\gbA i,\aAT i\ts,\jpolG{\excl i}),\label{eq:R(gfb,a)__LFM}
\end{equation}
where (remember $\sT{\ts}=\langle\mfAT i{\ts},\nmfAT i{\ts}\rangle$)

\begin{equation}
\Pr(\mfAT i{\ts},\mfAT i{\ts+1}|\gbA i,\aAT i\ts,\jpolG{\excl i})\defas\sum_{\nmfAT i{\ts}}\sum_{\jaGT{\excl i}\ts}\Pr(\mfAT i{\ts+1}|\sT{\ts},\aAT i\ts,\jaGT{\excl i}\ts)\sum_{\aoHistAT{\excl i}{\ts}}\Pr(\jaGT{\excl i}\ts|\aoHistAT{\excl i}{\ts},\jpolG{\excl i})\gbA i(\sT{\ts},\aoHistAT{\excl i}{\ts}).\label{eq:P_xmxm_gfb}
\end{equation}
And, similarly, we can find a new, local, expression for the observation
probability (\app\ref{app:LFM:exp_obs}):

\begin{equation}
\Pr(\oAT i{\ts+1}|\gbA i,\aAT i\ts)=\sum_{\mfAT i{\ts+1}}\Pr(\oAT i{\ts+1}|\aAT i\ts,\mfAT i{\ts+1})\Pr(\mfAT i{\ts+1}|\gbA i,\aAT i\ts,\jpolG{\excl i})\label{eq:P(o|gfb,a)__LFM}
\end{equation}
where 
\begin{equation}
\Pr(\mfAT i{\ts+1}|\gbA i,\aAT i\ts)\defas\sum_{\sT{\ts}}\sum_{\jaGT{\excl i}\ts}\Pr(\mfAT i{\ts+1}|\sT{\ts},\aAT i\ts,\jaGT{\excl i}\ts)\sum_{\aoHistAT{\excl i}{\ts}}\Pr(\jaGT{\excl i}\ts|\aoHistAT{\excl i}{\ts}\jpolG{\excl i})\gbA i(\sT{\ts},\aoHistAT{\excl i}{\ts}).\label{eq:P_fm__gfb}
\end{equation}
These new definitions of $R_{i}(\gbA i,\aA i),\,\Pr(\oAT i{\ts+1}|\gbA i,\aA i)$
can be used directly in conjunction with the definition of Q-value~\eqref{eq:Q(b,a)__GFBRM}.

However, even though these definitions \eqref{eq:R(gfb,a)__LFM} and
\eqref{eq:P(o|gfb,a)__LFM} are local, they still depend on the global-form
belief and this must perform summations over full states $\sT{\ts}$
and histories of other agents $\aoHistAT{\excl i}{\ts}$ via \eqref{eq:P_xmxm_gfb}
and \eqref{eq:P_fm__gfb}, rendering them intractable for larger problems.
In the next section, we will investigate formulations that are based
on more local beliefs to try and overcome this computational hurdle.
Before jumping to this, we first state an observation: \begin{observation}
The presented definition of an LFM with multiple agents is a strict
generalization of a single agent problem. \end{observation}

While this is a simple observation, the upshot of this is that the
theory of influence-based abstraction that we will introduce in the
remainder of this paper also directly applies to single-agent settings.\footnote{We acknowledge Craig Boutilier, for pointing this out.}
Specifically, the formulas and results we will derive have more specific
forms for the single-agent case. We discuss relations to abstraction
methods for single-agent settings in \sect\ref{sec:Other-Forms-of-abstraction}.

\section{Influence-Based Abstraction}

\label{sec:IBA}

In the previous section we introduced the GFBRM, which could be used
to compute a best response against a fixed policy of other agents.
This model gives a straightforward way of formulating the problem
of computing a best-response. However, it is specified over the global
state and internal state of other agents (i.e., their AOHs), which
means that solving this model is computationally intractable. 

To provide a more localized perspective, the local-form POSG defines
for each agent a subset of factors that it should be concerned with.
However, even if the policies of the other agents are fixed, it is
not clear how an agent~$i$ can restrict its reasoning to its local
state $\mfA i$: the non-modeled factors will still affect the local
state transitions. Intuitively, we need to capture the \emph{influence}
that the non-modeled part of the problem exerts on the modeled part. 

In this section, we formalize this intuition. In particular, we treat
an LFM from the perspective of one agent and consider how that agent
is affected by the other agents and can compute a best response against
that `incoming' influence.\footnote{An agent also exerts `outgoing' influence on other agents, but this
is irrelevant for best response computation.}

In an attempt to avoid notation overload, we first present a formulation
without considering intra-stage connections. The general formulation
that can deal with such connections is given in \sect\ref{sec:IBA-with-IS-deps}.

\subsection{Definition of Influence}

\label{sec:IBA:def-of-influence}

As discussed in \sect\ref{sec:GFBRM}, when the other agents are
following a fixed policy, they can be regarded as part of the environment.
The resulting decision problem can be represented by the complete
unrolled DBN, as we saw in \fig\ref{fig:GFBRM}\vpageref{fig:GFBRM}.
In this figure, a node $\sfacT{\ts}$ is a different node than $\sfacT{\ts+1}$
and an edge at (emerging from) stage~$t$ is a different from the
edge at $t+1$ that corresponds to the same edge in the 2DBN. Given
this uniqueness of nodes and edges, we can define the `influence'
as follows.

\subsubsection{Influence Links, Sources and Destinations}

\label{sec:IBA:def-of-influence:links-sources-destinations}

Intuitively, the influence of other agents is the effect of those
edges leading into the agent's local problem. We say that every directed
edge from outside the local model (e.g., from an NMF or action of
another agent) to inside the local model (e.g., to a modeled state
factor, observation variable, or reward), is an \emph{influence link}
$\langle\ifsT\ts,\ifdT{\ts}\rangle$, where $\ifsT\ts$ is called
the \emph{influence source} and $\ifdT{\ts}$ is the \emph{influence
destination}. In this section, we will assume that influence links
traverse a stage of the process (i.e., that the influence source for
a destination $\ifdT{\ts}$ lies in the stage $\ts-1$), but since
we will also consider intra-stage influence links at a later point
in this document, to keep notation consistent, we label an entire
influence link with the stage-index of its destination.

For example, let us consider the \<housesearch>  problem's LFM shown
in \prettyref{fig:LFM-housesearch--no-intrastage}. It shows that
the link from $\argsAT{l}{1}{\ts}$, the location of agent 1, to the
`target found' variable $\argsT{f}{\ts+1}$ is an influence link,
such that we would write the link as $\left\langle \ifsT{\ts+1}=\argsAT{l}{1}{\ts},\ifdT{\ts+1}=\argsT{f}{\ts+1}\right\rangle $,
similarly $\left\langle \ifsT{\ts}=\argsAT{l}{1}{\ts-1},\ifdT{\ts}=\argsT{f}{\ts}\right\rangle $
would denote the influence link in the preceding time step.

Assuming no intra-stage influence links, an influence source $\ifsT{\ts}$
can be either an action $\aAT j{t-1}$ or non-modeled state factor
$\nmfT{\ts-1}$. We write $\ifsAT i{\ts}=\langle\nmfAT u{\ts-1},\jaGT u{\ts-1}\rangle$
for an instantiation of all influence sources exerting influence on
agent~$i$ at stage $t$. That is, in the case of multiple influence
links pointing to modeled factors in stage $\ts$, $\nmfAT u{\ts-1}$
denotes the (value of) influence sources that are state factors, while
$\jaGT u{\ts-1}$ corresponds to those influence sources that are
actions. For instance, in our \<housesearch>  example, $\nmfAT u{\ts-1}=\{\argsAT{l}{1}{\ts-1}\},$
while $\jaGT u{\ts-1}=\emptyset$ since there are no actions that
are influence sources. We write $\aoHistGT u{\ts-1}$ for the AOHs
of those other agents whose action is an influence source (i.e., $\aoHistGT u{\ts-1}$
and $\jaGT u{\ts-1}$ involve the same agents) .

In general, an influence destination can be either a (per definition
non-locally-affected) modeled factor $\mfnT{\ts}$, an observation
variable $\oAT i{\ts}$, or a local reward node $\argsAT{\R}{i}{\ts}$.
But Definition~\eqref{dfn:lfm} requires reward- or observation-relevant
factors to be included in the local state; effectively we restrict
ourselves to the setting where the influence destination is an NLAF.
This restriction is without loss in generality: because we will introduce
(in \sect\ref{sec:IBA-with-IS-deps}) the machinery to deal with
\emph{intra-stage }influence links, influences on observations and
rewards can easily be dealt with by introducing a `dummy' NLAF that
acts as a proxy for the observation or reward.\footnote{E.g., to deal with an observation destination, we can transform the
observation $\oA i$ to a state factor $\sfacI{o}$ and introduce
a new observation variable that has a deterministic CPT depending
only on $\sfacI{o}$.} A similar construction can be used to deal with settings where actions
of other agents would directly influence the observations or rewards
of the agent under concern. As such, the capability of dealing with
such intra-stage dependencies is critical for the applicability of
the theory of influence-based abstraction.

\subsubsection{Sufficient Information to Predict Influences: D-Separating Sets}

\label{sec:IBA:def-of-influence:dset}

If agent~$i$ would in advance know the value of its influence sources
at different time steps, it could easily compute its best response
by making use of only this knowledge and its local model. For instance,
if in the \<housesearch> example of \fig\ref{fig:LFM-housesearch--no-intrastage}\vpageref{fig:LFM-housesearch--no-intrastage},
we would in advance perfectly know the location of agent~1 at each
timestep and thus know the sequence of values for $\argsAT l{1}{\ts}$,
we could decouple the local problem by just looking at the appropriate
slices of the CPT of $\argsT{f}{\ts}$.

Of course, this is in general not possible, since the influence sources
are random variables. However, the influence exerted on agent~$i$
can be captured if we know the probability distribution over their
values. That is, in order to predict the probability of some $\mfnAT i{\ts+1}$
(i.e., an influence destination) agent~$i$ only cares about the
following marginal probability 
\begin{equation}
\sum_{\ifsAT i{\ts+1}}\Pr(\mfnAT i{\ts+1}|\mfAT i{\ts},\aAT i{\ts},\ifsAT i{\ts+1})\Pr(\ifsAT i{\ts+1}|\ldots),\label{eq:predicting-an-NLAF-via-marginal-prob}
\end{equation}
where the dots ($\ldots$) indicate any information that agent~$i$
needs to predict the probability of the values of the influence sources
as accurately as possible. Moreover, since these probabilities will
be used to plan a best response, correlations between influence sources
and local states are important. This unfortunately means that in general,
we might need to condition $\Pr(\ifsAT i{\ts+1}|...)$ on the entire
history of actions, observations and and local states.

Fortunately, it turns out that in many cases we can find substantially
more compact representations of the conditional probability of $\ifsAT i{\ts+1}$,
by making use of the concept of \emph{d-separation} in graphical models
\citep{Bishop06book,KollerFriedman09}. In particular, when two nodes
$A,B$ in a Bayesian network are d-separated given some of subsets
$\dsetA{}$ of evidence nodes, then $A$ and $B$ are conditionally
independent given $\dsetA{}$, which means that $\Pr(A|D,B)=\Pr(A|D)$
and vice versa. Whether nodes are d-separated can be easily checked,
by applying a small set of rules on the graph \citep[chapter 8]{Bishop06book}.

Now, we can define the influence as a conditional probability distribution
over $\ifsAT i{\ts+1}$, given a d-separating set. Specifically, let
$\dsetAT i{t+1}$ be a subset of variables (possibly including state
factors and actions) in the local problem of agent~$i$ at stages
$0,\dots,t$, 

\begin{definition}[D-separating set]\label{dfn:d-separating-set-NO-IS}
$\dsetAT i{t+1}$ is a \emph{d-separating set} \emph{for agent~$i$'s
influence at stage $t+1$} if and only if it d-separates $\nmfAT u{\ts},\aoHistGT u{\ts}$
from $\mfAT it,\aoHistAT it$. That is, if: 
\begin{equation}
\forall_{\nmfAT u{\ts},\aoHistAT u{\ts}}\qquad\Pr(\nmfAT u{\ts},\aoHistAT u{\ts}|\mfAT it,\aoHistAT it,\dsetAT i{\ts+1},\bO,\jpolG{\excl i})=\Pr(\nmfAT u{\ts},\aoHistAT u{\ts}|\dsetAT i{\ts+1},\bO,\jpolG{\excl i}).\label{eq:dset-def}
\end{equation}
\end{definition}

This definition implies that remembering more than $\dsetAT i{\ts+1}$
is not useful for predicting $\nmfAT u{\ts},\aoHistAT u{\ts}$ and
hence for predicting $\ifsAT i{\ts+1}=\langle\nmfAT u{\ts},\jaGT u{\ts}\rangle$.
Given their policies, the actions of other agents only depend on their
AOHs. We note that when the other agents use simpler (e.g., memoryless)
policies, one might not need to predict the full action observation
history for agents whose actions are influence sources. Instead we
will only need to predict relevant part, denoted $\rho(\aoHistAT u{\ts})$.
Similarly, there might be a sufficient statistic $\dsetCompF$ that
summarizes $\dsetAT i{t+1}$ and still is enough to provide the conditional
independence. In such case we would only need 
\begin{equation}
\forall_{\nmfAT u{\ts},\aoHistAT u{\ts}}\qquad\Pr(\nmfAT u{\ts},\rho(\aoHistAT u{\ts})|\mfAT it,\aoHistAT it,\dsetCompF(\dsetAT i{\ts+1}),\bO,\jpolG{\excl i})=\Pr(\nmfAT u{\ts},\rho(\aoHistAT u{\ts})|\dsetCompF(\dsetAT i{\ts+1}),\bO,\jpolG{\excl i}).
\end{equation}
To avoid a further burden on notation, we will not explicitly consider
these special cases, and in our description assume that we condition
on the values of the variables in the d-separating set. However, we
will see examples of such more compact description of the information
needed to predict the influence sources.

Deciding on $\dsetAT i{\ts+1}$ needs to be done in advance to compute
the influence. When the d-separating set is compressed, $\sigma(\dsetAT i{\ts+1}),$
this will typically involve input by the human designer. However,
we note that efficient algorithms are known to compute a minimal d-separating
set~\citep{Acid96UAI,Tian98finding,Zander20UAI19} in cases where
this would be infeasible to do by hand.

\begin{example}\fig\ref{fig:dset-housesearch} illustrates a d-separating
set $\dsetAT i3$ for agent $i=2$ in \<housesearch>. It shows that,
in order to accurately compute the probability of influence source
$\argsAT l12$, agent 2 needs to condition on $\argsT{f}{0:2}$, the
history of the found variable, as well as the histories of the location
of the target $l_{tgt}$ and its own location~$\argsA l2$. This
dependence on the history in general leads to large conditioning sets,
but in many cases the history can be represented more compactly. For
instance, in \<housesearch> the `found' variable can only switch
on (not off) which means that its history $\argsT{f}{0:\ts}$ can
be summarized compactly. In cases where the target is static the same
holds for $\argsAT{l}{tgt}{0:\ts}$. \end{example}

\begin{figure}
\begin{centering}
\input{figs/frag_housesearch_unrolled.tex}\includegraphics[scale=0.4]{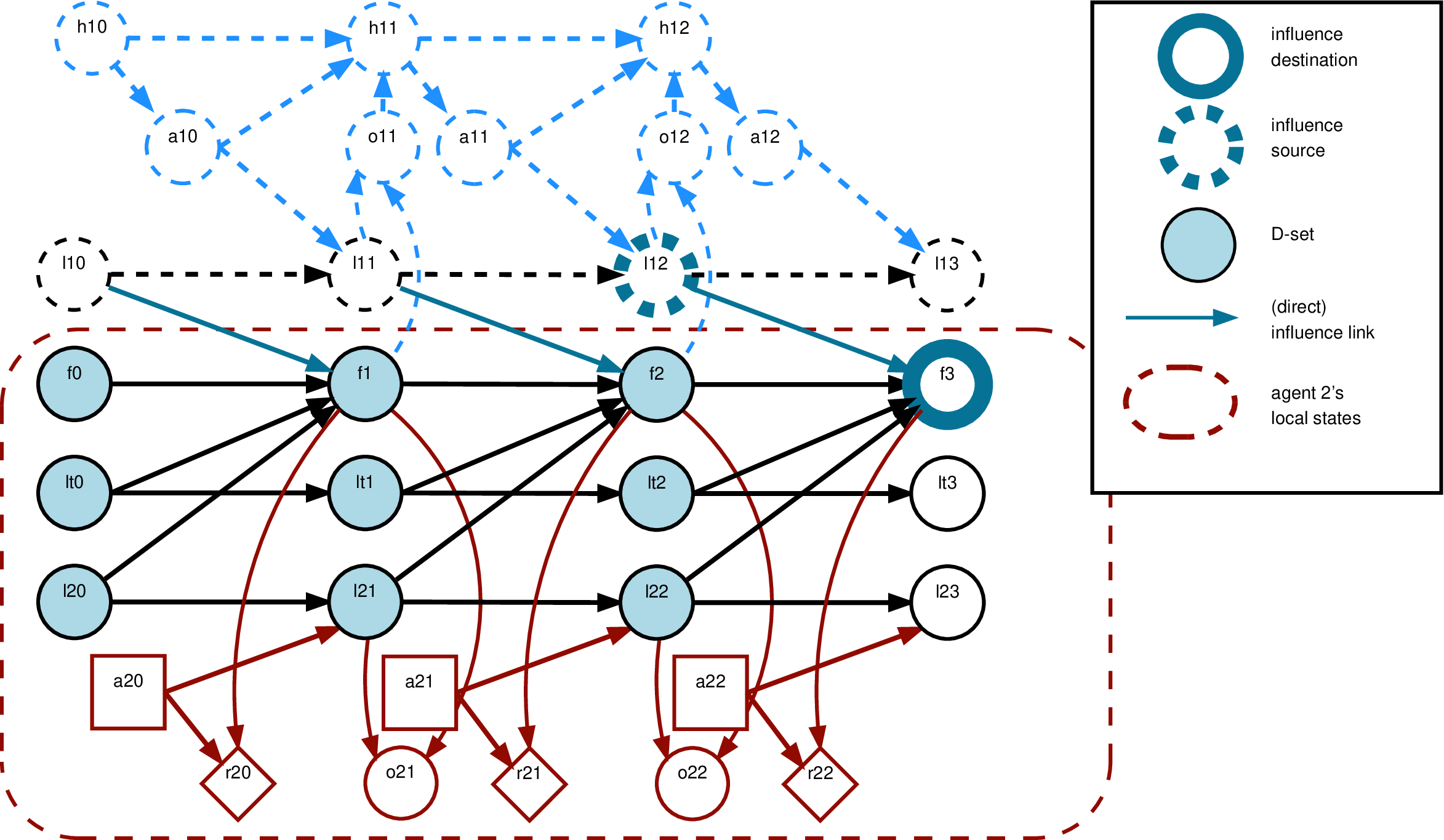}
\par\end{centering}
\caption{Illustration of the incoming influence on protagonist agent $i=2$
in \<housesearch> at stage $\ts=3$. $\argsT f3$ is the only influence
destination, with influence source $\nmfAT u2=\argsAT l12$ (i.e.,
$\ifsAT i3=\left\langle \argsAT l12\right\rangle $). The shaded nodes
indicate the d-separating set $\dsetAT i3$, which, in accordance
with \eqref{eq:dset-def}, d-separates the influence source $\argsAT l12$,
from agent 2's AOH $\aoHistAT i{\ts}$ and possibly remaining other
local variables $\mfAT i{\ts}$ (in this case there are no such variables,
but one could imagine adding a battery life variable for agent 2).}

\label{fig:dset-housesearch}
\end{figure}

\begin{example}\fig\ref{fig:dset-rover} describes a variant of
the \problemName{planetary exploration} domain~\citep{Witwicki10ICAPS}.
Here agent~2 is a mars rover which is tasked with navigating to some
goal. Agent~1 is a satellite which can aid the rover by planning
a path, but this will use up computational resources and battery power
modeled by $\argsAT{bt}{1}\ts$ (which it may want to use to support
other rovers too, for instance). In the figure this is illustrated
by the fact that the action of agent~1 $\aA1\in\{NOOP,PLAN\}$ (which
now is the influence source) determines if there is a plan available
for agent~2, modeled by a binary variable~$pl$ (which is the influence
destination). In this example, the d-separating set only contains
this variable $pl$. Again its history can be compactly summarized:
as having the plan can only turn true, we can just store the time
(if any) at which $pl$ was switched to true.

\end{example}

\begin{figure}
\begin{centering}
\input{figs/frag_rover_unrolled.tex}\includegraphics[scale=0.4]{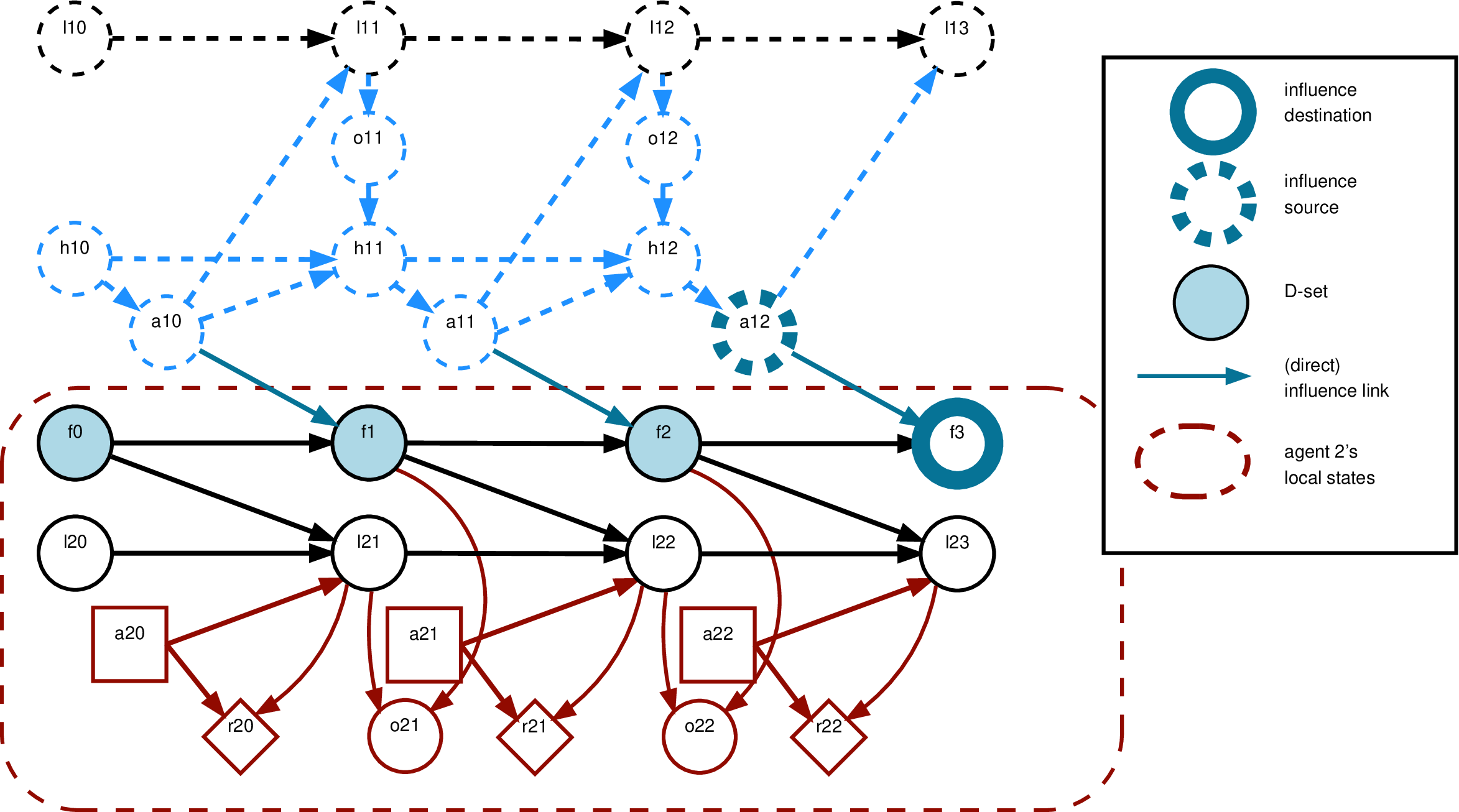}
\par\end{centering}
\caption{Illustration of the influence experienced by the mars rover (agent
$i=2$) at stage $\ts=3$ in the \problemName{planetary exploration} domain.
If the satellite (agent 1) computes and transmits a plan (\emph{pl)},
the rover can more effectively navigate from that point onward.}

\label{fig:dset-rover}
\end{figure}

\subsubsection{The Influence Exerted on Agent $i$}

Given the above machinery, we can now state our definition of influence:

\begin{definition}[Incoming Influence] \label{dfn:The-incoming-influence}
The \emph{incoming influence} at stage~$\ts+1$, denoted $\ifpiAT i{\ts+1}(\jpolG{\excl i})$,
is a conditional probability distribution over values of the influence
sources~$\ifsAT i{\ts+1}$: 
\begin{equation}
\iffunc(\ifsAT i{\ts+1}|\dsetAT i{\ts+1})\defas\sum_{\aoHistGT u{\ts}}\Pr(\jaGT ut|\aoHistGT u{\ts})\Pr(\nmfAT u{\ts},\aoHistGT u{\ts}|\dsetAT i{\ts+1},\bO,\jpolG{\excl i}).\label{eq:IncomingInfluence__non-isd}
\end{equation}
\end{definition}In order to predict $\jaGT ut$ (the `influence
source actions') we need to predict the action-observation histories
of the corresponding agents $\aoHistGT u{\ts}$, but otherwise these
histories are not needed and can thus be marginalized out. Note that,
to reduce notational burden we drop arguments that can be inferred,
such as $\bO,\jpolG{\excl i}$. That is, $\iffunc(\ifsAT i{\ts+1}|\dsetAT i{\ts+1})$
is shorthand for $\ifpiAT i{\ts+1}(\ifsAT i{\ts+1}|\dsetAT i{\ts+1},\bO,\jpolG{\excl i})$.
In cases where we want to refer to this distribution as a whole, we
will write $\ifpiAT i{\ts+1}(\jpolG{\excl i})$, or use the shorthand
$\ifpiAT i{\ts+1}$.

We will also say that this is the influence \emph{exerted }on agent~$i$
at stage $\ts$\emph{ }or \emph{experienced} by agent~$i$ at stage
$\ts+1$. So far, these notions coincide, but when we consider intra-stage
connections in the next section, we will discriminate between these
concepts.

Finally, we are in position to specify the complete influence on agent~$i$:

\begin{definition}[incoming influence point]\label{dfn:The-incoming-influence-point}
An \emph{incoming influence point} $\ifpiA{i}(\jpolG{\excl i})$ for
agent~$i$, specifies the incoming influences for all stages $\ifpiA{i}(\jpolG{\excl i})=\left(\ifpiAT i1(\jpolG{\excl i}),\dots,\ifpiAT i{\h}(\jpolG{\excl i})\right)$.
\end{definition}

As we will see in the remainder of this paper, an influence point
contains all the information about the non-modeled part of the problem
that agent~$i$ needs to compute a best response `locally', i.e.,
only using its local model and that influence point. This can bring
computational benefits for instance when there would be changes in
the local model that require repeatedly performing planning, or in
cases where the influence point can be computed easily. This form
of influence-based abstraction, however, is not providing a free lunch~\citep{Wolpert95TR}:
in general computing the incoming influences \eqref{eq:IncomingInfluence__non-isd}
for the different stages comprise a set of challenging inference problems.
Fortunately, traction can still be gained in many special cases of
problems identified in past work \citep{Becker03AAMAS,Becker04AAMAS,Varakantham09ICAPS,Petrik09JAIR,Witwicki10AAMAS,Witwicki10ICAPS,Witwicki11AAMAS,Velagapudi11AAMAS,Witwicki11PhD,Witwicki12AAMAS,Oliehoek12AAAI_IBA},
and IBA gives a unified perspective on these. Moreover, it can be
used as tool to identify further special cases that allow for efficient
solution, such as the class of ND-POMDPs discussed in \sect\ref{sec:sub-classes-with-Compact-representations}.
Given the potential benefits of using influence representations \citep{Witwicki12AAMAS},
such future search for special cases of problems that allow for compact
influence specifications together with the inference algorithms that
efficiently compute these is an important line of research. Our definition
of influence in this section provides the general framework in which
these special cases should be sought.

\subsection{The Influence-Augmented Local Model (IALM)}

Given the above definition of influence, we can now define a smaller
\emph{local} model for our protagonist agent~$i$. The main idea
is that given an incoming influence point, agent~$i$ no longer needs
to reason over the non-modeled part of the problem. Instead, it can
use the influence to compute marginal probabilities as expressed by
\eqref{eq:predicting-an-NLAF-via-marginal-prob}, and this will allow
it to compute an exact best-response. 

In this section, we will first investigate a single NLAF and how the
influence on it can be incorporated. Then we move to talk about the
case where multiple variables in the local state $\LSF(\agentI i)$
are non-locally affected. Then we proceed to the formal definition
of the IALM, and how it can be solved.

\subsubsection{Induced CPTs}

\label{sec:inducedCPT}

In the case of a single influence destination, we can interpret \eqref{eq:predicting-an-NLAF-via-marginal-prob}
as constructing a new `influence-induced' CPT:

\begin{definition}[Induced CPT]Let $\mfnT{\ts+1}$ be an influence
destination, and $\ifsT{\ts+1}$ (the instantiation of) the corresponding
influence sources. Given the influence $\ifpiAT i{\ts+1}(\jpolG{\excl i})$,
and its d-separating set $\dsetAT i{\ts+1}$, we define the \emph{induced
CPT }for\emph{ }$\mfnT{\ts+1}$ as the CPT that has probabilities:
\begin{equation}
p_{\ifpiAT i{\ts+1}}(\mfnT{\ts+1}|\mfAT i{\ts},\dsetAT i{\ts+1},\aA i)=\sum_{\ifsAT i{\ts+1}=\left\langle \nmfAT u{\ts},\jaG u\right\rangle }\Pr(\mfnT{\ts+1}|\mfAT i{\ts},\aA i,\ifsAT i{\ts+1})\iffunc(\ifsAT i{\ts+1}|\dsetAT i{\ts+1})\label{eq:induced-CPT}
\end{equation}
\end{definition}

It is important to note that an induced CPT is specified purely in
\emph{local }terms, i.e., making use of variables that are modeled
by our protagonist agent~$i$. Therefore, the basic idea is that
we can now define a smaller \emph{local }model\textemdash which we
will call the \emph{Influence-Augmented Local Model (IALM)}\textemdash by
replacing the CPTs of influence destinations (i.e., NLAFs) by induced
CPTs.

\subsubsection{Dealing With Multiple NLAFs}

In case that there are multiple NLAFs, i.e., multiple variables $\mfnT{\ts+1}$
in the local state space $\LSF(\agentI i)$ that are affected non-locally
at the same stage $\ts+1$, the story is slightly more involved, since
we need to deal with their correlations.

Ideally, we would want to treat induced CPTs in the same way as normal
CPTs; that is, we would represent the joint probability of NLAFs as
a the product of induced CPTs:
\begin{equation}
\Pr(\mfnAT i{\ts+1}|\mfAT i{\ts},\dsetAT i{\ts+1},\aA i,\ifpiAT i{\ts+1})=\prod_{k\in NLAF(i)}p_{\ifpiAT i{\ts+1}}(\mfnIT k{\ts+1}|\mfAT i{\ts},\dsetAT i{\ts+1},\aA i).\label{eq:P(NLAFS)__product-of-ICPTS}
\end{equation}
However, in general this is not possible since the different $\mfnIT k{\ts+1}$
are correlated via any common influence sources. That is, in general
the probability is given by:

\begin{equation}
\Pr(\mfnAT i{\ts+1}|\mfAT i{\ts},\dsetAT i{\ts+1},\aA i,\ifpiAT i{\ts+1})=\sum_{\ifsAT i{\ts+1}=\left\langle \nmfAT u{\ts},\jaG u\right\rangle }\iffunc(\ifsAT i{\ts+1}|\dsetAT i{\ts+1})\prod_{k\in NLAF(i)}\Pr(\mfnIT k{\ts+1}|\mfAT i{\ts},\aA i,\ifsAT i{\ts+1})\label{eq:P(NLAFS)__general}
\end{equation}

Of course, in certain cases a factorization as induced CPTs is possible.
The above equations directly make clear when this is the case. 

\begin{proposition}

If each NLAF $\mfnIT k{\ts+1}$ has its own influence sources $\ifsIT k{\ts+1}$
(and these do not overlap), and if these sources are conditionally
independent given $\dsetAT i{\ts+1}$:
\[
\iffunc(\ifsAT i{\ts+1}|\dsetAT i{\ts+1})=\prod_{k\in NLAF(i)}\iffunc(\ifsIT k{\ts+1}|\dsetAT i{\ts+1}),
\]
then the joint probability of NLAFs factorizes as the product of induced
CPTs as shown in \eqref{eq:P(NLAFS)__product-of-ICPTS}.\end{proposition}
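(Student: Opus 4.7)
The plan is to start from the general expression \eqref{eq:P(NLAFS)__general} and show that the two hypotheses (disjoint per-NLAF sources and their conditional independence given $\dsetAT i{\ts+1}$) allow the joint sum over $\ifsAT i{\ts+1}$ to split into a product of independent sums, each of which is exactly an induced CPT.

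Concretely, I would first use the disjointness assumption to write the joint instantiation of influence sources as $\ifsAT i{\ts+1}=\bigl(\ifsIT 1{\ts+1},\dots,\ifsIT{|NLAF(i)|}{\ts+1}\bigr)$ so that $\sum_{\ifsAT i{\ts+1}}$ becomes an iterated sum $\prod_k \sum_{\ifsIT k{\ts+1}}$. The key observation is that, because the sources are disjoint, the local CPT $\Pr(\mfnIT k{\ts+1}|\mfAT i{\ts},\aA i,\ifsAT i{\ts+1})$ only depends on its own $\ifsIT k{\ts+1}$, i.e.\ it equals $\Pr(\mfnIT k{\ts+1}|\mfAT i{\ts},\aA i,\ifsIT k{\ts+1})$. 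Next, I substitute the assumed factorization $\iffunc(\ifsAT i{\ts+1}|\dsetAT i{\ts+1})=\prod_k \iffunc(\ifsIT k{\ts+1}|\dsetAT i{\ts+1})$ into \eqref{eq:P(NLAFS)__general}. At this point the summand is a product whose $k$-th factor depends only on $\ifsIT k{\ts+1}$, so the distributive law lets me push each sum inside its corresponding factor, yielding $\prod_k \sum_{\ifsIT k{\ts+1}} \Pr(\mfnIT k{\ts+1}|\mfAT i{\ts},\aA i,\ifsIT k{\ts+1})\,\iffunc(\ifsIT k{\ts+1}|\dsetAT i{\ts+1})$. By the definition \eqref{eq:induced-CPT} of the induced CPT, each factor is exactly $p_{\ifpiAT i{\ts+1}}(\mfnIT k{\ts+1}|\mfAT i{\ts},\dsetAT i{\ts+1},\aA i)$, establishing \eqref{eq:P(NLAFS)__product-of-ICPTS}.

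The proof is essentially a distributive-law manipulation, so there is no genuine technical obstacle. The only subtlety worth being careful about is the bookkeeping that justifies dropping the conditioning on the other sources $\ifsIT{\ell}{\ts+1}$ ($\ell\neq k$) in the CPT for $\mfnIT k{\ts+1}$. This relies on the structural assumption that each NLAF has its \emph{own} (disjoint) influence sources in the 2DBN, which guarantees that no other $\ifsIT{\ell}{\ts+1}$ appears as a parent of $\mfnIT k{\ts+1}$. I would make this explicit with a brief reference to the 2DBN parent sets before doing the algebraic rearrangement, to make clear that both hypotheses (disjointness and conditional independence of sources) are used in distinct, necessary ways.
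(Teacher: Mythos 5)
Your proposal is correct and follows essentially the same route as the paper's own proof: restrict each CPT to its own (disjoint) sources, substitute the assumed factorization of the influence, and apply the distributive law to push each sum over $\ifsIT k{\ts+1}$ into its corresponding factor, recognizing the result as the induced CPT of \eqref{eq:induced-CPT}. Your added remark on making the parent-set bookkeeping explicit is a reasonable refinement but not a departure from the paper's argument.
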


\begin{proof} Under stated conditions, we can rewrite as follows:\footnote{For the last step of this proof, to see why we can swap summation
and product, note that the term on the third line has the form $\sum_{\left\langle a_{1},\dots a_{k},\dots,a_{K}\right\rangle }\prod_{k=1}^{K}a_{k}b_{k}$.
We take $K=2$ for the example and get:
\[
\sum_{\left\langle a_{1}a_{2}\right\rangle }a_{1}b_{1}a_{2}b_{2}=\sum_{a_{1}}a_{1}b_{1}\sum_{a_{2}}a_{2}b_{2}=\left[\sum_{a_{1}}a_{1}b_{1}\right]\left[\sum_{a_{2}}a_{2}b_{2}\right]=\prod_{k=1}^{K}\sum_{a_{k}}a_{k}b_{k}.
\]
}
\begin{align*}
\eqref{eq:P(NLAFS)__general}= & \sum_{\ifsAT i{\ts+1}=\left\langle \dots,\ifsIT k{\ts+1},\dots\right\rangle }\iffunc(\ifsAT i{\ts+1}|\dsetAT i{\ts+1})\prod_{k\in NLAF(i)}\Pr(\mfnIT k{\ts+1}|\mfAT i{\ts},\aA i,\ifsIT k{\ts+1})\\
= & \sum_{\ifsAT i{\ts+1}=\left\langle \dots,\ifsIT k{\ts+1},\dots\right\rangle }\left[\prod_{k\in NLAF(i)}\iffunc(\ifsIT k{\ts+1}|\dsetAT i{\ts+1})\right]\prod_{k\in NLAF(i)}\Pr(\mfnIT k{\ts+1}|\mfAT i{\ts},\aA i,\ifsIT k{\ts+1})\\
= & \sum_{\ifsAT i{\ts+1}=\left\langle \dots,\ifsIT k{\ts+1},\dots\right\rangle }\prod_{k\in NLAF(i)}\iffunc(\ifsIT k{\ts+1}|\dsetAT i{\ts+1})\Pr(\mfnIT k{\ts+1}|\mfAT i{\ts},\aA i,\ifsIT k{\ts+1})\\
= & \prod_{k\in NLAF(i)}\sum_{\ifsIT k{\ts+1}}\iffunc(\ifsIT k{\ts+1}|\dsetAT i{\ts+1})\Pr(\mfnIT k{\ts+1}|\mfAT i{\ts},\aA i,\ifsIT k{\ts+1})=\eqref{eq:P(NLAFS)__product-of-ICPTS}\qedhere
\end{align*}
\end{proof}

\subsubsection{The IALM: A Formal Model to Incorporate Influence}

Here we formally define the IALM, which is a non-stationary POMDP,
since at every stage the influence destinations can be influenced
in a different manner.

\begin{definition}[IALM]

\label{dnf:IALM} Given an LFM, $\mathcal{M}^{LFM}$, and profile
of policies for other agents~$\jpolG{\excl i}$, an \emph{Influence-Augmented
Local Model (IALM) }for agent~$i$ is a POMDP $\mathcal{M}_{i}^{IALM}(\mathcal{M}^{LFM},\jpolG{\excl i})=\left\langle \bar{\sS},\aAS{i},\bar{T}_{i},\bar{R}_{i},\oAS{i},\bar{O}_{i},\hor,\lbAT i0\right\rangle $,
where
\begin{itemize}
\item $\bar{\mathcal{S}}$ is the set of augmented states $\sAugAT it=\langle\mfAT i{\ts},\dsetAT i{\ts+1}\rangle$
that specify an underlying local state of the POSG, as well as the
d-separating set $\dsetAT i{\ts+1}$ for the next-stage influences.
Note that $\dsetAT i{\ts+1}$ typically needs to include certain state
factors for stage~$\ts$, such that $\mfAT i{\ts}$ and $\dsetAT i{\ts+1}$
both will specify such variables. This is no problem, as long as they
specify consistent assignments; we define $\bar{\mathcal{S}}$ to
be the set of states that are consistent.
\item $\aAS{i},\oAS{i}$ are the (unmodified) sets of actions and observations
for agent~$i$.
\item The transition function $\bar{T}_{i}(\sAugAT i{t+1}|\sAugAT it,\aAT i{\ts})$
which we will discuss in detail shortly.
\item The observation function $\bar{O}_{i}(\oAT i{\ts+1}|\aAT i{\ts},\sAugAT i{t+1})=O(\oAT i{\ts+1}|\aAT i{\ts},\mfAT i{\ts+1})$,
since agent $i$'s observations only depend on its local state (cf.
\dfn\ref{dfn:lfm}, property 1).
\item The reward function $\bar{\RA i}(\sAugAT it,\aAT i{\ts},\sAugAT i{t+1})=\RA i(\mfAT i{\ts},\aAT i{\ts},\mfAT i{\ts+1})$,
since agent $i$'s rewards only depend on its local state (cf. \dfn\ref{dfn:lfm},
property 2).
\item $\hor$ is the unmodified horizon.
\item $\lbAT i0$ is the initial state distribution, which is a \emph{local-form
belief. }It is a distribution over augmented states $\sAugAT i0=\langle\mfAT i0,\dsetAT i1\rangle$.
Since for the first stage $\dsetAT i1$ can only contain elements
from $\mfAT i0$, it can trivially be constructed from a probability
distribution over $\mfAT i0$, and such a distribution can be constructed
from $\bO$, as we discuss in a bit more detail below.
\end{itemize}
\end{definition}

In defining $\bar{T}_{i}$ and $\lbAT i0$, a few subtleties arise
that we now discuss.

\paragraph{Transition Probabilities }

Clearly, the IALM's transition probabilities should express 
\begin{equation}
\bar{T}_{i}(\sAugAT i{t+1}|\sAugAT it,\aAT i{\ts})\defas\Pr(\langle\mfAT i{\ts+1},\dsetAT i{\ts+2}\rangle|\langle\mfAT i{\ts},\dsetAT i{\ts+1}\rangle,\aAT i{\ts},\ifpiAT i{\ts+1}).
\end{equation}
For such probabilities to be specified, we need some further requirements
on the d-separating sets. In particular, we require that (the instantiation
of) $\dsetAT i{\ts+2}$ is fully specified by $\mfAT i{\ts},\aAT i{\ts},\mfAT i{\ts+1}$
and $\dsetAT i{\ts+1}$.

\begin{definition}[d-set update function] The d-set update function
is a function $d$ that takes the previous-stage d-separating set
and the latest transition, and that returns the next d-separating
set:
\[
\dsetAT i{\ts+2}=\dsetUF(\mfAT i{\ts},\aAT i{\ts},\mfAT i{\ts+1},\dsetAT i{\ts+1}).
\]
In other words: $\dsetUF$ `selects' the variables from $\mfAT i{\ts},\aAT i{\ts},\mfAT i{\ts+1},\dsetAT i{\ts+1}$
such that it forms the next d-separating set.\footnote{Note, that if further compression by means of a statistic $\dsetCompF$
is employed (cf. the discussion under \dfn\ref{dfn:d-separating-set-NO-IS})
than the update function should work on these statistics $\dsetCompF(\dsetAT i{\ts+2})=\dsetUF(\mfAT i{\ts},\aAT i{\ts},\mfAT i{\ts+1},\dsetCompF(\dsetAT i{\ts+1}))$.}\end{definition}

Given a d-set update function we can write:
\[
\Pr(\langle\mfAT i{\ts+1},\dsetAT i{\ts+2}\rangle|\langle\mfAT i{\ts},\dsetAT i{\ts+1}\rangle,\aAT i{\ts},\ifpiAT i{\ts+1})=\Pr(\mfAT i{\ts+1}|\langle\mfAT i{\ts},\dsetAT i{\ts+1}\rangle,\aAT i{\ts},\ifpiAT i{\ts+1})\KroD{\dsetAT i{\ts+2}}{\dsetUF(\mfAT i{\ts},\aAT i{\ts},\mfAT i{\ts+1},\dsetAT i{\ts+1})},
\]
where $\KroD{\cdot}{\cdot}$ denotes the Kronecker delta function.

A typical way to fulfill the requirement that $\dsetAT i{\ts+2}$
is fully specified by $\mfAT i{\ts},\aAT i{\ts},\mfAT i{\ts+1}$ and
$\dsetAT i{\ts+1}$ is to assume that the d-separating sets for all
stages are chosen as the history of the same subset $\dsetA i\subseteq\LSF(i)$
of modeled features. 

\begin{example}Looking at \prettyref{fig:dset-housesearch}, the
d-separating set $\dsetAT23$ for predicting $\argsT f3$ is given
by the history of the `found', `location of target' and `location
of agent~2' variables. So we can write $\dsetA2=\left\{ f,\argsA l{tgt},\argsA l{2}\right\} $,
and define $\dsetAT23$ to be its history at stage $\ts=2$: $\dsetAT23=\dHistAT22$.\end{example}

The probabilities $\Pr(\mfAT i{\ts+1}|\langle\mfAT i{\ts},\dsetAT i{\ts+1}\rangle,\aAT i{\ts})$
are now factored as the product of the CPTs of the OLAFs and the induced
probabilities for the NLAFs:
\begin{multline}
\bar{T}_{i}(\sAugAT i{t+1}|\sAugAT it,\aAT i{\ts})\defas\Pr(\mfAT i{\ts+1}|\langle\mfAT i{\ts},\dsetAT i{\ts+1}\rangle,\aAT i{\ts},\ifpiAT i{\ts+1})\KroD{\dsetAT i{\ts+2}}{\dsetUF(\mfAT i{\ts},\aAT i{\ts},\mfAT i{\ts+1},\dsetAT i{\ts+1})},\\
=\Pr(\mfnAT i{\ts+1}|\langle\mfAT i{\ts},\dsetAT i{\ts+1}\rangle,\aAT i{\ts},\ifpiAT i{\ts+1})\Pr(\mflAT i{\ts+1}|\mfAT i{\ts},\mfnAT i{\ts+1},\aA i)\KroD{\dsetAT i{\ts+2}}{\dsetUF(\mfAT i{\ts},\aAT i{\ts},\mfAT i{\ts+1},\dsetAT i{\ts+1})}.\label{eq:IALM-T}
\end{multline}
Here the first term is given by \eqref{eq:P(NLAFS)__general} and
the second term is given by \eqref{eq:LFM:T:OLAFs}.\footnote{Note that, even though we have not dealt with intra-stage dependencies
(ISDs) in the description of influences in this section, we refer
back to the term $\Pr(\mflAT i{\ts+1}|\mfAT i{\ts},\mfnAT i{\ts+1},\aA i)$
from section 3 which does allow for ISDs from NLAFs to OLAFs. This
will allow us to make only minimal changes to the definition of $\bar{T}_{i}$
when we do deal with ISDs in \sect\ref{sec:IBA-with-IS-deps}.}

\paragraph{Initial Local State Distribution}

\begin{figure}
\input{figs/frag_initial-belief.tex}

\hfill{}\subfloat[The Bayesian network $G^{0}$ representing the initial belief.]{\begin{centering}
\begin{minipage}[b][3.7cm][t]{5cm}%
\begin{center}
\includegraphics[scale=0.4]{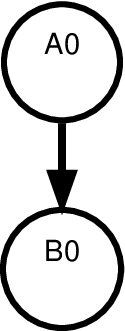}
\par\end{center}%
\end{minipage}
\par\end{centering}
}\hfill{}\subfloat[The 2DBN (a conditional Bayesian network) $G^{\rightarrow}$ representing
the transition and observation probabilities.]{\begin{centering}
\begin{minipage}[b][3.7cm][t]{7cm}%
\begin{center}
\includegraphics[scale=0.4]{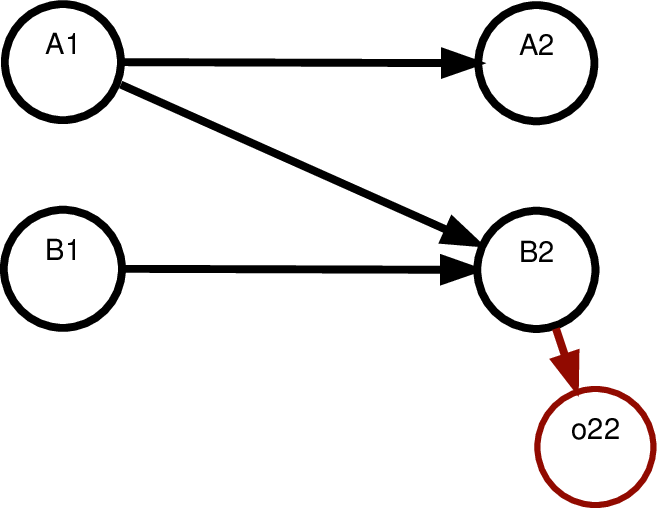}
\par\end{center}%
\end{minipage}
\par\end{centering}
}\hfill{}

\medskip{}

\begin{centering}
\subfloat[The unrolled network $G=\text{unroll}(G^{0},G^{\rightarrow})$. To
convert it to an IALM, the local-form initial belief $\lbAT i0(B^{0})$
and incoming influences $\ifpiAT i{\ts+1}(A^{t}|B^{0},\dots,B^{\ts})$
need to be computed via inference. See text for further explanation. ]{\begin{centering}
~~~~~~~\includegraphics[width=10cm]{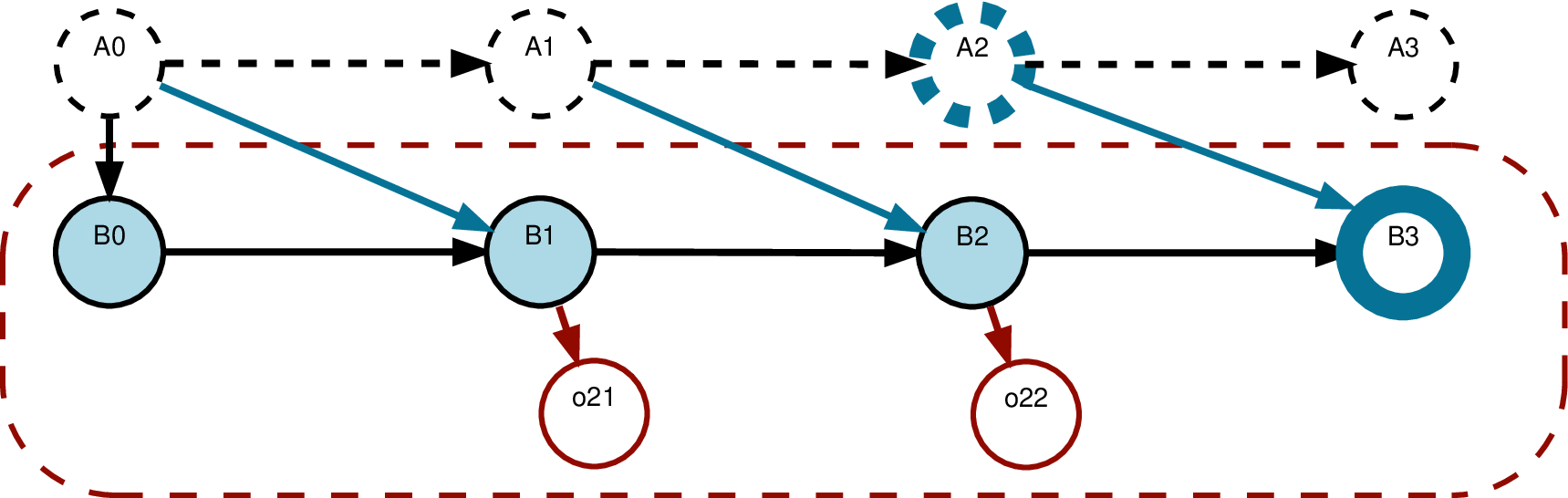}~~~~~~~
\par\end{centering}
}
\par\end{centering}
\caption{Construction of the IALM.}

\label{fig:initial-belief-local-construction}
\end{figure}

Here we discuss some of the issues involved in defining the initial
belief in the IALM. Note that in a factored models such as fPOSGs,
the initial state distribution $\bO$ is specified as a Bayesian network
$G^{0}$. Together with the 2DBN, $G^{\rightarrow}$ (which in fact
is a conditional probability distribution), it can form the unrolled
DBN $G=\text{unroll}(G^{0},G^{\rightarrow})$ which specifies the
joint distribution over all the state variables, as is illustrated
in \fig\ref{fig:initial-belief-local-construction}. Note that the
figure gives a simplified representation not involving any actions.

Now we will discuss how to specify the initial belief $\lbAT i0(\mfAT i0)$
of the IALM. The basic idea is to simply restrict $G^{0}$ to those
variables in the set $\LSF(i)$ of agent $i$'s local state variables.
However, this can lead to problems when there are arrows in $G^{0}$
pointing from variables not included in $\LSF(i)$ to variables included
in $\LSF(i)$. For instance, in \fig\ref{fig:initial-belief-local-construction},
the initial belief is factored: $\bO(\s)=\Pr(A^{0})\Pr(B^{0}|A^{0})$.
The initial local-form belief, however, should only be specified over
$B^{0}$. The solution is to marginalize out the dependencies:
\[
\lbAT i0(B^{0})=\sum_{A^{0}}\Pr(A^{0})\Pr(B^{0}|A^{0}).
\]

This is also gives the general recipe for any other problem: construction
of $\lbAT i0$ from $\bO$ is a marginal inference task. Certainly,
for certain complex problems this could be intractable, but the hope
is that for many real-world problems the prior $\bO$ is sufficiently
sparsely structured for this not to be an issue. Also, any of the
vast number of (exact or approximate) inference methods developed
in the last decades can be used~\citep{KollerFriedman09,Boyen+Koller98UAI,Jordan99MLJ,Murphy02PHD,Wainwright08FaT}. 

\paragraph{Impact of Correlations of Initial State Factors on the D-separating
Set}

\label{par:Impact-of-Correlations-ISD}

Note that the correlation of the initial state distribution can affect
d-separation and therefore what variables need to be included in the
d-separating set $\dsetAT i{\ts}$. For instance, if in the above
example there additionally is a state factor $C$, which is not connected
to $A$ or $B$ in the 2DBN $G^{\rightarrow}$, but which is a parent
of $A$ in $G^{0}$, we get the unrolled DBN as shown in \fig\ref{fig:initial-belief-influence-on-dset}.

\begin{figure}
\begin{centering}
\input{figs/frag_initial-belief.tex}\includegraphics[width=10cm]{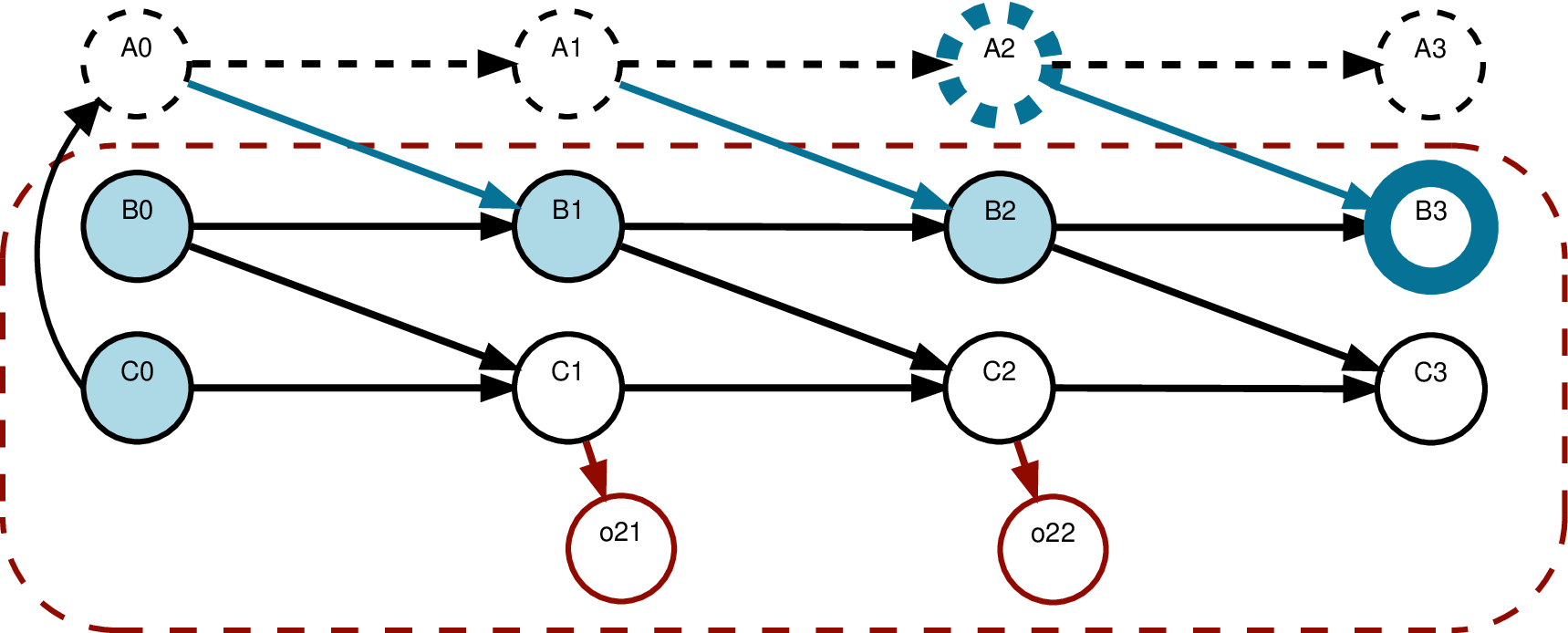}
\par\end{centering}
\caption{Impact of initial belief connectivity on the d-separating set of the
IALM.}

\label{fig:initial-belief-influence-on-dset}
\end{figure}

Now, to define the IALM, we will need the induced probability of $B^{3}$,
which according to \eqref{eq:P(NLAFS)__general} can be written\textbf{
}as
\[
\Pr(B^{3}|\left\langle B^{2},\dsetAT i3\right\rangle ,\ifpiAT i3)=\sum_{A^{2}}\iffunc(A^{2}|\dsetAT i3)\Pr(B^{3}|B^{2},A^{2}).
\]
Therefore $\dsetAT i3$ needs to contain any variables that can be
used to better predict $A^{2}$ (more formally, any variables that
d-separate $A^{2}$ from $\aoHistAT i{\ts}$ and any remaining variables
$x_{i}^{t}$, cf. \dfn\ref{dfn:d-separating-set-NO-IS}). However,
looking at the figure, we see that this means that $C^{0}$ needs
to be included in the d-set.

At the same time, however, we see that we do not need to condition
on the entire history $\vec{C}^{t}$. This may appear counter intuitive,
since observations at later time steps (e.g, $\oAT i1$ and $\oAT i2$)
certainly provide information about $A^{2}$, while they also depend
on $\vec{C}^{t}$. But this is precisely the point: by including $C^{0}$
in the d-separating set, it becomes part of the hidden state\textemdash e.g.,
for $\ts=2$ we have $\sAugAT i2=\langle\mfAT i2,\dsetAT i3\rangle=\langle\left\langle C^{2}\right\rangle ,\left\langle B^{0},B^{1},B^{2},C^{0}\right\rangle \rangle$\textemdash and
those later observations certainly provide information as to what
that hidden state is.

\subsection{Planning in an IALM}

\label{sec:IBA:planning}

Here we look at how we can plan using an IALM. It turns out that this
is surprisingly simple, since an IALM \emph{is a }(special case of)
POMDP.

\begin{observation}An influence-augmented local model is a POMDP.\begin{proof}This
can simply be verified by comparing \dfn\ref{dnf:IALM} to the definition
of a POMDP (\dfn\ref{def:POMDP}).\end{proof}\end{observation}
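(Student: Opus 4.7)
The plan is to perform a direct component-by-component verification against \dfn\ref{def:POMDP}. First I would line up the tuples: show that $\bar{\sS}$ is a well-defined (finite, given finiteness of $\LSF(i)$ and of the domain of the d-separating set $\dsetAT i{\ts+1}$) state space, that $\aAS i$ and $\oAS i$ are inherited unchanged from the underlying fPOSG, that $h$ is inherited, and that $\lbAT i0$ is a distribution over $\bar{\sS}$ (which follows from the construction via marginal inference on $\bO$ combined with the deterministic functional form of $\dsetAT i1$ on the stage-0 variables).

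Next I would verify the three probabilistic/functional components. For the reward, $\bar{\RA i}(\sAugAT it,\aAT i{\ts},\sAugAT i{t+1})=\RA i(\mfAT i{\ts},\aAT i{\ts},\mfAT i{\ts+1})$ is just a projection of the original reward onto components that are fully specified inside the augmented state (guaranteed by the reward-relevance condition in \dfn\ref{dfn:lfm}), so it is a well-defined real-valued function on $\bar{\sS}\times\aAS i\times\bar{\sS}$. For the observations, $\bar{O}_i(\oAT i{\ts+1}|\aAT i{\ts},\sAugAT i{t+1})=O(\oAT i{\ts+1}|\aAT i{\ts},\mfAT i{\ts+1})$ is likewise well-posed because observation-relevant factors are in $\LSF(i)$ (\dfn\ref{dfn:lfm}, property~1), and it is a conditional probability distribution on $\oAS i$ since the underlying $O$ is. For the transition, \eqref{eq:IALM-T} writes $\bar{T}_i$ as a product of (i) the induced distribution on NLAFs given in \eqref{eq:P(NLAFS)__general}, (ii) the OLAF CPT product \eqref{eq:LFM:T:OLAFs}, and (iii) a Kronecker delta enforcing $\dsetAT i{\ts+2}=d(\mfAT i{\ts},\aAT i{\ts},\mfAT i{\ts+1},\dsetAT i{\ts+1})$; I would check that each factor is non-negative, and that summation over $\sAugAT i{\ts+1}$ yields one (the delta collapses the $\dsetAT i{\ts+2}$-sum to a single term, leaving $\sum_{\mfAT i{\ts+1}}$ of a product of CPTs which equals one by standard marginalization arguments).

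The only subtlety, which I would flag as the main (but still minor) obstacle, is that $\bar{T}_i$ and the induced CPTs depend on the stage~$\ts$ through $\ifpiAT i{\ts+1}$, so the IALM is strictly speaking a \emph{non-stationary} POMDP. I would handle this either by appealing to the standard equivalence between finite-horizon non-stationary POMDPs and stationary ones via stage-augmentation of the state (absorbing $\ts$ into $\sAugAT it$), or simply by noting that \dfn\ref{def:POMDP} as used throughout the paper is implicitly the finite-horizon variant for which stage-indexed transition and observation kernels are standard. Either way, once the consistency condition on $\bar{\sS}$ is verified (so that $\mfAT i{\ts}$ and $\dsetAT i{\ts+1}$ agree on shared factors), the tuple $\langle\bar{\sS},\aAS i,\bar{T}_i,\bar{R}_i,\oAS i,\bar{O}_i,h,\lbAT i0\rangle$ meets every requirement of \dfn\ref{def:POMDP}, which is what we need.
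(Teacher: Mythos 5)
Your proposal is correct and takes essentially the same route as the paper: the paper's proof is literally the one-line instruction to ``compare \dfn\ref{dnf:IALM} to \dfn\ref{def:POMDP},'' and you have simply carried out that component-by-component comparison in full, including the non-stationarity caveat that the paper itself acknowledges when it introduces the IALM as a non-stationary POMDP. Your version is a more explicit rendering of the same verification, not a different argument.
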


This means that belief updates and definition of value functions follow
as usual. For completeness and future reference, we write these out
in detail below.

\subsubsection{Local-form Belief Update}

As implied by \dfn\ref{dnf:IALM}, in an IALM, an agent uses a \emph{local-form
belief}:

\begin{definition}[local-form belief]

A \emph{local-form belief $\lbAT i\ts$ }for an IALM constructed for
agent~$i$ is the posterior probability distribution over augmented
states $\sAugAT i{\ts}=\langle\mfAT i{\ts},\dsetAT i{\ts+1}\rangle$.

\end{definition}

The belief update for such a local-form belief is as in a regular
POMDP, cf. \eqref{eq:POMDP_BU}:
\begin{multline}
BU(\lbA i,\aAT it,\oAT i{\ts+1})(\sAugAT i{t+1})=\frac{1}{\Pr(\oAT i{\ts+1}|\lbA i,\aAT it)}\bar{O}_{i}(\oAT i{\ts+1}|\aAT i{\ts},\sAugAT i{t+1})\sum_{\sAugAT it}\bar{T}_{i}(\sAugAT i{t+1}|\sAugAT it,\aAT i{\ts})\lbA i(\sAugAT it)=\\
\frac{O(\oAT i{\ts+1}|\aAT i{\ts},\mfAT i{\ts+1})}{\Pr(\oAT i{\ts+1}|\lbA i,\aAT it)}\sum_{\mfAT i{\ts},\dsetAT i{\ts+1}}\Pr(\mfAT i{\ts+1}|\langle\mfAT i{\ts},\dsetAT i{\ts+1}\rangle,\aAT i{\ts},\ifpiAT i{\ts+1})\KroD{\dsetAT i{\ts+2}}{\dsetUF(\mfAT i{\ts},\aAT i{\ts},\mfAT i{\ts+1},\dsetAT i{\ts+1})}\lbA i(\mfAT i{\ts},\dsetAT i{\ts+1})\label{eq:IALM_BU}
\end{multline}
The expected observation probability (the normalization factor) in
this case can be shown (see the derivation in \app\ref{app:IALM:exp_obs})
to satisfy
\begin{eqnarray}
\Pr(\oAT i{\ts+1}|\lbA i,\aAT it) & = & \E_{\sAugAT it\sim\lbA i,\sAugAT i{t+1}\sim\Aug T(\sAugAT it,\aAT it,\cdot)}\left[\Aug O(\oAT i{\ts+1}|\aAT it,\sAugAT i{t+1})\right]\nonumber \\
 & = & \E_{\left\langle \mfAT i{\ts},\dsetAT i{\ts+1}\right\rangle \sim\lbA i,\left\langle \mfAT i{\ts+1},\dsetAT i{\ts+2}\right\rangle \sim\Aug T(\left\langle \mfAT i{\ts},\dsetAT i{\ts+1}\right\rangle ,\aAT it,\cdot)}\left[O(\oAT i{\ts+1}|\aAT it,\mfAT i{\ts+1})\right]\nonumber \\
 & = & \sum_{\mfAT i{\ts+1}}O(\oAT i{\ts+1}|\aAT it,\mfAT i{\ts+1})\Pr(\mfAT i{\ts+1}|\lbA i,\aAT it,\ifpiAT i{\ts+1}),\label{eq:P(o|lfb,a)}
\end{eqnarray}
with
\begin{equation}
\Pr(\mfAT i{\ts+1}|\lbA i,\aAT it,\ifpiAT i{\ts+1})\defas\sum_{\mfAT i{\ts},\dsetAT i{\ts+1}}\Pr(\mfAT i{\ts+1}|\langle\mfAT i{\ts},\dsetAT i{\ts+1}\rangle,\aAT i{\ts},\ifpiAT i{\ts+1})\lbA i(\mfAT i{\ts},\dsetAT i{\ts+1}).\label{eq:P_fm__lb}
\end{equation}

\subsubsection{IALM Value}

Putting everything together, we can show that for an IALM, the value
function is similar to the normal POMDP value function:

\begin{proposition}[IALM value function]The value function is given
by

\begin{equation}
\QAT i\ts(\lbA i,\aAT it)=\RA i(\lbA i,\aAT it)+\gamma\sum_{\oAT i{\ts+1}}\Pr(\oAT i{\ts+1}|\lbA i,\aAT it)\VAT i{\ts+1}(BU(\lbA i,\aAT it,\oAT i{\ts+1})),
\end{equation}
\begin{equation}
\VAT i{\ts+1}(\lbA i)=\max_{\aA i}\QAT i{\ts+1}(\lbA i,\aA i),
\end{equation}
where 

\begin{eqnarray}
\RA i(\lbA i,\aAT it) & = & \E_{\sAugAT it\sim\lbA i,\sAugAT i{t+1}\sim\Aug T(\sAugAT it,\aAT it,\cdot)}\left[\Aug{R_{i}}(\sAugAT it,\aAT it,\sAugAT i{t+1})\right]\nonumber \\
 & = & \sum_{\mfAT i{\ts}}\sum_{\mfAT i{\ts+1}}\RA i(\mfAT i{\ts},\aAT it,\mfAT i{\ts+1})\Pr(\mfAT i{\ts},\mfAT i{\ts+1}|\lbA i,\aAT it,\ifpiAT i{\ts+1})\label{eq:R(lfb,a)}
\end{eqnarray}
with 
\begin{equation}
\Pr(\mfAT i{\ts},\mfAT i{\ts+1}|\lbA i,\aAT it,\ifpiAT i{\ts+1})\defas\sum_{\dsetAT i{\ts+1}}\Pr(\mfAT i{\ts+1}|\langle\mfAT i{\ts},\dsetAT i{\ts+1}\rangle,\aAT i{\ts},\ifpiAT i{\ts+1})\lbA i(\mfAT i{\ts},\dsetAT i{\ts+1}).\label{eq:P_xmxm_lfb}
\end{equation}

\begin{proof} This follows from the value function of regular POMDPs
together with the derivations of $\RA i(\lbA i,\aAT it)$ and $\Pr(\mfAT i{\ts},\mfAT i{\ts+1}|\lbA i,\aAT it,\ifpiAT i{\ts+1})$
in \app\ref{app:IALM:exp_reward}.\end{proof}

\end{proposition}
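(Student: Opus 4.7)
The plan is to leverage the preceding Observation that the IALM is itself a POMDP, so the standard finite-horizon Bellman recursion applies verbatim to the augmented belief $\lbA i$ over states $\sAugAT it = \langle\mfAT i{\ts},\dsetAT i{\ts+1}\rangle$. Given that, the proof reduces to two rewriting tasks: showing that the generic expected-reward term collapses to \eqref{eq:R(lfb,a)}, and that the generic one-step observation predictor collapses to \eqref{eq:P(o|lfb,a)}. Both rewrites rely on two structural facts from \dfn\ref{dnf:IALM}: (i) the IALM reward and observation functions depend only on the local-state component, not on the d-separating set; and (ii) the IALM transition \eqref{eq:IALM-T} factors as an induced NLAF term, an OLAF product, and a Kronecker delta that pins $\dsetAT i{\ts+2}$ to $d(\mfAT i{\ts},\aAT it,\mfAT i{\ts+1},\dsetAT i{\ts+1})$.

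For the reward, I would start from the standard POMDP identity
\[
\bar R_i(\lbA i,\aAT it) = \sum_{\sAugAT it}\lbA i(\sAugAT it)\sum_{\sAugAT i{t+1}}\bar T_i(\sAugAT i{t+1}|\sAugAT it,\aAT it)\,\bar R_i(\sAugAT it,\aAT it,\sAugAT i{t+1}),
\]
substitute $\bar R_i(\sAugAT it,\aAT it,\sAugAT i{t+1})=\RA i(\mfAT i{\ts},\aAT it,\mfAT i{\ts+1})$, and note that the Kronecker delta collapses the inner sum over $\dsetAT i{\ts+2}$ to a single surviving term. Because the remaining integrand is independent of the specific values of $\dsetAT i{\ts+1}$ apart from their role in conditioning the induced NLAF probabilities, I can pull the $\dsetAT i{\ts+1}$-sum inwards and recognize the definition of $\Pr(\mfAT i{\ts},\mfAT i{\ts+1}|\lbA i,\aAT it,\ifpiAT i{\ts+1})$ given in \eqref{eq:P_xmxm_lfb}. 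That delivers \eqref{eq:R(lfb,a)}.

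The derivation of $\Pr(\oAT i{\ts+1}|\lbA i,\aAT it)$ follows the same template: expand the standard POMDP observation-prediction expectation, substitute $\bar O_i(\oAT i{\ts+1}|\aAT it,\sAugAT i{t+1}) = O(\oAT i{\ts+1}|\aAT it,\mfAT i{\ts+1})$ (justified by property 1 of \dfn\ref{dfn:lfm}), collapse the Kronecker delta in the same way, and marginalize over $\dsetAT i{\ts+1}$ to land on \eqref{eq:P(o|lfb,a)} with the marginal \eqref{eq:P_fm__lb}. Plugging both rewrites into the generic POMDP Bellman equation yields the stated $Q_i$; the $V_i = \max_{\aA i} Q_i$ clause is then the standard Bellman optimality over the (single-agent) IALM.

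The one genuine subtlety I expect to wrestle with is bookkeeping around the Kronecker delta in \eqref{eq:IALM-T}: the summation over $\sAugAT i{t+1}$ ranges over pairs $\langle\mfAT i{\ts+1},\dsetAT i{\ts+2}\rangle$, but $\dsetAT i{\ts+2}$ typically shares variables with $\mfAT i{\ts+1}$, and the restriction of $\bar{\mathcal S}$ to consistent assignments in \dfn\ref{dnf:IALM} is what keeps the delta from introducing double-counted or contradictory terms. Once that consistency is made explicit, the collapse of the inner sums is routine algebra, and the remainder is just reading off the claimed forms.
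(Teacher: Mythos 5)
Your proposal is correct and follows essentially the same route as the paper: the paper's proof simply invokes the POMDP Bellman recursion (licensed by the preceding observation that the IALM is a POMDP) and defers to \app\ref{app:IALM:exp_reward} and \app\ref{app:IALM:exp_obs}, where the expected reward and observation predictor are expanded over augmented states, the sum over $\dsetAT i{\ts+2}$ is collapsed, and the $\dsetAT i{\ts+1}$-marginal is recognized as \eqref{eq:P_xmxm_lfb} and \eqref{eq:P_fm__lb}, exactly as you describe. Your remark about the Kronecker delta and the restriction of $\bar{\sS}$ to consistent assignments is a fair point of care but does not change the argument.
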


The solution of the IALM gives the influence-based best-response value,
defined as the value of the initial local-form belief:

\begin{equation}
\VA i(\ifpiA i(\polA{\excl i}))\defas\VAT i0(\lbAT i0).\label{eq:V(b0)_inf}
\end{equation}

\subsection{IBA by Example}

\label{sec:IBA-by-Example}

To further clarify the process of influence-based abstraction, and
provide some intuition of the potential computational savings and
in which cases they could arise, we discuss two examples in some more
detail.

\paragraph{The \problemName{Planetary Exploration} Domain.}

First, let us consider the planetary rover domain illustrated in \fig\ref{fig:dset-rover}.
We will give a characterization of this problem in terms of number
of states, for both the global-form best-response model (GFBRM) and
IALM, thus providing an analysis of the computational savings that
can arise in this case.

We will use the following notations and assumptions:
\begin{itemize}
\item $L$ is the number of locations
\item $pl\in\left\{ yes,no\right\} $ indicates if a plan has been sent
to the rover.
\item $B$ is the number of private states $\argsA{bt}{1}$ of the satellite
(e.g., number of battery levels).
\item $\left|\oAS1\right|$ is the size of the observation set of agent~1.
In case that $\oA1=\argsA{bt}{1}$, i.e., the satellite can perfectly
observe its battery level, we would have $\left|\oAS1\right|=B$.
\item $\aA1\in\left\{ NOOP,\,PLAN\right\} $ is the action of the satellite.
\end{itemize}
So now, in the GFBRM model, the augmented state for the rover (who
is agent~2) is $\sAugAT2t=\langle\sT{\ts},\aoHistAT1{\ts}\rangle=\langle\argsAT{bt}{1}\ts,\argsT{pl}\ts,\argsAT l2\ts,\aoHistAT1{\ts}\rangle$.
Given the above assumptions, the number of AOHs for the satellite
at stage $\ts$ is $|\aoHistATS1\ts|=\left(2\left|\oAS1\right|\right)^{\ts}$.
And therefore the state space of the GFBRM is of size $\left|\argsAT{\bar{\sS}}i\ts\right|=2LB\cdot\left(2\left|\oAS1\right|\right)^{\ts}$.

In contrast, in an IALM, we have states $\sAugAT2t=\langle\mfAT2{\ts},\dsetAT2{\ts+1}\rangle=\left\langle \left\langle \argsT{pl}\ts,\argsAT l2\ts\right\rangle ,\argsT{pl}{0:\ts}\right\rangle =\left\langle \argsAT l2\ts,\argsT{pl}{0:\ts}\right\rangle $,
meaning that the size of the state space in the IALM is $L\cdot2^{\ts+1}$,
which is strictly smaller than the GFBRM model.

Moreover, we can exploit that fact that $pl$ can only turn on, meaning
that $\argsT{pl}{0:\ts}$ has only $\ts+2$ possible values: it got
turned on on one of the stages $0\dots t$ or it has not yet been
turned on (``$NotYet$''). We refer to this re-coded variable as
$PlanIssueTime^{\ts}=\dsetCompF(\argsT{pl}{0:\ts})$ such that we
can write $\sAugAT it=\left\langle \argsA l2,PlanIssueTime^{\ts}\right\rangle $.
This means that the number of IALM states at stage $\ts$ in the \problemName{planetary exploration} problem
can be further reduced to $L\left(\ts+2\right)$. This suggests that
the IALM will be much cheaper to solve for larger horizons: it scales
linearly with $\ts$, whereas the GFBRM scales exponentially with
$\ts$.

However, our discussion so far has excluded the time it takes to construct
the IALM. Specifically, to compute the transition probabilities for
every stage $\ts$, we will need to compute the incoming influence:
\[
\iffunc(\ifsAT1{\ts+1}|\dsetAT1{\ts+1})=\Pr(\aAT1{\ts}|PlanIssueTime^{\ts}).
\]

In general, we would need to compute this for all possible instantiations
of $\dsetAT2{\ts+1}$. However, in this case, the action of the satellite
agent is only relevant in when $PlanIssueTime^{\ts}=NotYet$, which
mean that we only need to compute the probability $\Pr(\aAT1{\ts}|PlanIssueTime^{\ts}=NotYet)$.
Applying \eqref{eq:IncomingInfluence__non-isd} yields (we leave $\bO,\jpolG{\excl i}$
implicit):
\[
\Pr(\aAT1{\ts}|PlanIssueTime^{\ts})=\sum_{\aoHistAT1{\ts}}\polA1(\aAT1{\ts}|\aoHistAT1{\ts})\Pr(\aoHistAT1{\ts}|PlanIssueTime^{\ts}),
\]
which shows that if we have $\Pr(\aoHistAT1{\ts}|PlanIssueTime^{\ts})$
for each stage $\ts$, we can directly derive $\Pr(\aAT1{\ts}|PlanIssueTime^{\ts})$.
The main issue therefore is to compute $\Pr(\aoHistAT1{\ts}|PlanIssueTime^{\ts})$
for all $\ts=1\dots h-1$. This is essentially a belief tracking problem.
Specifically, we can model this as a special type of hidden Markov
model where the hidden state is $\langle\argsAT{bt}{1}\ts,\aoHistAT1{\ts}\rangle$,
and our observations are $\argsT{pl}\ts$. The number of such states
at stage $\ts$ is $B\cdot\left(2\left|\oAS1\right|\right)^{\ts}$
and that also is the dominant term in the complexity.

This shows that in this example, computing a best-response using a
GFBRM requires to solve a POMDP with $\left|\mathcal{S}^{GFBR}\right|=2LB\cdot\left(2\left|\oAS1\right|\right)^{\ts}$,
while doing it using an IALM requires solving a POMDP with $\left|\mathcal{S}^{IALM}\right|=L\left(\ts+2\right)$
states and a construction cost of $O\left(B\cdot\left(2\left|\oAS1\right|\right)^{\ts}\right)$.
This means that particularly when also the number $L$ of locations
is considerable, we can tackle much larger problems, since we have
isolated the exponential complexity of tracking $\aoHistAT1{\ts}$
from the impact of the number of locations $L$. Similarly, in the
case where $B$ (in general the number of states induced by non-modeled
factors) is very large, this cost now only appears in the IALM construction
and is not multiplied with $L$.\footnote{Of course, the reader could wonder in how far these terms are relevant,
given the remaining exponential dependence on the horizon via the
cost of tracking $\aoHistAT1{\ts}$. To answer this, let us point
out that this exponential dependence is directly the consequence of
the fact that in this paper we have not restricted the class of policies
considered for other agents, but assumed these are general mappings
from AOHs to actions. However, this problem is inherently complex.
In fact, unless a particular compact description is available, the
size of the policy of the satellite $\polA1$, i.e., \emph{the size
of the input }(tabular representations of the $\polA1$) of the best-response
problem, is exponential in the horizon. However, in cases where the
policy $\polA1$ of the other agent has a compact representations
(e.g., a finite state controller with $K$ states) it may be possible
to substantially reduce the cost of tracking the other agent's internal
state (e.g., $\aoHistAT1{\ts}$ is replaced by agent 1's controller
node and tracking can be done in time $O(B\cdot K)$.}

\paragraph{The \<housesearch> Domain}

Next, we turn to the \<housesearch> problem, illustrated in \fig\ref{fig:dset-housesearch}.
In contrast to \problemName{Planetary Exploration}, \<housesearch>
exhibits a more substantial d-separating set $\dsetAT2{\ts+1}$ and
so we expect less computational savings. In fact, as we will see below,
the IALM provides little to no computational benefit \emph{except}
in the face of additional assumptions on the structure of the problem.

Let us again define the sizes of the relevant quantities:
\begin{itemize}
\item $L$ is the number of locations.
\item $f\in\left\{ yes,no\right\} $ indicates if the target has been found.
Once a target has been found the location $\argsA l1$ of agent~1
no longer has any effect.
\item $\left|\oAS1\right|$ is the size of the observation set of agent~1.
In case that $\oA1=\left\langle \argsA l{1},f\right\rangle $, i.e.,
the agent can perfectly observe its location and if the target is
found, we would have $\left|\oAS1\right|=2L$.
\item $\aAS1=\aAS2$ are the action sets that can allow the agents to move
to adjacent rooms.
\end{itemize}
Repeating the analysis, we see that the GFBRM state is $\sAugAT2t=\langle\sT{\ts},\aoHistAT1{\ts}\rangle=\left\langle \left\langle \argsAT l{1}\ts,\argsT{f}\ts,\argsAT{l}{tgt}{\ts},\argsAT l2{\ts}\right\rangle ,\aoHistAT1{\ts}\right\rangle $.
Since $|\aoHistATS1\ts|=\left(\left|\aAS1\right|\left|\oAS1\right|\right)^{\ts},$
the state space of the GFBRM is of size $\left|\argsT{\bar{\sS}}\ts\right|=2L^{3}\left(\left|\aAS1\right|\left|\oAS1\right|\right)^{\ts}$.
If we assume the agent has 4 movement actions and can perfectly observe
it location and if the target is found, as above, this becomes $2L^{3}\left(4\cdot2L\right)^{\ts}=2L^{3}\left(8L\right)^{\ts}=2^{3t+1}L^{t+3}$.

On the other hand, the IALM has states $\sAugAT2t=\langle\mfAT2{\ts},\dsetAT2{\ts+1}\rangle=\left\langle \argsT{f}{0:\ts},\argsAT{l}{tgt}{0:\ts},\argsAT l2{0:\ts}\right\rangle $.
Therefore, without further simplifications, the size of the state
space at stage $\ts$ in the IALM is $2^{\ts+1}\cdot L^{2(t+1)}$.
In other words, even disregarding the construction costs of the IALM,
this would only guaranteed to be smaller for time step $\ts=1$, as
illustrated in \tab\ref{tab:housesearch-sizes}.

Simplifications are possible, however: as before, the `found' variable
$f$ may only turn on which reduces the IALM state to $\sAugAT2t=\left\langle foundTime^{t},\argsAT{l}{tgt}{0:\ts},\argsAT l2{0:\ts}\right\rangle $
and the state space size to $(\ts+2)\cdot L^{2(t+1)}$. When the target
is stationary, this reduces to $(\ts+2)\cdot L\cdot L^{(t+1)}$. Also,
it may not be realistic that all sequences of locations are realizable.
Given a fixed start position and 4 deterministic movement actions,
the number of realizable sequences of locations would be $O(4^{t}),$which
would lead to a `simplified IALM' with state space of size $(\ts+2)\cdot L\cdot4^{\ts}$.
Exploiting the realizable location sequences of agent~$1$ to similarly
reduce the state space of the GFBRM leads to $2\cdot L^{3}\cdot(\ts+2)\cdot4^{\ts}$
states. As such, the IALM representation in terms of $\sAugAT2t=\langle\mfAT2{\ts},\dsetAT2{\ts+1}\rangle$
enables us to capture structure of the problem to significantly reduce
its local state space.

\begin{table}
\begin{centering}
\begin{tabular}{rrcllll}
\hline 
 &  &  & \multicolumn{4}{c}{stage $\ts$}\tabularnewline
model & state space size &  & $1$ & $2$ & 3 & 4\tabularnewline
\hline 
 &  &  &  &  &  & \tabularnewline
GFBRM & $2^{3\ts+1}L^{\ts+3}$ &  & $16L^{4}$ & $128L^{5}$ & $1024L^{6}$ & $8192L^{7}$\tabularnewline
GFBRM simplified & $2\cdot L^{3}\cdot(\ts+2)\cdot4^{\ts}$ &  & $24L^{3}$ & $128L^{3}$ & $640L^{3}$ & $3072L^{3}$\tabularnewline
IALM naive & $2^{\ts+1}\cdot L^{2(\ts+1)}$ &  & $4L^{4}$ & $8L^{6}$ & $16L^{8}$ & $32L^{10}$\tabularnewline
IALM simplified & $L\cdot(\ts+2)\cdot4^{\ts}$ &  & $12L$ & $64L$ & $320L$ & $1536L$\tabularnewline
 &  &  &  &  &  & \tabularnewline
\hline 
\end{tabular}
\par\end{centering}
\caption{State space sizes for GFBRM and two versions of IALM models for the
\<housesearch> problem.}
\label{tab:housesearch-sizes}
\end{table}

Of course, we did not yet cover the cost of the construction cost
of the IALM by computing the influence. Here, we sketch what is involved
in the construction of the `simplified IALM' we constructed. Specifically
the influence specification now is:
\[
\iffunc(\ifsAT1{\ts+1}|\dsetAT1{\ts+1})=\Pr(\argsAT l1\ts|\left\langle FoundTime^{\ts},\argsA{l}{tgt},\argsAT l2{0:\ts-1}\right\rangle ).
\]

As before, we only care about cases where $FoundTime^{\ts}=NotYet$
(since otherwise the location $\argsAT l1\ts$ is irrelevant). However,
the different options for $\argsA{l}{tgt},\argsAT l2{0:\ts-1}$ should
be evaluated, which means that we need to solve the inference problem
for $O(L\cdot4^{\ts-1})$ instantiations of the d-set. Each of these
instantiations requires tracking hidden states of the form $\left\langle \argsAT l{1}\ts,\aoHistAT1{\ts}\right\rangle $,
and there are $L\left(\left|\aAS1\right|\left|\oAS1\right|\right)^{\ts}$
of them in general. For the simplified setting of deterministic moves
and perfect observations by agent~1 this can be limited to $L\cdot\ts\cdot4^{t}$
since in that case $\aoHistAT1{\ts}=\left\langle \argsT f{0:\ts},\argsAT l1{0:\ts}\right\rangle =\left\langle FoundTime^{\ts},\argsAT l1{0:\ts}\right\rangle $.

In summary, the simplified version of the \<housesearch> problem
enables us to reduce the state space of the best-response model substantially,
from $2^{3\ts+1}L^{\ts+3}$ to $L\cdot(\ts+2)\cdot4^{\ts}$. However
to construct the IALM state space at stage $\ts$ still requires time
of the order $(L\cdot4^{\ts-1})(L\cdot\ts\cdot4^{\ts})=L^{2}\cdot\ts\cdot4^{2\ts-1}$.

\subsection{More General Implications of IBA}

\label{sec:More-General-Implications}

In the above, we saw that IBA can lead to more efficient computation
of exact best-responses in settings that have sufficient structure
to exploit. However, our motivation for developing the theory presented
in this paper is more general than this. Here we elaborate on the
broader implications that we envision.

\begin{figure}[tbh]
\begin{centering}
\includegraphics[width=11cm]{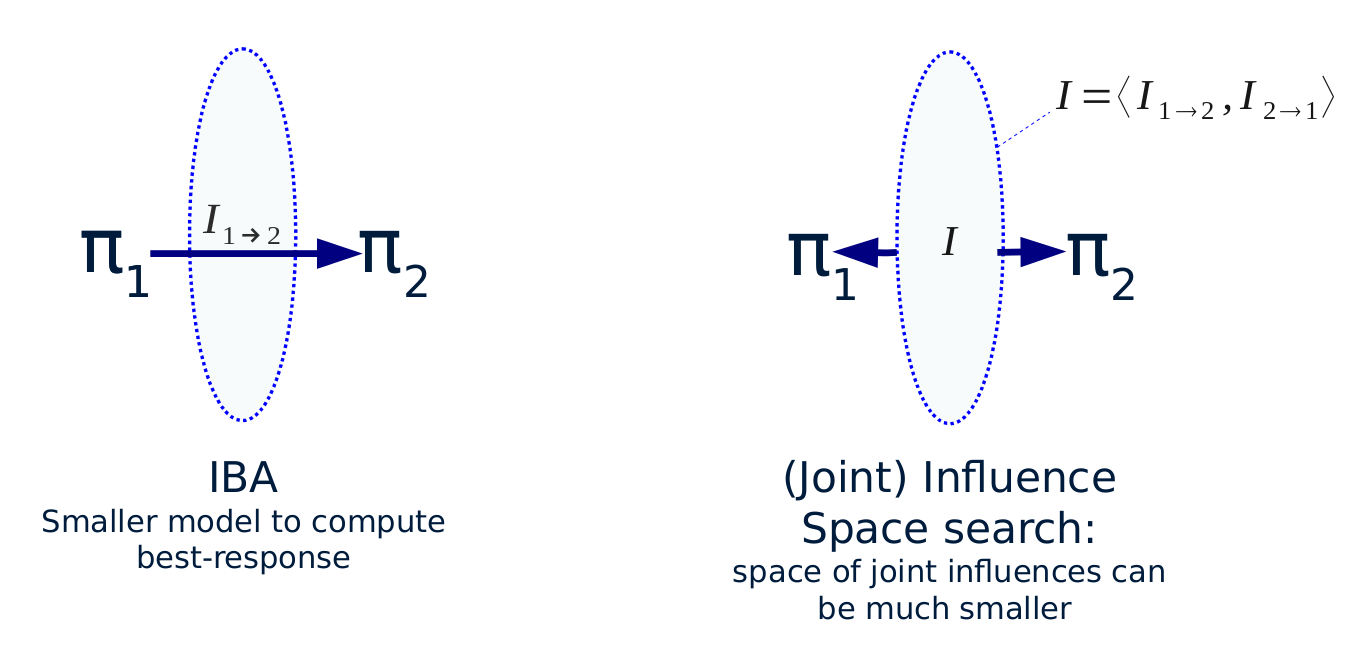}
\par\end{centering}
\caption{Illustration of the ideas of influence-based abstraction and influence
search.}

\label{fig:IBA-vs-IS}
\end{figure}

\subsubsection{Influence Search}

The ideas underlying influence-based abstraction were developed in
the research community focusing on multiagent sequential decision
making by people like \citet{Becker03AAMAS}, \citet{Varakantham09ICAPS}
and \citet{Petrik09JAIR}. The goal that these works pursued were
not the computation of merely a best response, but of the optimal
\emph{joint} policy. Hence these works performed a type of \emph{influence
search }\citep{Witwicki10ICAPS}. The idea, illustrated in \fig\ref{fig:IBA-vs-IS},
is that many policies of one agent, say agent~$1$, may correspond
to the same influence $\ifSD{1}{2}$ on agent~$2$, which would mean
that the set of such influences could be much smaller than the set
of possible policies $\polA2$. Therefore, if it is possible to search
through the space of \emph{joint influences }$\jifp=\left\langle \ifSD{1}{2},\ifSD{2}{1}\right\rangle $,
this could be much more effective than searching the much larger space
of joint policies $\jpol=\left\langle \polA1,\polA2\right\rangle $.
Specifically, \citet{Witwicki12AAMAS} showed orders of improvement
in computational cost over joint policy-search approaches.

So far, however, these ideas have only been exploited in sub-classes
of fDec-POMDPs (see also \sect\ref{sec:sub-classes-with-Compact-representations}),
and generalizing influence search to general fDec-POMDPs or even fPOSG
(i.e., to find Nash equilibria) is still an open problem. Our definition
of influence in \dfn\ref{dfn:The-incoming-influence} can serve as
a starting point for such extensions.

\subsubsection{Approximate Influence Representations}

The discussion in \sect\ref{sec:IBA-by-Example} demonstrated that,
in cases with sufficient structure, the representation of the influence
$\iffunc(\ifsAT i{\ts+1}|\dsetAT i{\ts+1})$ can be compact, leading
to substantial benefits. However, at the same time it also showed
that in general, without exploiting special properties of the domain,
these representations can become very big and unwieldy due to the
dependence on the history of a subset of variables. Large influence
representations can not be exploited for more efficient best-response
computations, and they also suggest that the number of possible influences
will be large, possibly limiting the effectiveness of influence search.

However, even though exact representations of $\iffunc(\ifsAT i{\ts+1}|\dsetAT i{\ts+1})$
may be very large, it might be the case that approximate representations
$\hat{\iffunc}(\ifsAT i{\ts+1}|\dsetAT i{\ts+1})$ can be compactly
represented while still affording good performance; for the purposes
of making good predictions it is usually not required to remember
the full history \citep{Littman94memoryless,McCallum95PhD,Kaelbling98AI,Meuleau99UAI}.
Moreover, learning such approximate influence points is a supervised
learning problem (a specific type of sequence prediction problem),
which means that we can directly build on recent advancements for
such prediction problems, including work on\emph{ deep learning}~
\citep{Schmidhuber15NN,LeCun15Nature}. Of course, whether the successes
from natural language processing~\citep{Vinyals15CVPR,Young18IEEE},
machine translation~\citep{Cho14EMNLP}, speech recognition~\citep{Graves13speech,Weninger15speech}
or biological sequence data~\citep{Jurtz17bioinformatics} will transfer
to the task of influence prediction remains to be investigated, but
there already are some positive indications.

Specifically, some studies have shown that approximate representations
of influence can enable further scalability in a variety of respects.
For instance, \citet{Oliehoek13AAMAS} introduced the idea of \emph{transfer
planning, }which defines a number of smaller source tasks, whose solution
is transferred to the larger (involving more agents) target task.
The definition of the source tasks ignores the actual influence of
the rest of the system, and hence can be seen as a very naive special
case of approximate influence-based abstraction: it assumes an arbitrary
influence point for each sub-problem. Nevertheless, the authors empirically
showed that this can lead to good performance in Dec-POMDPs with many
agents. This was corroborated by \citet{Oliehoek15IJCAI} who employed
\emph{optimistic influences} (also an approximate form of influence)
to compute factored upper bounds on the Dec-POMDP value function.
They demonstrated that in some cases the solution found by transfer
planning for Dec-POMDPs with over 50 agents was essentially optimal.
Recently, \citet{He20NeurIPS} demonstrated that, by using learned
(recurrent neural network) representations of influence in online
planning, it is possible to get better task performance when the time
for action selection is limited. Of course, giving hard performance
guarantees for such approaches is very difficult, but \citet{Congeduti20arxiv}
show that it is possible to derive performance loss bounds for approximate
influence representations. Their analysis also suggests that typical
machine learning approaches that minimize the cross-entropy loss may
be well aligned with minimizing the performance loss.

As such, there is substantial evidence that approximate extension
of the formal IBA framework presented in this paper can lead to various
benefits. Related to this is the new perspective these approaches
give on the systems they aim to control. For instance, both \citet{Oliehoek15IJCAI}
and \citet{He20NeurIPS} experimented with forms of ``influence strength''
to better understand parametrized domains by looking at the impact
on the resulting solution quality. Further formalization and refinements
of such notions could lead to a better understanding of the application
of decision making methods to complex domains.

\subsubsection{Identifying Inductive Biases}

Notions like influence strength can enable us to better understand
the problems that we are trying to tackle, and the IBA perspective
can generate more of such insights. For instance, the discussion on
the impact of the initial state distribution \vpageref{par:Impact-of-Correlations-ISD}
neatly exemplifies some different types of structure we can expect
to encounter when dealing with abstraction in structured decision
making processes.

Identification of such structure is important even for deep learning:
even though the representations are learned automatically, no learning
methods are effective without the appropriate inductive biases~\citep{Mitchell80bias,Wolpert96NC}.
For instance, convolutional neural networks are so successful for
image processing because they exploit the fact that there is local
and repeated structure in real-world images. In a similar way, in
the discussion on the initial state distribution, we noticed that
certain forms of structure, such as dependence on certain state factors
at stage $\ts=0$, might be common in sequential decision processes
involving abstraction.

In fact, recent research provides clear evidence that structure as
implied by influence-based abstraction can be effectively used to
bias deep reinforcement learning \citep{Suau19arxiv}. Specifically,
that work shows that by equipping a policy and/or value network with
a recurrent sub-network that is only fed with a subset of variables
(roughly corresponding to the d-separating set) can lead to higher
performance than feed-forward networks, while learning much faster
than a full-sized recurrent neural network. Further connections to
deep RL and multiagent RL approaches are discussed in \sect\ref{sec:Related-Work}.

\section{IBA With Intra-Stage Dependencies}

\label{sec:IBA-with-IS-deps}

The previous section presented the framework of influence-based abstraction,
which enables us to abstract away hidden state variables in so-called
local-form models. We illustrated how this can lead to speeding up
best-response computations and discussed more general implications
of the theory. So far we assumed there are no intra-stage connections:
all influence links span a time step. However, intra-stage dependencies
(ISDs) can be useful to specify a more intuitive model, as we saw
for \<housesearch> in \fig\ref{fig:house_dbn_intrastage}. Additionally,
there could be problems that only have a correct formulation using
intra-stage dependencies: since intra-stage connections can model
additional correlations, the transition functions $\Tfunc(\sT{\ts+1}|\sT{\ts},\jaT{\ts})$
that can be represented without ISDs is a strict subset of those we
can represent with ISDs. Simply removing ISDs from problems that need
them is not possible, as it is not clear what probabilities the CPTs
should specify for the remaining parents.

Moreover, intra-stage connections enable us to introduce `dummy'
variables, as discussed in \sect\ref{sec:IBA:def-of-influence:links-sources-destinations}.
Without this capability, the requirement of including all observation-relevant
and reward-relevant variables in the local state (cf.\ \dfn\ref{dfn:lfm})
would limit the applicability of influence-based abstraction. For
instance, imagine the setting where our agent's reward is directly
affected by how many other agents take the some action $a$. Without
intra-stage connections, we would be forced to model all the action
variables of the other agents in the local state, making the local
model intractable. However, using intra-stage connections, we can
instead introduce a count variable that affects our reward, while
we do not model (abstract away) all the individual actions of other
agents. As such, the ability to use ISDs can allow us to effectively
describe scenarios with anonymous interactions, such as mean-field
games and others~\citep{Jovanovic88anonymous,Kizilkale12TAC,Varakantham14AAAI,Robbel16AAAI,Nguyen17AAAI,Subramanian19AAMAS},
in the IBA framework.

Therefore, this section extends our definition of influence to also
be applicable for models that have such \emph{intra-stage dependencies
(ISDs)}.

\subsection{Definition of Influence under ISDs}

Here we extend IBA by adapting notions of influence sources, d-separating
sets, and incoming influence points to properly take into account
ISDs.

\subsubsection{Intra-Stage Influence Sources}

In settings with intra-stage dependencies, there is at least one non-modeled
factor $\nmfT{\ts+1}$ that influences an NLAF $\mfnT{\ts+1}$.  If
there are multiple such factors, we let $\nmfAT u{\ts+1}$ denote
them. Therefore, in order to perform IBA in settings with ISDs, we
will need to predict influence sources $\ifsAT i{\ts+1}=\left\langle \nmfAT u{\ts},\jaGT ut,\nmfAT u{\ts+1}\right\rangle $.
In order to correctly deal with the intra-stage sources~$\nmfAT u{\ts+1}$,
we will additionally need to consider those variables that influence
\emph{them}.

\subparagraph{Indirect Sources}

\begin{figure}
\begin{centering}
\input{figs/frag_housesearch_unrolled.tex}\includegraphics[scale=0.4]{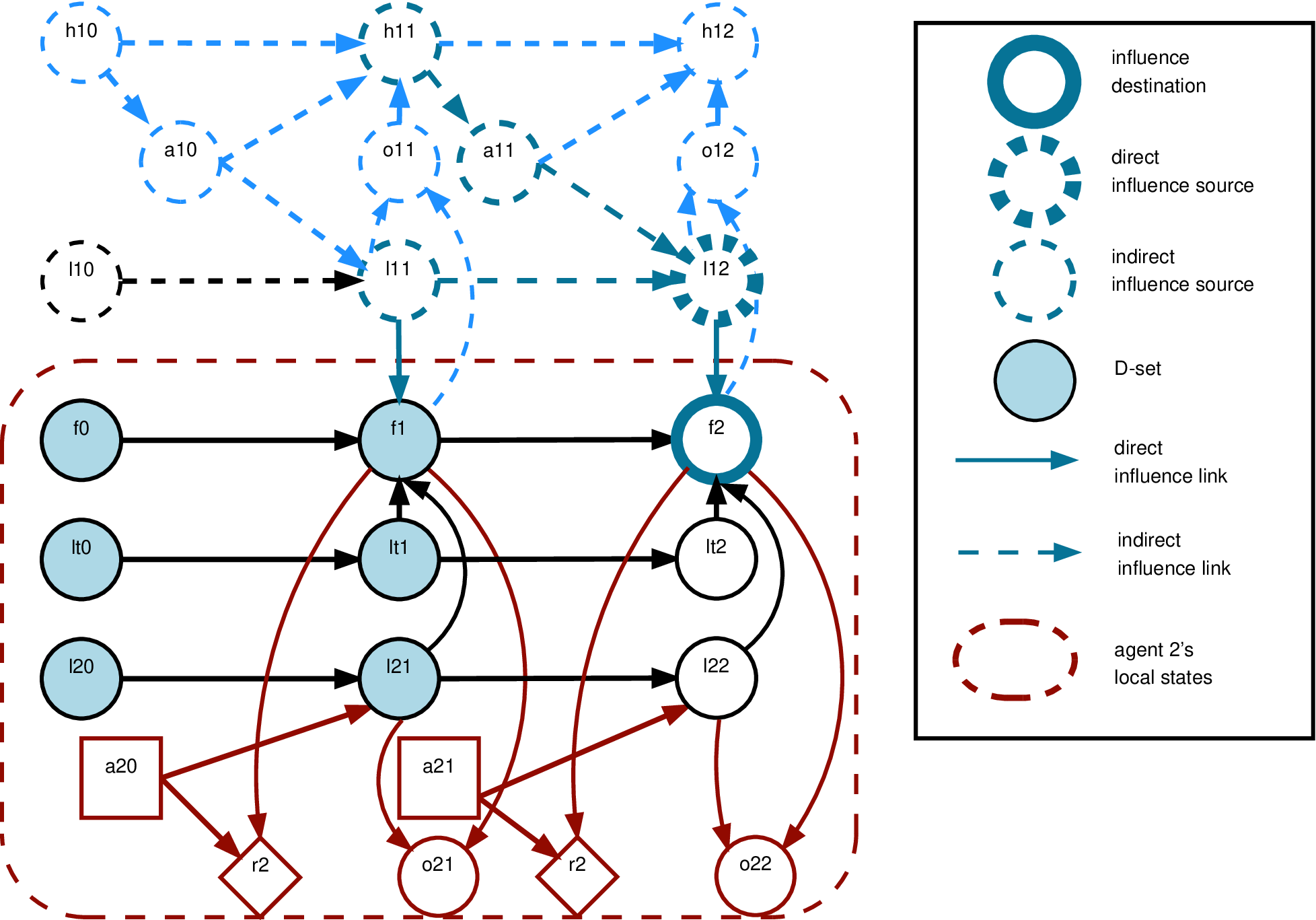}
\par\end{centering}
\caption{Illustration of the influence experienced by protagonist agent $i=2$
in the intra-stage version of \<housesearch> at stage $\ts=2$. $\argsT f2$
is the influence destination, with \emph{direct }influence source
$\nmfAT u2=\left\langle \argsAT l12\right\rangle $ (i.e., $\ifsAT i2=\left\langle \argsAT l12\right\rangle $).
Additionally, the figure highlights the \emph{indirect}~influence
sources $\nmfAT v1=\left\langle \argsAT l11\right\rangle $, $\jaGT vt=\left\langle \aAT11\right\rangle $
and $\aoHistGT v1=\left\langle \aoHistAT11\right\rangle $, which
determine the influence that is exerted at stage $\ts=1$. (Note that
$\argsAT l11$ in fact is also a \emph{direct }influence source for
the influence \emph{experienced at }stage $\ts=1$.)}

\label{fig:intra-stage-housesearch_influence-sources}
\end{figure}

In particular, we use `$v$' as the symbol to denote such `indirect'
or `second order' influences and will write $\mfAT v{\ts},\nmfAT v{\ts},\aAT i{\ts},\jaGT vt,\mfAT v{\ts+1}$
and $\nmfAT v{\ts+1}$ for the possible\footnote{Of course, in any given problem not all of these types of variables
are relevant. For instance, if there is no action $\aAT j{\ts}$ of
another agent~$j$ that would influence an ISD influence source,
then $\jaGT vt$ can be removed from the equations.} ancestors in the 2DBN of intra-stage sources~$\nmfAT u{\ts+1}$.

\begin{example}\fig~\ref{fig:intra-stage-housesearch_influence-sources}
illustrates the direct and indirect influence sources for \<housesearch>.
In order to be able to make accurate predictions of the influence
destination $\argsT f2$, at stage $\ts=1$ we should be able to predict
$\argsAT l11\,(\nmfAT v1)$ and $\aAT11\,(\jaGT vt)$ as accurately
as possible. Given that we assume access to the policy of agent~$1$,
we can equivalently predict $\nmfAT v1=\argsAT l11,\aoHistGT v1=\aoHistAT11$.\end{example}

Now, in order to define the influence, we will need to consider the
probability of such $\nmfAT u{\ts+1}$ given variables that we know
how to predict at stage $\ts$. In general it is given by: 
\begin{equation}
\Pr(\nmfAT u{\ts+1}|\mfAT v{\ts},\nmfAT v{\ts},\aAT i{\ts},\jaGT vt,\mfAT v{\ts+1})=\sum_{\nmfAT v{\ts+1}}\Pr(\nmfAT u{\ts+1},\nmfAT v{\ts+1}|\mfAT v{\ts},\nmfAT v{\ts},\aAT i{\ts},\jaGT vt,\mfAT v{\ts+1}),\label{eq:P(intra-stage-source)}
\end{equation}
where:
\begin{itemize}
\item $\Pr(\nmfAT u{\ts+1},\nmfAT v{\ts+1}|\mfAT v{\ts},\nmfAT v{\ts},\aAT i{\ts},\jaGT vt,\mfAT v{\ts+1})$
is the product of CPTs of (both direct and indirect) intra-stage sources\textemdash in
\fig~\ref{fig:intra-stage-housesearch_influence-sources} this is
simply $\Pr(\argsAT l12|\argsAT l11,\aAT11)$,
\item $\mfAT v{\ts}$ are those state factors at stage $\ts$ (``in the
left-hand slice of the 2DBN'') that are modeled by agent~$i$, and
are ancestor to an intra-stage influence source of agent~$i$ at
stage $\ts+1$ (``in the right-hand slice of the 2DBN'')\textemdash in
\fig~\ref{fig:intra-stage-housesearch_influence-sources} no such
variables exist,
\item $\nmfAT vt$ are those state factors in the left-hand slice of the
2DBN that are not modeled by agent~$i$, but are ancestor to an influence
destination of agent~$i$\textemdash in \fig~\ref{fig:intra-stage-housesearch_influence-sources}
this is $\argsAT l11$,
\item $\mfAT v{\ts+1},$ $\nmfAT v{\ts+1}$ are the modeled, respectively
non-modeled state factors at state $\ts+1$ that are ancestors to
an intra-stage influence source\textemdash in \fig~\ref{fig:intra-stage-housesearch_influence-sources}
no such variables exist,
\item $\aAT i{\ts}$ might directly or indirectly affect an an intra-stage
influence source, in which case it needs to be included in \eqref{eq:P(intra-stage-source)}\textemdash in
\fig~\ref{fig:intra-stage-housesearch_influence-sources} this is
not the case, and
\item $\jaGT vt$ are the actions of other agents that are ancestors of
an intra-stage influence source\textemdash in \fig~\ref{fig:intra-stage-housesearch_influence-sources}
this is $\aAT11$.
\end{itemize}
We will also write $\aoHistAT v{\ts}$ for the AOHs of the agents
$v$ that correspond to $\jaGT vt$ (i.e., those agents of which the
action is an ancestor in the 2DBN of an influence destination of agent~$i$).

\subparagraph{All sources}

So far we have introduced notation using $u$ for direct sources and
using $v$ for indirect sources. We will also want to consider the
union of direct and indirect sources, and for these purposes we will
write $w$. For example, we will write $\jaGT wt=\left\langle \jaGT ut,\jaGT vt\right\rangle $
for the actions of agents that either directly or indirectly influence
an influence destination.

\subsubsection{The D-Separating Set}

We now build on this insight to define the d-separating set in problems
with intra-stage dependencies:

\begin{definition}[d-separating set]\label{dfn:d-separating-set--IS}The
\emph{d-separating set for agent $i$}, $\dsetA i$, is a subset of
variables (state factors and/or actions), such that the history of
these variables d-separates $\nmfAT wt,\aoHistAT w{\ts}$ from $\mfAT i{\ts},\aoHistAT it$.
I.e., it is defined in such a way that 
\begin{equation}
\forall_{\nmfAT wt,\aoHistAT w{\ts}}\qquad\Pr(\nmfAT wt,\aoHistAT w{\ts}|\mfAT it,\aoHistAT it,\dsetAT i{\ts+1},\bO,\jpolG{\excl i})=\Pr(\nmfAT wt,\aoHistAT w{\ts}|\dsetAT i{\ts+1},\bO,\jpolG{\excl i}).\label{eq:dset-def--intrastage}
\end{equation}

\end{definition}

As before this should be interpreted to mean: $\dsetAT i{\ts+1}$
d-separates $\nmfAT wt,\aoHistAT w{\ts}$ from those parts of $\mfAT i{\ts},\aoHistAT it$
(i.e, of the local model) not contained in $\dsetAT i{\ts+1}$.

Comparing \dfn\ref{dfn:d-separating-set--IS} with the earlier \dfn\ref{dfn:d-separating-set-NO-IS},
we see they are pleasingly similar; all that changed is that $u$'s
have been replaced with $w$'s to now take into account the possibility
of indirect sources.

\subsubsection{Definition of Influence under ISDs}

With this as background, we are now in position to define the concept
of influence in all its generality:

\begin{definition}[Experienced Influence under ISDs]

\label{dfn:influence} The \emph{influence experienced by agent~$i$
at stage $\ts+1$} is a conditional probability distribution over
the direct influence sources:
\begin{multline}
\iffunc(\ifsAT i{\ts+1}|\dsetAT i{\ts+1},\mfAT v{\ts},\aAT i{\ts},\mfAT v{\ts+1})\defas\Pr(\left\langle \nmfAT u{\ts},\jaGT ut,\nmfAT u{\ts+1}\right\rangle |\dsetAT i{\ts+1},\mfAT v{\ts},\aAT i{\ts},\mfAT v{\ts+1},\bO,\jpolG{\excl i})\\
=\sum_{\left\langle \nmfAT v{\ts},\jaGT vt,\nmfAT v{\ts+1}\right\rangle }\Pr(\nmfAT u{\ts+1},\nmfAT v{\ts+1}|\mfAT v{\ts},\nmfAT v{\ts},\aAT i{\ts},\jaGT vt,\mfAT v{\ts+1})\sum_{\aoHistAT w{\ts}}\jpolG w(\jaGT wt|\aoHistAT w{\ts})\Pr(\nmfAT wt,\aoHistAT w{\ts}|\dsetAT i{\ts+1},\bO,\jpolG{\excl i})\label{eq:definition-of-influence}
\end{multline}
where
\begin{itemize}
\item $u$ denote (direct) influence sources;
\item $v$ denote the (indirect) `second order' sources;
\item $w$ (as above) denotes the union of $u$ and $v$;
\item $\Pr(\nmfAT u{\ts+1},\nmfAT v{\ts+1}|\mfAT v{\ts},\nmfAT v{\ts},\aAT i{\ts},\jaGT vt,\mfAT v{\ts+1})$
is the term necessary to predict the intra-stage sources. It is a
term that consists of the product of CPTs;
\item $\jpolG w(\jaGT wt|\aoHistAT w{\ts})=\prod_{i\in w}\polA i(\aAT i{\ts}|\aoHistAT i{\ts})=\jpolG u(\jaGT ut|\aoHistAT u{\ts})\jpolG v(\jaGT vt|\aoHistAT v{\ts})$
is the product of action probabilities according to the policies of
the other agents that are relevant directly (the $u$) or indirectly
for intra-stage sources (the $v$); and
\item $\Pr(\nmfAT wt,\aoHistAT w{\ts}|\dsetAT i{\ts+1},\bO,\jpolG{\excl i})=\Pr(\nmfAT ut,\nmfAT vt,\aoHistAT u{\ts},\aoHistAT v{\ts}|\dsetAT i{\ts+1},\bO,\jpolG{\excl i})$
predicts the non-modeled factors that are relevant directly (the $u$)
or indirectly for intra-stage sources (the $v$), as well as the histories
for the relevant agents.
\end{itemize}
Tying back to the example of \fig\ref{fig:intra-stage-housesearch_influence-sources},
\eqref{eq:definition-of-influence} reduces to
\[
\iffunc(\argsAT l12|\dsetAT22)=\sum_{\argsAT l11,\aAT11}\Pr(\argsAT l12|\argsAT l11,\aAT11)\sum_{\aoHistAT11}\jpolG1(\jaGT11|\aoHistAT11)\Pr(\argsAT l11,\aoHistAT11|\dsetAT12,\bO,\jpolG1).
\]

We use $\ifpiAT i{\ts+1}(\jpolG{\excl i})$ to denote the conditional
distribution $\iffunc(\cdot|\dsetAT i{\ts+1},\mfAT i{\ts},\aAT i{\ts},\mfAT i{\ts+1})$.
\end{definition}

We make a few observations:
\begin{itemize}
\item The term $\Pr(\nmfAT u{\ts+1},\nmfAT v{\ts+1}|\mfAT v{\ts},\nmfAT v{\ts},\aAT i{\ts},\jaGT vt,\mfAT v{\ts+1})$
can be simplified as given by \eqref{eq:P(intra-stage-source)}, but
it is important to keep in mind that this resulting term requires
actual inference and is not the product of CPTs anymore.
\item Note that, in many cases, we will consider other agents that use deterministic
policies, however, we chose to give the more general description that
also allows for stochastic policies. In case of deterministic policies,
the summation over $\jaGT vt$ can be omitted, $\jaGT{v/w}t$ can
be replaced by $\jpolGT{v/w}{\ts}(\oHistAT{v/w}{\ts})$, and $\aoHistAT w{\ts}$
becomes $\oHistAT w{\ts}$~\citep{Oliehoek12RLBook}. 
\item The dependence of $\iffunc(\ifsAT i{\ts+1}|\dsetAT i{\ts+1},\mfAT v{\ts},\aAT i{\ts},\mfAT v{\ts+1})$
on $\aAT i{\ts}$ is only needed when $\aAT i{\ts}$ is an indirect
source (i.e., it is an ancestor of $\nmfAT u{\ts+1}\text{ or }\nmfAT v{\ts+1}$).
\end{itemize}

\subsubsection{Exerted vs. Experienced Influence}

Here we make a reinterpretation of the experienced influence at stage
$t+1$ as the result of the influence exerted at stage $\ts$ plus
the effect of the intra-stage effects. While this does not fundamentally
change anything about the definition of influence per \dfn\ref{dfn:influence},
it may provide some insight on the nature with which influence manifests
itself in settings with intra-stage connections, and provide guidance
for possible implementations.

In particular, it is possible to define a distribution, only in terms
of variables at stage $\ts$, which acts as a sufficient statistic
to predict the intra-stage source. The intuition is that the \emph{experienced
influence,} can be thought of as being induced by the \emph{exerted
influence:}\textbf{ }
\begin{itemize}
\item \textbf{Exerted Influence (at stage~$t$):
\begin{multline}
\Pr(\nmfAT wt,\jaGT w{\ts}|\dsetAT i{\ts+1},\bO,\jpolG{\excl i})=\Pr(\nmfAT u{\ts},\nmfAT vt,\jaGT u{\ts},\jaGT v{\ts}|\dsetAT i{\ts+1},\bO,\jpolG{\excl i})\\
=\sum_{\aoHistAT w{\ts}}\jpolG w(\jaGT w{\ts}|\aoHistAT w{\ts})\Pr(\nmfAT wt,\aoHistAT w{\ts}|\dsetAT i{\ts+1},\bO,\jpolG{\excl i}).\label{eq:exerted_influence}
\end{multline}
}
\item \textbf{Experienced Influence (at $t+1$):
\begin{multline}
\iffunc(\ifsAT i{\ts+1}|\dsetAT i{\ts+1},\mfAT v{\ts},\aAT i{\ts},\mfAT v{\ts+1})=\Pr(\nmfAT u{\ts},\jaGT u{\ts},\nmfAT u{\ts+1}|\dsetAT i{\ts+1},\mfAT v{\ts},\mfAT v{\ts+1},\bO,\jpolG{\excl i})\\
=\sum_{\left\langle \nmfAT v{\ts},\jaGT vt,\nmfAT v{\ts+1}\right\rangle }\Pr(\nmfAT u{\ts+1},\nmfAT v{\ts+1}|\mfAT v{\ts},\nmfAT v{\ts},\aAT i{\ts},\jaGT vt,\mfAT v{\ts+1})\Pr(\nmfAT wt,\jaGT w{\ts}|\dsetAT i{\ts+1},\bO,\jpolG{\excl i}).\label{eq:experienced_influence}
\end{multline}
}
\end{itemize}
This last equation \eqref{eq:experienced_influence} clearly demonstrates
how the experienced influence is induced by the exerted influence.
The notion of exerted influence \eqref{eq:exerted_influence} lays
a clear link to IBA in settings without ISDs (cf. Equation \ref{eq:IncomingInfluence__non-isd})
and is conceptually useful since it isolates which information needs
to be retained for each stage $\ts$. As such, we expect that any
practical implementations for computing the influence by means of
filtering (belief tracking)~\citep{RussellNorvig09Book3rdEd,Thrun05BookPR}
would use this as the primary quantity of interest.

\subsection{Influence-Augmented Local Model (IALM)}

Here we define the influence-augmented local model under intra-stage
connections. Looking at \dfn\ref{dnf:IALM}, we can conclude that
the only changes that we need to make involve the transition function
\eqref{eq:IALM-T}.

In particular, we need to deal with the fact our definition of influence
\eqref{eq:definition-of-influence} can now be of the more complex
form $\iffunc(\ifsAT i{\ts+1}|\dsetAT i{\ts+1},\mfAT v{\ts},\aAT i{\ts},\mfAT v{\ts+1})$,
as given by \eqref{eq:experienced_influence}. This means that the
NLAF probability $\Pr(\mfnAT i{\ts+1}|\langle\mfAT i{\ts},\dsetAT i{\ts+1}\rangle,\aAT i{\ts},\ifpiAT i{\ts+1})$
given by \eqref{eq:P(NLAFS)__general} must be updated to deal with
this new form, and this in turn implies that the definition of $\bar{T}_{i}(\sAugAT i{t+1}|\sAugAT it,\aAT i{\ts})$
per \eqref{eq:IALM-T} needs to be updated too.

Let us start with the former. Like \eqref{eq:LFM:T:NLAFs}, this can
now depend on ISDs from OLAFs $\mflAT i{\ts+1}$

\noindent 
\begin{multline}
\Pr(\mfnAT i{\ts+1}|\langle\mfAT i{\ts},\dsetAT i{\ts+1}\rangle,\mflAT i{\ts+1},\aAT i{\ts},\ifpiAT i{\ts+1})\defas\\
\sum_{\ifsAT i{\ts+1}=\left\langle \nmfAT u{\ts},\jaGT u\ts,\nmfAT u{\ts+1}\right\rangle }\iffunc(\ifsAT i{\ts+1}|\dsetAT i{\ts+1},\mfAT v{\ts},\aAT i{\ts},\mfAT v{\ts+1})\Pr(\mfnAT i{\ts+1}|\mfAT i{\ts},\mflAT i{\ts+1},\aAT i{\ts},\ifsAT i{\ts+1}),\label{eq:P(NLAFS)__general-IS}
\end{multline}
with $\Pr(\mfnAT i{\ts+1}|\mfAT i{\ts},\mflAT i{\ts+1},\aAT i{\ts},\ifsAT i{\ts+1})$
simply the product of CPTs of the NLAFs, as given by \eqref{eq:LFM:T:NLAFs},
but now restricted to only $\nmfAT i{\ts},\nmfAT i{\ts+1},\jaGT{\excl i}\ts$
that are influence sources.

We are now in the position to define the IALM under intra-stage dependencies:

\begin{definition}[IALM]

\label{dnf:IALM-IS} Given an LFM with intra-stage dependencies, $\mathcal{M}^{LFM}$,
and profile of policies for other agents~$\jpolG{\excl i}$, an \emph{Influence-Augmented
Local Model (IALM) }for agent~$i$ is a POMDP $\mathcal{M}_{i}^{IALM}(\mathcal{M}^{LFM},\jpolG{\excl i})=\left\langle \bar{\sS},\aAS{i},\bar{T}_{i},\bar{R}_{i},\oAS{i},\bar{O}_{i},\hor,\lbAT i0\right\rangle $,
where
\begin{itemize}
\item $\bar{\sS},\aAS{i},\bar{R}_{i},\oAS{i},\bar{O}_{i},\hor,\lbAT i0$
are identical to those in \dfn\ref{dnf:IALM},
\item $\bar{T}_{i}$ is the transition function is defined as:
\end{itemize}
\begin{multline}
\bar{T}_{i}(\sAugAT i{t+1}|\sAugAT it,\aAT i{\ts})\defas\Pr(\mfAT i{\ts+1}|\langle\mfAT i{\ts},\dsetAT i{\ts+1}\rangle,\aAT i{\ts},\ifpiAT i{\ts+1})\KroD{\dsetAT i{\ts+2}}{d(\mfAT i{\ts},\aAT i{\ts},\mfAT i{\ts+1},\dsetAT i{\ts+1})}\\
=\Pr(\mfnAT i{\ts+1}|\langle\mfAT i{\ts},\dsetAT i{\ts+1}\rangle,\mflAT i{\ts+1},\aAT i{\ts},\ifpiAT i{\ts+1})\Pr(\mflAT i{\ts+1}|\mfAT i{\ts},\mfnAT i{\ts+1},\aAT i{\ts})\KroD{\dsetAT i{\ts+2}}{d(\mfAT i{\ts},\aAT i{\ts},\mfAT i{\ts+1},\dsetAT i{\ts+1})},\label{eq:IALM-T-IS}
\end{multline}
with the first term is given by \eqref{eq:P(NLAFS)__general-IS} and
the second term is given by \eqref{eq:LFM:T:OLAFs}.

\end{definition}

\subsection{Planning in an IALM with ISDs}

\label{sec:ISD:planning}

Since the only modifications that we needed to make to incorporate
ISDs were in the transition function, the conclusions about how to
plan in IALM made in \sect\ref{sec:IBA:planning} remain valid. In
particular, the IALM is still a POMDP, with a well-defined belief-update
function, and value functions. The solution of the IALM still gives
the influence-based best-response value, defined in \eqref{eq:V(b0)_inf}
as the value of the initial local-form belief: $\VA i(\ifpiA i(\polA{\excl i}))\defas\VAT i0(\lbAT i0).$

\section{Sufficiency of Influence-Based Abstraction}

\label{sec:Sufficiency}

In this section, we will show that influence-based abstraction is
\emph{completely lossless. }By that we mean that an IALM constructed
according to \dfn\ref{dnf:IALM-IS} can be used to accurately predict
rewards and observations, and thus to compute an exact, optimal (best-response)
value.

The latter is our main result, \thm\ref{thm:V_IALM=00003DV_GFBRM},
which shows that the optimal values for the GFBRM and the IALM are
equal, thus establishing that one can use the IALM to plan (or learn)
without any loss in value. In other words, it proves that the definition
of influence constitutes a sufficient statistic for predicting the
optimal value, and thus that the resulting IALM achieves a best-response
against the policy $\polA{\excl i}$ that generated the influence
$\ifpiA{i}(\polA{\excl i})$.

\begin{theorem}\label{thm:V_IALM=00003DV_GFBRM}For a finite-horizon
POSG, the solution of the IALM for the incoming influence point $\ifpiA{i}(\polA{\excl i})$
associated with any $\polA{\excl i}$ achieves the same value $V_{i}(\ifpiA{i}(\polA{\excl i}))$,
given by \eqref{eq:V(b0)_inf}, as the best-response value $V_{i}(\polA{\excl i})$,
given by \eqref{eq:V(b0)_JESP}, computed against $\polA{\excl i}$
directly:
\begin{equation}
\forall_{\polA{\excl i}}\quad V_{i}(\ifpiA{i}(\polA{\excl i}))=V_{i}(\polA{\excl i}).\label{eq:thm:sufficiency}
\end{equation}
 \end{theorem}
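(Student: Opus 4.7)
The plan is to prove the theorem by backward induction on the stage $\ts$, showing that the value functions of the IALM and GFBRM agree on corresponding pairs of beliefs. The central idea is to establish a projection $\phi$ that maps each global-form belief $\gbA i$ over $\sAugAT it = \langle \sT\ts, \aoHistAT{\excl i}\ts \rangle$ to a local-form belief $\lbA i = \phi(\gbA i)$ over $\sAugAT it = \langle \mfAT i\ts, \dsetAT i{\ts+1}\rangle$, obtained by marginalizing out the non-modeled factors and the parts of the AOH of other agents that are not captured in $\dsetAT i{\ts+1}$. First I would verify that $\lbAT i 0 = \phi(\gbAT i 0)$; this follows directly from the construction of the initial local-form belief from $\bO$ described in the IALM definition.

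The bulk of the work lies in proving an invariance lemma: whenever $\lbA i = \phi(\gbA i)$, the following three quantities agree between the GFBRM and IALM, namely (i) the expected immediate reward $R_i(\cdot,\aA i)$, (ii) the expected observation probability $\Pr(\oA i \mid \cdot,\aA i)$, and (iii) the next-stage correspondence, i.e.\ $\phi(BU_{\text{GFBRM}}(\gbA i,\aA i,\oA i)) = BU_{\text{IALM}}(\lbA i,\aA i,\oA i)$. For (i) and (ii), both the GFBRM quantities in \eqref{eq:R(gfb,a)__LFM} and \eqref{eq:P(o|gfb,a)__LFM} and the IALM quantities in \eqref{eq:R(lfb,a)} and \eqref{eq:P(o|lfb,a)} depend on the global/local belief only via the two-stage marginals $\Pr(\mfAT i\ts,\mfAT i{\ts+1}\mid\cdot,\aAT it)$ and $\Pr(\mfAT i{\ts+1}\mid\cdot,\aAT it)$ respectively. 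Thus (i) and (ii) reduce to showing that these marginals match under $\phi$.

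The key technical step is therefore demonstrating that the marginal transition probability $\Pr(\mfAT i{\ts+1}\mid\gbA i,\aAT it,\jpolG{\excl i})$, derived by summing \eqref{eq:P_fm__gfb} against the GFBRM dynamics, equals $\Pr(\mfAT i{\ts+1}\mid\lbA i,\aAT it,\ifpiAT i{\ts+1})$ from \eqref{eq:P_fm__lb} under the IALM dynamics \eqref{eq:IALM-T-IS}. Here is where the definition of the d-separating set $\dsetAT i{\ts+1}$ and \dfn\ref{dfn:influence} do all the work: by the d-separation property \eqref{eq:dset-def--intrastage}, the conditional distribution over $\nmfAT wt$ and $\aoHistAT w\ts$ given everything in the local model depends on the local model only through $\dsetAT i{\ts+1}$. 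Summing out these variables against the factored joint distribution $\Pr(\sT{\ts+1}\mid\sT\ts,\aAT it,\jaGT{\excl i}t)$ and the policies $\jpolG{\excl i}$ gives precisely the induced CPT of \eqref{eq:P(NLAFS)__general-IS} multiplied by the OLAF factor \eqref{eq:LFM:T:OLAFs}, i.e.\ the IALM transition. This is where I expect the main obstacle to be: one must carefully split $\sT{\ts+1}$ into OLAFs, NLAFs, and NMFs, handle intra-stage ancestors $v$ correctly, and verify that no correlation among NLAFs is lost when the summation is rearranged so as to expose the $\iffunc(\ifsAT i{\ts+1}\mid\dots)$ factor of \eqref{eq:definition-of-influence}.

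Once (i)--(iii) are established, a standard backward induction on the finite horizon completes the proof: at $\ts=h$ both value functions are zero; assuming $V_i^{\ts+1}(\phi(\gbA i)) = V_i^{\ts+1}(\gbA i)$ for every $\gbA i$, the Bellman recursion together with (i), (ii), and (iii) yields $Q_i^\ts(\phi(\gbA i),\aA i) = Q_i^\ts(\gbA i,\aA i)$, and hence equality of the value functions at stage $\ts$. Applying this at $\ts=0$ with $\lbAT i 0 = \phi(\gbAT i 0)$ delivers \eqref{eq:thm:sufficiency}.
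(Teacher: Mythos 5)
Your proposal is correct and follows essentially the same route as the paper: a backward induction on the stage whose base and inductive cases are discharged by showing that expected rewards and observation probabilities agree, which in turn reduces to the key lemma that the two-stage local-state marginals $\Pr(\mfAT i{\ts},\mfAT i{\ts+1}\mid\cdot,\aAT it)$ coincide under the d-separation property and the definition of influence (the paper's \lem\ref{lem:localTransitionSufficiency_Pxmxm}). Your explicit projection $\phi$ and the commutation requirement (iii) for belief updates are just a more explicit rendering of the paper's convention of pairing ``the local-form and global-form beliefs induced by the same action-observation history,'' so the two arguments are the same in substance.
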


We note that this results also holds in the presence of intra-stage
connections. To prove the result (in \sect\ref{sec:suff:proof-of-theorem})
we will show that the immediate reward terms and observation probabilities
are equal (\sect\ref{sec:suff:rewards-and-obs}). In turn, to show
this, we will need to show that transition probabilities are the same
given a local-form belief and a global-form belief, which means that
the local-form belief is a sufficient statistic to predict the next
local state (\sect\ref{sec:suff:local-transitions}). In order to
allow the rewriting to take place, we first show how the global-form
belief can be factorized.

We believe that this proof by itself is useful: it isolates the core
technical property that needs to hold for sufficiency in \lem\ref{lem:localTransitionSufficiency_Pxmxm}
in \sect\ref{sec:suff:local-transitions}. In this way it 1) conveys
insight into the nature of how abstraction of latent state factors
affects value, 2) provides a derivation that can be used to obtain
simplifications of the definition of influence (\dfn\ref{dfn:influence})
in simpler cases, and 3) provides a recipe of how to prove similar
results in problems which add even more complexities.

\subsection{Factorization of the Global-Form Belief}

\label{sec:Suffciency:factorization-of-global-form-belief}

In order to prove the equivalence of the GFBRM and the IALM, we will
show that their value functions are the same. In order to do that,
it will be necessary to decompose the global-form belief~$\gbA i$
in components.

To do that, we make use of the insight that, for any $\dsetAT i{\ts+1}$,
the law of total probability allows us to write
\begin{equation}
\gbA i(\sT{\ts},\aoHistAT{\excl i}{\ts})=\sum_{\dsetAT i{\ts+1}}\bA i(\left\langle \mfAT i{\ts},\nmfAT it\right\rangle ,\aoHistAT{\excl i}{\ts},\dsetAT i{\ts+1})=\sum_{\dsetAT i{\ts+1}}\bA i(\mfAT i{\ts},\dsetAT i{\ts+1})\bA i(\nmfAT it,\aoHistAT{\excl i}{\ts}|\mfAT i{\ts},\dsetAT i{\ts+1}).\label{eq:gfb_factored_Dt}
\end{equation}
(We drop the superscript `g' because we are rewriting to something
that we do not call global-form belief anymore.)

\noindent Also, it is important to remember that the belief is \emph{defined}
as 
\[
\gbA i(\sT{\ts},\aoHistAT{\excl i}{\ts})\defas\Pr(\sT{\ts},\aoHistAT{\excl i}{\ts}|\aoHistAT it,\bO,\jpolG{\excl i}),
\]
 which means that in \eqref{eq:gfb_factored_Dt}, the definitions
of the components are 
\begin{eqnarray}
\bA i(\mfAT i{\ts},\dsetAT i{\ts+1}) & \defas & \Pr(\mfAT i{\ts},\dsetAT i{\ts+1}|\aoHistAT it,\bO,\jpolG{\excl i}),\label{eq:b(fi,mi,Di)}\\
\bA i(\nmfAT it,\aoHistAT{\excl i}{\ts}|\mfAT i{\ts},\dsetAT i{\ts+1}) & \defas & \Pr(\nmfAT it,\aoHistAT{\excl i}{\ts}|\mfAT i{\ts},\dsetAT i{\ts+1},\aoHistAT it,\bO,\jpolG{\excl i}).\label{eq:b(fj,oj)}
\end{eqnarray}

These equations further clarify how to think about inclusion of actions
$\aA i$ and observations $\oA i$ inside the d-separating set $\dsetAT i{\ts+1}$:
the belief \emph{per definition} conditions on the history of actions
and observations, as such these can be included in $\dsetAT i{\ts+1}$
without further problems. In particular, suppose that $a_{i}^{k}$
is part of d-separating set $\dsetAT i{\ts+1}$, then this will lead
to $\Pr(\mfAT i{\ts},\left\langle \dots a_{i}^{k}\dots\right\rangle |\left\langle \dots a_{i}^{k}\dots\right\rangle ,\bO,\jpolG{\excl i})$
in \eqref{eq:b(fj,oj)}. However, the interpretation is simply that
this does not influence the probabilities, since $P(x|x)=1$. Similarly,
it would lead to a term $\Pr(\nmfAT it,\aoHistAT{\excl i}{\ts}|\mfAT i{\ts},\left\langle \dots a_{i}^{k}\dots\right\rangle ,\left\langle \dots a_{i}^{k}\dots\right\rangle ,\bO,\jpolG{\excl i})$
in \eqref{eq:b(fj,oj)}. Again, this poses no problem, since $\Pr(y|x,x)=\Pr(y|x)$.
However, let us repeat that we do need all observation relevant state
factors in the local state: otherwise we cannot define the local observation
model $\bar{O}_{i}$ and track the local-form belief $\bA i(\mfAT i{\ts},\dsetAT i{\ts+1})$
(cf. \dfn\ref{dfn:lfm} and \dfn\ref{dnf:IALM}). 

\subsection{Sufficiency for Prediction of Local State Transitions}

\label{sec:suff:local-transitions}

In this section, we show that the influence together with the local-form
belief is sufficient to predict local state transitions. We first
prove the following lemma, that shows that pairwise marginal distributions
over states are the same in the IALM and the GFBRM. This will then
be used in other proofs.

\begin{lemma}

\label{lem:localTransitionSufficiency_Pxmxm} 

The joint distribution over current local state and next local state
induced by a local-form belief is identical to that of the global-form
belief:

\begin{equation}
\forall_{\aoHistAT i{\ts}}\forall_{\mfAT i{\ts},\mfAT i{\ts+1}}\qquad\Pr(\mfAT i{\ts},\mfAT i{\ts+1}|\gbA i,\aAT it,\jpolG{\excl i})=\Pr(\mfAT i{\ts},\mfAT i{\ts+1}|\lbA i,\aAT it,\ifpiAT i{\ts+1}),
\end{equation}
where $\lbA i,\,\gbA i$ denote the for the local-form and global-form
beliefs induced by $\aoHistAT i{\ts}$.

\end{lemma}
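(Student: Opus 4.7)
The plan is to start from the defining expression \eqref{eq:P_xmxm_gfb} of the LHS and massage it into the defining expression \eqref{eq:P_xmxm_lfb} of the RHS. The pivotal move is to make the d-separating set $\dsetAT i{\ts+1}$ explicit as a summation variable by substituting the factorization \eqref{eq:gfb_factored_Dt} of the global-form belief. After swapping the order of summation, the local-form belief $\lbA i(\mfAT i{\ts},\dsetAT i{\ts+1}) = \bA i(\mfAT i{\ts},\dsetAT i{\ts+1})$ of \eqref{eq:b(fi,mi,Di)} appears as the outer factor, so the whole problem reduces to showing that the remaining inner summation equals the IALM transition $\Pr(\mfAT i{\ts+1}|\langle\mfAT i{\ts},\dsetAT i{\ts+1}\rangle,\aAT i{\ts},\ifpiAT i{\ts+1})$ that appears in \eqref{eq:IALM-T}.

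For each fixed $\dsetAT i{\ts+1}$ the inner summation contains three ingredients: the one-step transition $\Pr(\mfAT i{\ts+1}|\sT{\ts},\aA i,\jaG{\excl i})$, the product policy $\prod_{j\neq i}\polA j(\aAT j{\ts}|\aoHistAT j{\ts})$, and the conditional $\bA i(\nmfAT i{\ts},\aoHistAT{\excl i}{\ts}|\mfAT i{\ts},\dsetAT i{\ts+1})$ of \eqref{eq:b(fj,oj)}. By the 2DBN structure the transition depends on $(\nmfAT i{\ts},\jaG{\excl i})$ only through the influence sources $\ifsAT i{\ts+1} = \langle \nmfAT u{\ts},\jaGT u{\ts}\rangle$. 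I would therefore split $\nmfAT i{\ts}$ and $\aoHistAT{\excl i}{\ts}$ into source and non-source components; the non-source actions each sum to $1$, and marginalising the non-source factors and histories out of the belief conditional leaves an expression involving only $\nmfAT u{\ts},\jaGT u{\ts},\aoHistAT u{\ts}$, weighted by $\bA i(\nmfAT u{\ts},\aoHistAT u{\ts}|\mfAT i{\ts},\dsetAT i{\ts+1})$ and $\jpolG u(\jaGT u{\ts}|\aoHistAT u{\ts})$.

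The d-separation property \eqref{eq:dset-def} now lets me drop the conditioning on $(\mfAT i{\ts},\aoHistAT i{\ts})$ in the remaining belief conditional, replacing it by $\Pr(\nmfAT u{\ts},\aoHistAT u{\ts}|\dsetAT i{\ts+1},\bO,\jpolG{\excl i})$. The remaining sum over $\aoHistAT u{\ts}$ combined with $\jpolG u$ is then exactly the incoming influence $\iffunc(\ifsAT i{\ts+1}|\dsetAT i{\ts+1})$ of \eqref{eq:IncomingInfluence__non-isd}. Multiplying by the source-conditioned transition and using the OLAF/NLAF factorization of \eqref{eq:P_xxm_sa__LFM_transitionProbs} together with the induced NLAF probability \eqref{eq:P(NLAFS)__general} recovers the IALM transition \eqref{eq:IALM-T}, completing the no-ISD case.

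The main obstacle I foresee is lifting the argument uniformly to the ISD case. There the transition depends additionally on intra-stage sources $\nmfAT u{\ts+1}$ via \eqref{eq:P(intra-stage-source)}, which in turn depend on indirect sources (the $v$-variables of \sect\ref{sec:IBA-with-IS-deps}). One then has to partition $\nmfAT i{\ts}$ and $\aoHistAT{\excl i}{\ts}$ into direct-source, indirect-source, and irrelevant components; apply the more general d-separation property of \dfn\ref{dfn:d-separating-set--IS} with $w=u\cup v$; and carry the extra conditioning on $(\mfAT v{\ts},\aAT i{\ts},\mfAT v{\ts+1})$ of \eqref{eq:definition-of-influence} through the same manipulation. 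Invoking \eqref{eq:P(NLAFS)__general-IS} then yields \eqref{eq:IALM-T-IS} in exactly the same shape as the no-ISD case, so no genuinely new idea beyond careful bookkeeping should be needed.
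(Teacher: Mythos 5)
Your proposal follows essentially the same route as the paper's proof: substitute the factorization \eqref{eq:gfb_factored_Dt} of the global-form belief through the d-separating set, isolate the externally-influenced part, marginalize out non-source factors and histories, apply the d-separation property to drop the conditioning on $(\mfAT i{\ts},\aoHistAT i{\ts})$, and recognize the resulting expression as the influence term composed with \eqref{eq:P(NLAFS)__general-IS} to recover the IALM transition. The paper carries out the general ISD case directly using the $w=u\cup v$ machinery you correctly identify as the needed extension, so your outline is accurate and complete.
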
 \begin{proof} To improve readability we will omit some
time indices that do not cause confusion. We assume arbitrary $\aoHistAT i{\ts},\mfAT i{\ts},\mfAT i{\ts+1}$,
and start with the left-hand side, which is given by \eqref{eq:P_xmxm_gfb}:
{\scriptsize
\begin{align}
 & \sum_{\nmfAT i{\ts}}\sum_{\jaG{\excl i}}\Pr(\mfAT i{\ts+1}|\sT{\ts},\aA i,\jaG{\excl i})\sum_{\aoHistAT{\excl i}{\ts}}\Pr(\jaG{\excl i}|\aoHistAT{\excl i}{\ts},\jpolG{\excl i})\gbA i(\sT{\ts},\aoHistAT{\excl i}{\ts})\nonumber \\
= & \text{\{via \eqref{eq:gfb_factored_Dt}\}}\nonumber \\
 & \sum_{\nmfAT i{\ts}}\sum_{\jaG{\excl i}}\Pr(\mfAT i{\ts+1}|\sT{\ts},\aA i,\jaG{\excl i})\sum_{\aoHistAT{\excl i}{\ts}}\Pr(\jaG{\excl i}|\aoHistAT{\excl i}{\ts},\jpolG{\excl i})\left[\sum_{\dsetAT i{\ts+1}}\bA i(\mfAT i{\ts},\dsetAT i{\ts+1})\bA i(\nmfAT it,\aoHistAT{\excl i}{\ts}|\mfAT i{\ts},\dsetAT i{\ts+1})\right]\\
= & \text{\{via \eqref{eq:P_xm_sa}\}}\nonumber \\
 & \sum_{\nmfAT i{\ts}}\sum_{\jaG{\excl i}}\left[\sum_{\nmfAT i{\ts+1}}\Pr(\nmfAT i{\ts+1},\mfAT i{\ts+1}|\sT{\ts},\aA i,\jaG{\excl i})\right]\sum_{\aoHistAT{\excl i}{\ts}}\Pr(\jaG{\excl i}|\aoHistAT{\excl i}{\ts},\jpolG{\excl i})\sum_{\dsetAT i{\ts+1}}\bA i(\mfAT i{\ts},\dsetAT i{\ts+1})\bA i(\nmfAT it,\aoHistAT{\excl i}{\ts}|\mfAT i{\ts},\dsetAT i{\ts+1})\\
= & \text{\{via \eqref{eq:P_xxm_sa__LFM_transitionProbs}\}}\nonumber \\
 & \sum_{\nmfAT i{\ts}}\sum_{\jaG{\excl i}}\sum_{\nmfAT i{\ts+1}}\Pr(\mflAT i{\ts+1}|\mfAT i{\ts},\aA i,\mfnAT i{\ts+1})\Pr(\mfnAT i{\ts+1}|\mfAT i{\ts},\mflAT i{\ts+1},\aA i,\nmfAT u{\ts},\nmfAT u{\ts+1},\jaG u)\Pr(\nmfAT i{\ts+1}|\mfAT i{\ts},\nmfAT i{\ts},\aA i,\jaG{\excl i},\mfAT i{\ts+1})\nonumber \\
 & \sum_{\aoHistAT{\excl i}{\ts}}\Pr(\jaG{\excl i}|\aoHistAT{\excl i}{\ts},\jpolG{\excl i})\sum_{\dsetAT i{\ts+1}}\bA i(\mfAT i{\ts},\dsetAT i{\ts+1})\bA i(\nmfAT it,\aoHistAT{\excl i}{\ts}|\mfAT i{\ts},\dsetAT i{\ts+1})\\
= & \text{\{reordering terms\}}\nonumber \\
 & \sum_{\dsetAT i{\ts+1}}\Pr(\mflAT i{\ts+1}|\mfAT i{\ts},\aA i,\mfnAT i{\ts+1})\bA i(\mfAT i{\ts},\dsetAT i{\ts+1})\nonumber \\
 & \left[\sum_{\jaG{\excl i}}\sum_{\aoHistAT{\excl i}{\ts}}\sum_{\nmfAT i{\ts}}\sum_{\nmfAT i{\ts+1}}\Pr(\mfnAT i{\ts+1}|\mfAT i{\ts},\mflAT i{\ts+1},\aA i,\nmfAT u{\ts},\nmfAT u{\ts+1},\jaG u)\Pr(\nmfAT i{\ts+1}|\mfAT i{\ts},\nmfAT i{\ts},\aA i,\jaG{\excl i},\mfAT i{\ts+1})\Pr(\jaG{\excl i}|\aoHistAT{\excl i}{\ts},\jpolG{\excl i})\bA i(\nmfAT it,\aoHistAT{\excl i}{\ts}|\mfAT i{\ts},\dsetAT i{\ts+1})\right]\label{eq:eq-with-bracketed-part}
\end{align}
}

This equation has grouped together all the probabilities that are
affected by the non-local part of the problem in between the brackets.
The terms before do not depend on the external part at all. We will
now further investigate the externally influenced (bracketed) part:

{\scriptsize

\begin{align}
 & \sum_{\jaG{\excl i}}\sum_{\aoHistAT{\excl i}{\ts}}\sum_{\nmfAT i{\ts}}\sum_{\nmfAT i{\ts+1}}\Pr(\mfnAT i{\ts+1}|\mfAT i{\ts},\mflAT i{\ts+1},\aA i,\ifsAT i{\ts+1})\Pr(\nmfAT i{\ts+1}|\mfAT i{\ts},\nmfAT i{\ts},\aA i,\jaG{\excl i},\mfAT i{\ts+1})\jpolG{\excl i}(\jaG{\excl i}|\aoHistAT{\excl i}{\ts})\bA i(\nmfAT it,\aoHistAT{\excl i}{\ts}|\mfAT i{\ts},\dsetAT i{\ts+1})\\
= & \sum_{\jaG{\excl i}}\sum_{\nmfAT i{\ts}}\sum_{\nmfAT i{\ts+1}}\Pr(\mfnAT i{\ts+1}|\mfAT i{\ts},\mflAT i{\ts+1},\aA i,\ifsAT i{\ts+1})\Pr(\nmfAT i{\ts+1}|\mfAT i{\ts},\nmfAT i{\ts},\aA i,\jaG{\excl i},\mfAT i{\ts+1})\sum_{\aoHistAT{\excl i}{\ts}}\jpolG{\excl i}(\jaG{\excl i}|\aoHistAT{\excl i}{\ts})\bA i(\nmfAT it,\aoHistAT{\excl i}{\ts}|\mfAT i{\ts},\dsetAT i{\ts+1})
\end{align}
}In this equation, not all non-modeled factors $\nmfAT i{\ts+1}$
are relevant: we can restrict to the intra-stage sources $\nmfAT u{\ts+1}$
and their intra-stage ancestors $\nmfAT v{\ts+1}$, other factor's
probabilities just sum to 1. This yields: {\scriptsize

\begin{align}
 & \sum_{\jaG{\excl i}}\sum_{\nmfAT i{\ts}}\sum_{\nmfAT u{\ts+1}}\Pr(\mfnAT i{\ts+1}|\mfAT i{\ts},\mflAT i{\ts+1},\aA i,\ifsAT i{\ts+1})\sum_{\nmfAT v{\ts+1}}\Pr(\nmfAT u{\ts+1},\nmfAT v{\ts+1}|\mfAT i{\ts},\nmfAT i{\ts},\aA i,\jaG{\excl i},\mfAT i{\ts+1})\sum_{\aoHistAT{\excl i}{\ts}}\jpolG{\excl i}(\jaG{\excl i}|\aoHistAT{\excl i}{\ts})\bA i(\nmfAT it,\aoHistAT{\excl i}{\ts}|\mfAT i{\ts},\dsetAT i{\ts+1})\\
= & \text{\{restricting to \ensuremath{\jaG v,\mfAT v{\ts+1}} that actually influence \ensuremath{\ensuremath{\nmfAT u{\ts+1}}}. I.e.,\,\ensuremath{v}\, denotes other `second order' sources\}}\nonumber \\
 & \sum_{\jaG{\excl i}}\sum_{\nmfAT i{\ts}}\sum_{\nmfAT u{\ts+1}}\Pr(\mfnAT i{\ts+1}|\mfAT i{\ts},\mflAT i{\ts+1},\aA i,\ifsAT i{\ts+1})\sum_{\nmfAT v{\ts+1}}\Pr(\nmfAT u{\ts+1},\nmfAT v{\ts+1}|\mfAT v{\ts},\nmfAT v{\ts},\aA i,\jaG v,\mfAT v{\ts+1})\sum_{\aoHistAT{\excl i}{\ts}}\jpolG{\excl i}(\jaG{\excl i}|\aoHistAT{\excl i}{\ts})\bA i(\nmfAT it,\aoHistAT{\excl i}{\ts}|\mfAT i{\ts},\dsetAT i{\ts+1})\nonumber \\
= & \text{\{pushing in summations, recall \ensuremath{\ifsAT i{\ts+1}=\left\langle \nmfAT u{\ts},\jaG u,\nmfAT u{\ts+1}\right\rangle }\}}\nonumber \\
 & \sum_{\ifsAT i{\ts+1}}\Pr(\mfnAT i{\ts+1}|\mfAT i{\ts},\mflAT i{\ts+1},\aA i,\ifsAT i{\ts+1})\sum_{\jaG v}\sum_{\nmfAT v{\ts}}\sum_{\nmfAT v{\ts+1}}\Pr(\nmfAT u{\ts+1},\nmfAT v{\ts+1}|\mfAT v{\ts},\nmfAT v{\ts},\aA i,\jaG v,\mfAT v{\ts+1})\sum_{\aoHistAT{\excl i}{\ts}}\jpolG{\excl i}(\jaG{\excl i}|\aoHistAT{\excl i}{\ts})\bA i(\nmfAT it,\aoHistAT{\excl i}{\ts}|\mfAT i{\ts},\dsetAT i{\ts+1})\\
= & \text{\{marginalize out non-relevant terms\}}\nonumber \\
 & \sum_{\ifsAT i{\ts+1}}\Pr(\mfnAT i{\ts+1}|\mfAT i{\ts},\mflAT i{\ts+1},\aA i,\ifsAT i{\ts+1})\sum_{\jaG v}\sum_{\nmfAT v{\ts}}\sum_{\nmfAT v{\ts+1}}\Pr(\nmfAT u{\ts+1},\nmfAT v{\ts+1}|\mfAT v{\ts},\nmfAT v{\ts},\aA i,\jaG v,\mfAT v{\ts+1})\nonumber \\
 & \sum_{\aoHistAT u{\ts}}\sum_{\aoHistAT v{\ts}}\jpolG u(\jaGT ut|\aoHistAT u{\ts})\jpolG v(\jaGT vt|\aoHistAT v{\ts})\bA i(\nmfAT ut,\nmfAT vt,\aoHistAT u{\ts},\aoHistAT v{\ts}|\mfAT i{\ts},\dsetAT i{\ts+1})\\
= & \text{\{let \ensuremath{w=u\cup v} denote the union of direct and indirect sources\}}\nonumber \\
 & \sum_{\ifsAT i{\ts+1}}\Pr(\mfnAT i{\ts+1}|\mfAT i{\ts},\mflAT i{\ts+1},\aA i,\ifsAT i{\ts+1})\sum_{\jaG v}\sum_{\nmfAT v{\ts}}\sum_{\nmfAT v{\ts+1}}\Pr(\nmfAT u{\ts+1},\nmfAT v{\ts+1}|\mfAT v{\ts},\nmfAT v{\ts},\aA i,\jaG v,\mfAT v{\ts+1})\sum_{\aoHistAT w{\ts}}\jpolG w(\jaG w|\aoHistAT w{\ts},\jpolG w)\bA i(\nmfAT wt,\aoHistAT w{\ts}|\mfAT i{\ts},\dsetAT i{\ts+1})\\
= & \text{\{since \ensuremath{\bA i(\nmfAT wt,\aoHistAT w{\ts}|\mfAT i{\ts},\dsetAT i{\ts+1})\defas\Pr(\nmfAT wt,\aoHistAT w{\ts}|\mfAT i{\ts},\dsetAT i{\ts+1},\aoHistAT it,\bO,\jpolG{\excl i})\overset{\text{\{def. of d-set \eqref{eq:dset-def--intrastage}}\}}{=}\Pr(\nmfAT wt,\aoHistAT w{\ts}|\dsetAT i{\ts+1},\bO,\jpolG{\excl i})} \}}\nonumber \\
\nonumber \\
 & \sum_{\ifsAT i{\ts+1}}\Pr(\mfnAT i{\ts+1}|\mfAT i{\ts},\mflAT i{\ts+1},\aA i,\ifsAT i{\ts+1})\sum_{\left\langle \nmfAT v{\ts},\jaG v,\nmfAT v{\ts+1}\right\rangle }\Pr(\nmfAT u{\ts+1},\nmfAT v{\ts+1}|\mfAT v{\ts},\nmfAT v{\ts},\aA i,\jaG v,\mfAT v{\ts+1})\sum_{\aoHistAT w{\ts}}\jpolG w(\jaG w|\aoHistAT w{\ts})\Pr(\nmfAT wt,\aoHistAT w{\ts}|\dsetAT i{\ts+1},\bO,\jpolG{\excl i}).\label{eq:ExternalPart_preInfluence}
\end{align}
}%end small

We can now apply the definition of influence (\dfn\ref{dfn:influence}
\vpageref{dfn:influence}) to \eqref{eq:ExternalPart_preInfluence},
which yields
\begin{equation}
=\sum_{\ifsAT i{\ts+1}=\left\langle \nmfAT u{\ts},\jaG u,\nmfAT u{\ts+1}\right\rangle }\Pr(\mfnAT i{\ts+1}|\mfAT i{\ts},\mflAT i{\ts+1},\aA i,\nmfAT u{\ts},\jaG u,\nmfAT u{\ts+1})\iffunc(\ifsAT i{\ts+1}|\dsetAT i{\ts+1},\mfAT v{\ts},\aA i,\mfAT v{\ts+1}),\label{eq:ExternalPart_postInfluence}
\end{equation}
which is the definition \eqref{eq:P(NLAFS)__general-IS} of $\Pr(\mfnAT i{\ts+1}|\langle\mfAT i{\ts},\dsetAT i{\ts+1}\rangle,\mflAT i{\ts+1},\aA i,\ifpiAT i{\ts+1})$.
\textbf{}

Substituting \eqref{eq:ExternalPart_postInfluence} back in \eqref{eq:eq-with-bracketed-part}
we get

\begin{align}
 & \sum_{\dsetAT i{\ts+1}}\Pr(\mflAT i{\ts+1}|\mfAT i{\ts},\mfAT i{\ts+1},\mfnAT i{\ts+1},\aA i)\bA i(\mfAT i{\ts},\dsetAT i{\ts+1})\left[\Pr(\mfnAT i{\ts+1}|\langle\mfAT i{\ts},\dsetAT i{\ts+1}\rangle,\mflAT i{\ts+1},\aA i,\ifpiAT i{\ts+1})\right]\nonumber \\
= & \text{\{via \ref{eq:IALM-T-IS} \}}\nonumber \\
 & \sum_{\dsetAT i{\ts+1}}\Pr(\mfAT i{\ts+1}|\mfAT i{\ts},\dsetAT i{\ts+1},\aA i,\ifpiAT i{\ts+1})\bA i(\mfAT i{\ts},\dsetAT i{\ts+1})\overset{\text{\{via \eqref{eq:P_xmxm_lfb}\}}}{=}\Pr(\mfAT i{\ts},\mfAT i{\ts+1}|\lbA i,\aAT it,\ifpiAT i{\ts+1}),
\end{align}
which concludes the proof.\end{proof}

\begin{lemma} 

\label{lem:localTransitionSufficiency_Pxm}

A local-form belief is a sufficient statistic for predicting the next
local state. That is, when $\lbA i,\,\gbA i$ denote the for the local-form
and global-form beliefs induced by the same action-observation history
$\aoHistAT i{\ts}$, we have that:

\begin{equation}
\forall_{\aoHistAT i{\ts}}\forall_{\mfAT i{\ts+1}}\qquad\Pr(\mfAT i{\ts+1}|\gbA i,\aAT i\ts,\jpolG{\excl i})=\Pr(\mfAT i{\ts+1}|\lbA i,\aAT i\ts,\ifpiAT i{\ts+1}).\label{eq:lem:local_state_prediction_sufficiency}
\end{equation}
\end{lemma}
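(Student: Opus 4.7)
The plan is to derive this lemma as an immediate corollary of Lemma~\ref{lem:localTransitionSufficiency_Pxmxm}, which already establishes the stronger statement that the joint distribution over $(\mfAT i{\ts},\mfAT i{\ts+1})$ is identical under the local-form and global-form beliefs. Since \eqref{eq:lem:local_state_prediction_sufficiency} concerns only the marginal over $\mfAT i{\ts+1}$, I would simply sum out $\mfAT i{\ts}$ on both sides of the equation guaranteed by Lemma~\ref{lem:localTransitionSufficiency_Pxmxm}.

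Concretely, I would first fix an arbitrary $\aoHistAT i{\ts}$ and let $\lbA i,\gbA i$ denote the induced local-form and global-form beliefs, and an arbitrary $\mfAT i{\ts+1}$. Then by Lemma~\ref{lem:localTransitionSufficiency_Pxmxm} we have, for every $\mfAT i{\ts}$,
\begin{equation*}
\Pr(\mfAT i{\ts},\mfAT i{\ts+1}|\gbA i,\aAT it,\jpolG{\excl i}) = \Pr(\mfAT i{\ts},\mfAT i{\ts+1}|\lbA i,\aAT it,\ifpiAT i{\ts+1}).
\end{equation*}
Summing both sides over $\mfAT i{\ts}$ and invoking the law of total probability yields
\begin{equation*}
\Pr(\mfAT i{\ts+1}|\gbA i,\aAT it,\jpolG{\excl i}) = \sum_{\mfAT i{\ts}}\Pr(\mfAT i{\ts},\mfAT i{\ts+1}|\lbA i,\aAT it,\ifpiAT i{\ts+1}) = \Pr(\mfAT i{\ts+1}|\lbA i,\aAT it,\ifpiAT i{\ts+1}),
\end{equation*}
which is exactly \eqref{eq:lem:local_state_prediction_sufficiency}.

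I do not anticipate any obstacles here, since all the heavy lifting (marginalizing out non-modeled factors $\nmfAT i{\ts}$, other-agent histories $\aoHistAT{\excl i}{\ts}$, and the intra-stage machinery leading to the induced CPT formulation \eqref{eq:P(NLAFS)__general-IS}) has already been carried out in the proof of Lemma~\ref{lem:localTransitionSufficiency_Pxmxm}. One could alternatively prove the lemma directly by starting from \eqref{eq:P_fm__gfb}, applying the factorization \eqref{eq:gfb_factored_Dt} of $\gbA i$, and repeating the manipulations from the Lemma~\ref{lem:localTransitionSufficiency_Pxmxm} proof verbatim (minus the $\Pr(\mflAT i{\ts+1}|\dots)$ bookkeeping that cancels under the marginal), but this would be redundant. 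The corollary route is cleaner and makes explicit that sufficiency for pairwise local-state transitions automatically implies sufficiency for one-step predictions.
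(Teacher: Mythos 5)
Your proof is correct and is essentially identical to the paper's own argument: both obtain the result by marginalizing $\mfAT i{\ts}$ out of the joint-distribution identity established in \lem\ref{lem:localTransitionSufficiency_Pxmxm}. Nothing further is needed.
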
\begin{proof} This follows directly from \lem\eqref{lem:localTransitionSufficiency_Pxmxm}:

\begin{multline*}
\Pr(\mfAT i{\ts+1}|\lbA i,\aAT it,\ifpiAT i{\ts+1})=\sum_{\mfAT i{\ts}}\Pr(\mfAT i{\ts},\mfAT i{\ts+1}|\lbA i,\aAT it,\ifpiAT i{\ts+1})\\
=\sum_{\mfAT i{\ts}}\Pr(\mfAT i{\ts},\mfAT i{\ts+1}|\gbA i,\aAT it,\jpolG{\excl i})=\Pr(\mfAT i{\ts+1}|\gbA i,\aAT it,\jpolG{\excl i}).\qedhere
\end{multline*}
\end{proof}

\subsection{Sufficiency for Predicting Rewards and Observations}

\label{sec:suff:rewards-and-obs}

Given that we established that local-form beliefs in an IALM are sufficient
to predict local-state transitions, we can now also establish their
sufficiency for predicting rewards and observations.

\begin{lemma}\label{lem:RewardSufficiency} The local-form belief
is a sufficient statistic to predict the immediate reward. That is

\begin{equation}
\forall_{\aoHistA i}\forall_{\aAT i\ts}\qquad\RA i(\gbA i,\aAT i\ts)=\RA i(\lbA i,\aAT i\ts)\label{eq:RewardEquality}
\end{equation}
where $\lbA i,\,\gbA i$ denote the for the local-form and global-form
beliefs induced by $\aoHistA i$. \end{lemma}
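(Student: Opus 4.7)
The plan is to reduce this to the already-established Lemma~\ref{lem:localTransitionSufficiency_Pxmxm}, which states that the pairwise marginal $\Pr(\mfAT i{\ts},\mfAT i{\ts+1}|\cdot,\aAT it,\cdot)$ agrees under the local-form and global-form beliefs. Since the LFM definition (\dfn\ref{dfn:lfm}, property~2) forces every reward-relevant state factor to lie in $\LSF(i)$, the per-transition reward depends only on the local state variables $\mfAT i{\ts},\mfAT i{\ts+1}$ and the agent's own action $\aA i$. Both sides of \eqref{eq:RewardEquality} are therefore expectations of one and the same function $R_i(\mfAT i{\ts},\aA i,\mfAT i{\ts+1})$ against a pairwise marginal, so equality of those marginals is all that is needed.

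Concretely, I would start from the right-hand side and expand it using the local formula \eqref{eq:R(lfb,a)}:
\begin{equation*}
\RA i(\lbA i,\aAT it) = \sum_{\mfAT i{\ts}}\sum_{\mfAT i{\ts+1}} \RA i(\mfAT i{\ts},\aAT it,\mfAT i{\ts+1})\,\Pr(\mfAT i{\ts},\mfAT i{\ts+1}\mid\lbA i,\aAT it,\ifpiAT i{\ts+1}).
\end{equation*}
Fix an arbitrary AOH $\aoHistAT i{\ts}$ and let $\lbA i,\gbA i$ be the local-form and global-form beliefs it induces. Apply Lemma~\ref{lem:localTransitionSufficiency_Pxmxm} termwise to replace $\Pr(\mfAT i{\ts},\mfAT i{\ts+1}\mid\lbA i,\aAT it,\ifpiAT i{\ts+1})$ by $\Pr(\mfAT i{\ts},\mfAT i{\ts+1}\mid\gbA i,\aAT it,\jpolG{\excl i})$, and recognize the resulting expression as exactly the global-form expected reward \eqref{eq:R(gfb,a)__LFM}. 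Since the AOH and $\aA i$ were arbitrary, this yields the quantified statement.

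There is essentially no obstacle beyond invoking Lemma~\ref{lem:localTransitionSufficiency_Pxmxm}: the real work went into proving that the pairwise local-state marginal is preserved under influence-based abstraction. The only subtlety worth flagging in the write-up is the appeal to the LFM reward-locality requirement, which is what permits the reward term in both \eqref{eq:R(gfb,a)__LFM} and \eqref{eq:R(lfb,a)} to be written as $R_i(\mfAT i{\ts},\aA i,\mfAT i{\ts+1})$ with no dependence on $\nmfAT it,\nmfAT i{\ts+1}$ or $\jaGT{\excl i}{}$. Without that property, one would be summing different functions against the marginals, and termwise substitution would fail.
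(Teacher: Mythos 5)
Your proposal is correct and follows essentially the same route as the paper: both reduce the claim to the equality of the pairwise marginals $\Pr(\mfAT i{\ts},\mfAT i{\ts+1}|\cdot,\aAT it,\cdot)$ established in \lem\ref{lem:localTransitionSufficiency_Pxmxm}, by comparing \eqref{eq:R(gfb,a)__LFM} with \eqref{eq:R(lfb,a)}. Your explicit remark that the LFM reward-locality requirement is what makes the integrand identical on both sides is a point the paper leaves implicit, but it is the same argument.
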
 \begin{proof} \sloppy
When we compare equations \eqref{eq:R(gfb,a)__LFM} and \eqref{eq:R(lfb,a)},
we see that this holds if $\Pr(\mfAT i{\ts},\mfAT i{\ts+1}|\gbA i,\aAT it,\jpolG{\excl i})=\Pr(\mfAT i{\ts},\mfAT i{\ts+1}|\lbA i,\aAT it,\ifpiAT i{\ts+1})$.
This is precisely what \lem\ref{lem:localTransitionSufficiency_Pxmxm}
shows.\end{proof}

\begin{lemma}\label{lem:observation_suficiency} The local-form belief
is a sufficient statistic for predicting the observation. That is:
\begin{equation}
\forall_{\aoHistA i}\forall_{\aAT i\ts,\oAT i{\ts+1}}\qquad\Pr(\oAT i{\ts+1}|\gbA i,\aAT i\ts)=\Pr(\oAT i{\ts+1}|\lbA i,\aAT i\ts),\label{eq:observationEquality}
\end{equation}
where $\lbA i,\,\gbA i$ denote the for the local-form and global-form
beliefs induced by $\aoHistA i$.\end{lemma}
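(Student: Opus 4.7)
The plan is to reduce both sides to the same expression, a sum over $\mfAT i{\ts+1}$ of the local observation model times a marginal over the next local state, and then invoke \lem\ref{lem:localTransitionSufficiency_Pxm} to equate those marginals.

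First, I would rewrite the global-form side using \eqref{eq:P(o|gfb,a)__LFM}, which was already derived in \sect\ref{sec:LFMs-and-best-responses} by exploiting the LFM property that every observation-relevant factor lies in $\LSF(\agentI i)$; this gives
\[
\Pr(\oAT i{\ts+1}|\gbA i,\aA i)=\sum_{\mfAT i{\ts+1}}O(\oAT i{\ts+1}|\aA i,\mfAT i{\ts+1})\,\Pr(\mfAT i{\ts+1}|\gbA i,\aA i,\jpolG{\excl i}).
\]
On the local-form side, I would use \eqref{eq:P(o|lfb,a)} directly, which has the same outer structure:
\[
\Pr(\oAT i{\ts+1}|\lbA i,\aA i)=\sum_{\mfAT i{\ts+1}}O(\oAT i{\ts+1}|\aA i,\mfAT i{\ts+1})\,\Pr(\mfAT i{\ts+1}|\lbA i,\aA i,\ifpiAT i{\ts+1}).
\]
The key point is that the local observation model $O(\oAT i{\ts+1}|\aA i,\mfAT i{\ts+1})$ is identical in both expressions, since by \dfn\ref{dfn:lfm} (property 1) and \dfn\ref{dnf:IALM-IS}, $\bar O_i$ is defined to equal $O$ restricted to $\mfAT i{\ts+1}$.

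Given these two displayed expressions, the lemma follows immediately from \lem\ref{lem:localTransitionSufficiency_Pxm}, which asserts
\[
\Pr(\mfAT i{\ts+1}|\gbA i,\aA i,\jpolG{\excl i})=\Pr(\mfAT i{\ts+1}|\lbA i,\aA i,\ifpiAT i{\ts+1}),
\]
whenever $\lbA i$ and $\gbA i$ are induced by the same $\aoHistA i$. Substituting this equality termwise gives \eqref{eq:observationEquality}.

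There is really no obstacle here beyond the bookkeeping: the hard work has already been done in \lem\ref{lem:localTransitionSufficiency_Pxmxm} (and its corollary \lem\ref{lem:localTransitionSufficiency_Pxm}). The only subtlety worth a line of comment is justifying that the observation model can be written as $O(\oAT i{\ts+1}|\aA i,\mfAT i{\ts+1})$ rather than depending on the full next state $\sT{\ts+1}$, which uses precisely the LFM requirement that observation-relevant factors are modeled; otherwise the step from \eqref{eq:P(o|gfb,a)__GFM} to \eqref{eq:P(o|gfb,a)__LFM} would not go through and this proof would need to marginalize over non-modeled factors as well. Given that property, the argument is a two-line substitution.
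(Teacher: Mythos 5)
Your proposal is correct and follows exactly the paper's own argument: compare \eqref{eq:P(o|gfb,a)__LFM} with \eqref{eq:P(o|lfb,a)}, note that the observation model $O(\oAT i{\ts+1}|\aA i,\mfAT i{\ts+1})$ is common to both, and reduce the claim to the equality of next-local-state marginals given by \lem\ref{lem:localTransitionSufficiency_Pxm}. The extra remark about why the observation model depends only on $\mfAT i{\ts+1}$ (the LFM requirement on observation-relevant factors) is a nice clarification but does not change the substance.
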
\begin{proof} \sloppy Comparing
equations \eqref{eq:P(o|gfb,a)__LFM} and \eqref{eq:P(o|lfb,a)},
we see that equality holds if $\Pr(\mfAT i{\ts+1}|\gbA i,\aA i,\jpolG{\excl i})=\Pr(\mfAT i{\ts+1}|\lbA i,\aA i,\ifpiAT i{\ts+1})$;
this is exactly what \lem\ref{lem:localTransitionSufficiency_Pxm}
shows.\end{proof}

\subsection{Proof of Theorem~\ref{thm:V_IALM=00003DV_GFBRM}: Sufficiency for
Predicting Optimal Value}

\label{sec:suff:proof-of-theorem}

Finally, we can prove that our definition of influence is sufficient
to predicting the optimal best-response value. The values in \eqref{eq:thm:sufficiency}
are defined as the value of the initial beliefs, cf. equations \eqref{eq:V(b0)_inf}
and \eqref{eq:V(b0)_JESP}. Putting this all together, we need to
show that 
\begin{equation}
\forall_{\polA{\excl i}}\quad\VA i(\ifpiA{i}(\polA{\excl i}))\defas\VAT i0(\lbAT i0)=\VAT i0(\gbAT i0)\defas\VA i(\polA{\excl i}).
\end{equation}
 The proof is by induction over the horizon, where the base case is
given by the last stage.

\paragraph*{Base Case.}

Assume an arbitrary last-stage AOH, $\aoHistAT i{\h-1}$, and let
$\lbA i,\,\gbA i$ denote the for the local-form and global-form beliefs
induced by it. Their respective values are given by 
\[
\VAT i\ts(\gbA i)=\max_{\aA i}\RA i(\gbA i,\aA i),
\]
\[
\VAT i\ts(\lbA i)=\max_{\aA i}\RA i(\lbA i,\aA i).
\]
So we need to show that the predicted immediate rewards are equal.
This, however, is exactly what Lemma~\ref{lem:RewardSufficiency}
shows.

\paragraph*{Induction Step.}

The induction hypothesis is that, for stage $\ts+1$, 
\begin{equation}
\forall_{\aoHistAT i{\ts+1}}\qquad\V{}_{i}^{\ts+1}(\lbAT i{\ts+1})=\V{}_{i}^{\ts+1}(\gbAT i{\ts+1}),
\end{equation}
where we write $\lbAT i{\ts+1},\,\gbAT i{\ts+1}$ are the local-form
and global-form beliefs induced by $\aoHistAT i{\ts+1}$.

Now we need to prove that $\V{}_{i}^{\ts}(\lbA i)=\V{}_{i}^{\ts}(\gbA i)$,
for all $\aoHistAT i{\ts}$. Since, per definition,
\[
\VAT i\ts(\lbA i)=\max_{\aA i}\QAT i\ts(\lbA i,\aA i),
\]
\[
\VAT i\ts(\gbA i)=\max_{\aA i}\QAT i\ts(\gbA i,\aA i),
\]
we will show this by proving that the Q-values are equal. Assume an
arbitrary $\aoHistAT i{\ts}$. Its Q-values, for all $\aA i$, are
given by \eqref{eq:Q(b,a)__GFBRM}:

\begin{equation}
\QAT i\ts(\gbA i,\aA i)=\RA i(\gbA i,\aA i)+\discount\sum_{\oA i}\Pr(\oA i|\gbA i,\aA i)\VAT{i}{\ts+1}(BU(\gbA i,\aA i,\oA i))
\end{equation}
By the induction hypothesis, we get
\begin{equation}
\QAT i\ts(\gbA i,\aA i)=\RA i(\gbA i,\aA i)+\sum_{\oA i}\Pr(\oA i|\gbA i,\aA i)\VAT{i}{\ts+1}(BU(\lbA i,\aA i,\oA i)).\label{eq:thm-proof:inductionstep:Q(bg)}
\end{equation}
Note that $BU(\gbA i,\aA i,\oA i)$ and $BU(\lbA i,\aA i,\oA i)$
are the local-form and global-form beliefs induced by the same next-stage
history $\aoHistAT i{\ts+1}=(\aoHistAT i{\ts},\aA i,\oA i)$, and
hence the induction hypothesis applies: 
\[
\VAT{i}{\ts+1}(BU(\gbA i,\aA i,\oA i))=\VAT{i}{\ts+1}(BU(\lbA i,\aA i,\oA i)).
\]

So, in order to show that \eqref{eq:thm-proof:inductionstep:Q(bg)}
is equal to 
\begin{equation}
\QAT i\ts(\lbA i,\aA i)=\RA i(\lbA i,\aA i)+\sum_{\oA i}\Pr(\oA i|\lbA i,\aA i)\VAT{i}{\ts+1}(BU(\lbA i,\aA i,\oA i))
\end{equation}
we need to show equality for both the immediate rewards, $\RA i(\gbA i,\aA i)=\RA i(\lbA i,\aA i)$,
and the observation probabilities, $\Pr(\oA i|\gbA i,\aA i)=\Pr(\oA i|\lbA i,\aA i)$.
The former was shown in \lem\ref{lem:RewardSufficiency} and the
latter was shown in \lem\ref{lem:observation_suficiency}. Hence,
the Q-values are the same, hence the values are the same, which concludes
the induction step. \qed

\section{Tractable Influence Representations}

\label{sec:sub-classes-with-Compact-representations}

There are a number of important problem classes and associated models
developed in previous work that emphasize weakly coupled problem structure
in more restrictive settings. We now reformulate these classes in
the context of IBA, thus demonstrating how the theory presented in
this paper unifies such previous work in a coherent graphical framework.
All of the models that we review below are specialized instances of
the factored Dec-POMDP (fDec-POMDP) model and since an fDec-POMDP
is an fPOSG, our definition of influence is applicable to all of these
models.

However, as we illuminate below, some sub-classes allow for particularly
compact influence specifications that can be computed efficiently.
Similar to the examples in \sect\ref{sec:IBA-by-Example}, this makes
clear how it is possible to compute best responses more effectively,
and provides some intuition about how influence search approaches
can enable speed-ups in these sub-classes.

We will also see how our unified perspective allows us to make novel
observations about these previously defined classes that can lead
to improvements and extensions. For instance, we will see that we
can derive more compact forms of influence for the so-called EDI-Dec-MDP
framework. In general we expect that the more compact the representation,
the more efficiently these sub-classes can be solved. However, we
note that, unlike (most of) the papers that introduced these sub-classes,
in this paper we are not proposing an influence\textendash search
technique to solve the optimization for all agents. This is left for
future work.

\subsection{TD-POMDP}

An earlier embodiment of influence abstraction ~\citep{Witwicki10ICAPS,Witwicki11PhD,Witwicki12AAMAS}
sought to exploit cooperative agents' weak coupling, showing that
searching in the space of joint influences can provide significant
speed-ups over searching the space of joint policies for a restrictive
sub-class of fDec-POMDPs. The so-called \emph{Transition-Decoupled
POMDP (TD-POMDP) }\citep{Witwicki11PhD} describes a local state for
each agent that resembles our local form models. However, it also
distinguishes so-called \emph{mutually-modeled factors (MMFs)} common
to more than one agent's local state. These MMFs have the same role
as our non-locally affected factors (NLAFs), but impose additional
restrictions \citep[Section 3.4.3]{Witwicki11PhD}. Specifically,
there are two important differences that make the TD-POMDP more restrictive
than our local-form model:
\begin{enumerate}
\item The TD-POMDP does not allow intra-stage dependencies between private
state variables and MMFs. 
\item In a TD-POMDP each state factor can only be directly affected by (have
an incoming edge from) the action (or private state variable) of just
one agent. 
\end{enumerate}
These constraints effectively limit the representational power of
the TD-POMDP to \textit{non-concurrent} interactions. As an example,
the \problemName{planetary exploration} domain from \fig\ref{fig:dset-rover}
can be directly modeled as a TD-POMDP by making $pl$ the single MMF
in the model. In contrast, the \<housesearch> problem from \fig\ref{fig:house-dbn}
cannot be modeled in the same way: the TD-POMDP version of this problem
requires separating the `found' variable into two MMF variables:
`found by agent~1' and `found by agent~2' thus increasing the
size of the local problems \citep{Witwicki12AAMAS}.

\begin{observation} The TD-POMDP model is a special case of the LFM,
imposing restrictions that limit its modeling capabilities to a subset
of those interactions representable as local-form POSGs: $\text{TD-POMDP}\subset\text{LFM}$.
\end{observation}

The TD-POMDP's formalization is less flexible than that proposed
in this paper. In particular, it seems difficult to extend the TD-POMDP
to deal with intra-stage connections, which we have argued in Sections~\ref{sec:IBA:def-of-influence:links-sources-destinations}
and \ref{sec:IBA-with-IS-deps} is important for expressiveness.

However, the authors derive that this representational restriction
affords the TD-POMDP a particular form of influence, since the history
of mutually-modeled factors is guaranteed to d-separate an agent\textquoteright s
observations from all external factors (i.e., those outside of its
local state). The form of influence that \citeauthor{Witwicki10ICAPS}
propose for TD-POMDPs actually corresponds to our notion of `induced
CPT' (cf. \sect\ref{sec:inducedCPT}) or the marginal of their product
\eqref{eq:P(NLAFS)__general}. In many cases this allows for compact
representations of the influence. Compact influence representations
in turn appear to provide traction when it comes to computing solutions,
as evidenced by the efficiency and scalability gains of influence-space
search for TD-POMDPs~\citep{Witwicki11PhD,Witwicki12AAMAS}.

\subsection{TI-Dec-MDP}

Another model, the Transition-Independent Dec-MDP \citep{Becker03AAMAS},
imposes other more stringent restrictions on the dependencies between
agents' local models. In particular, an agent fully observes its
private factors and there are no paths of dependence in the DBN connecting
one agent\textquoteright s private factors, actions, and observation
to those of another. This implies that the agents are \textit{transition
and observation independent}. Agents' local models are instead coupled
through their rewards, which can depend on the \textit{events} $e_{i}=\langle s_{i}^{t},a_{i}^{t},s_{i}^{t+1}\rangle$
that occur (at most once) within another agents' state space.

\begin{figure}[tb]
\centering\input{figs/frag_cf_TOI.tex} \includegraphics[scale=0.39]{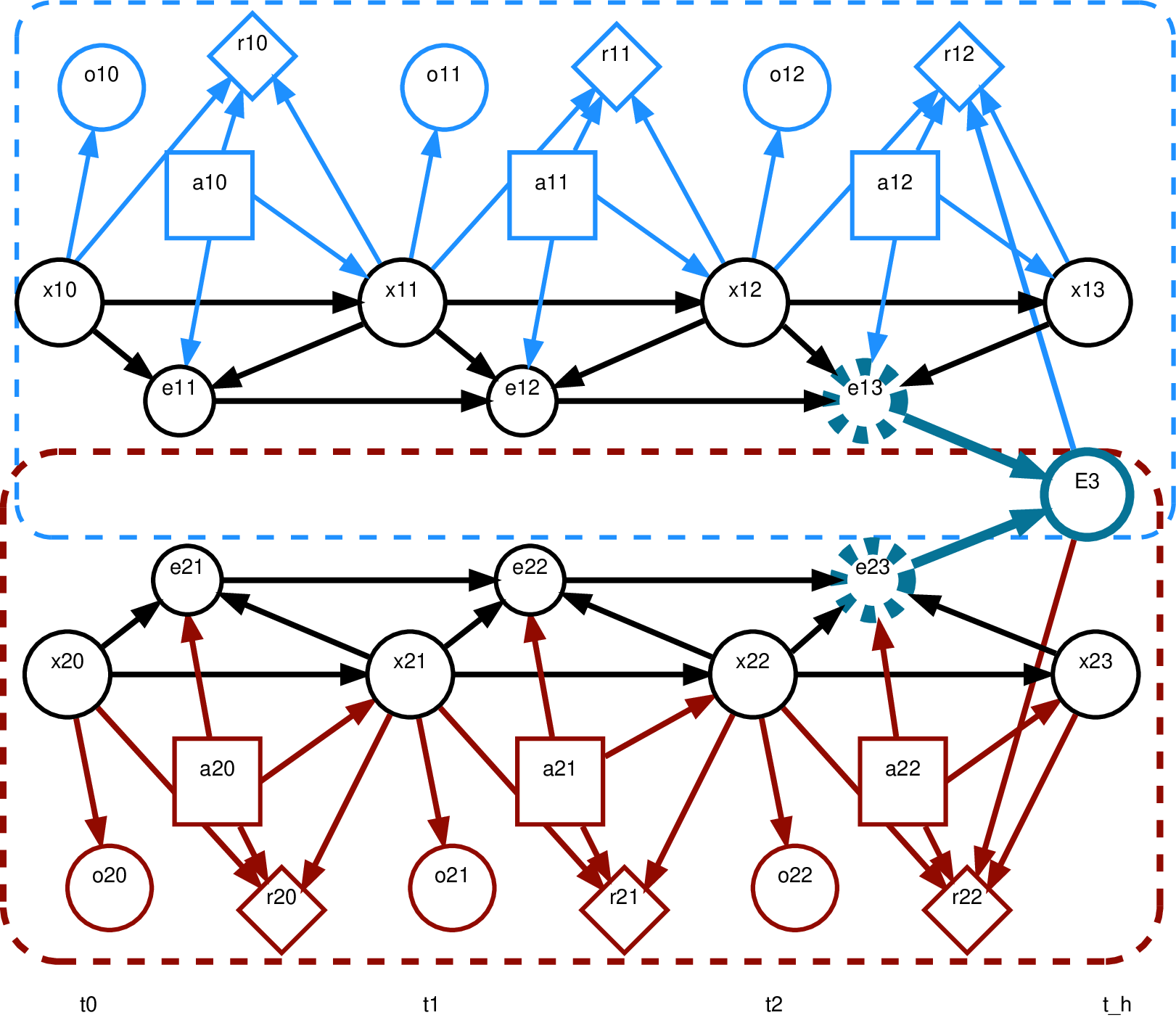} 

\caption{Local-form representation of TI-Dec-MDP: the LFMs of both agents (indicated
by different color bounding boxes) are tied together by a influence
source $E$ that indicates if the joint event happened. To be able
to predict the value of this influence source, the agent $i$ will
need to condition on their $e_{i}$ variable.}

\label{fig:subclasses:TOIDecMDP}
\end{figure}

This class of problems includes, for instance, missions executed by
Mars rovers during which they need to collect samples at various sites.
In such settings, it is reasonable to assume that the rovers have
their own routes and therefore will not affect each other's transitions.
However, the utility of rover 1 taking a soil sample at a particular
site might depend on what samples are taken by rover 2 at nearby sites.

For instance, imagine that the rovers pass at different sides of a
canyon, taking a picture of this canyon provides some utility, but
if both rovers take a picture from their side of the canyon (corresponding
to the individual events $e_{i}$), this may enable a better 3D reconstruction,
providing more value than just the sum of two individual pictures.

This can be easily captured in a factored representation as shown
in \fig\ref{fig:subclasses:TOIDecMDP}. It shows that the combined
occurrence of both agents\textquoteright{} events (as represented
by Boolean variable $E$) leads to a change in the reward (split between
the agents as soon as the event occurs). When the discount factor
is 1 (as \citealt{Becker03AAMAS} assume), the reward may as well
be affected at the last time step as we have indicated. This leads
to a very simple form of influence $\ifpiA i(e_{j}^{h-1})$ corresponding
to the probability of $e_{j}^{h-1}$ being true. This corresponds
exactly to the characterization of `parameter space' presented by
Becker et al.\ in their development of the \textit{coverage set algorithm~(CSA)}.

Our characterization of the TI-Dec-MDP immediately leads to some new
insights.

\begin{observation} While the TI-Dec-MDP framework is arguably more
restrictive than the TD-POMDP, the graphical structure in \fig\ref{fig:subclasses:TOIDecMDP}
makes clear that a TI-Dec-MDP is not a TD-POMDP: $E^{3}$ is affected
from both sub-problems simultaneously.\end{observation}

\begin{observation} The properties that 1) events cannot occur more
than once; and 2) events are unobserved, allow for history-independent
influence encoding in TI-Dec-MDPs. \end{observation}

\begin{observation} CSA and closely-related TI-Dec-MDP algorithms
\citep{Petrik09JAIR} exploit structure that is also present in more
general contexts, such as TI-Dec-POMDPs with partial observability
of private factors. \end{observation}

That is, we make the observation that that CSA and its successors
can actually be extended to more general problem whose joint value
function is piecewise linear and convex in the influence parameters,
such as settings where agents receive only partial observations of
their local states. 

\subsection{Event-Driven Interactions}

The TI-Dec-MDPs assumes that transitions are independent, but interactions
are present in rewards. However, in many problems it may be the other
way around: for instance, the rewards that a vacuum cleaner robot
generates only depends on the amount of dirt it cleans up, but it
cannot enter a dirty room until a general purpose house-hold robot
opens the door.

\begin{figure}
\centering\input{figs/frag_cf_TOI.tex}\small \includegraphics[scale=0.39]{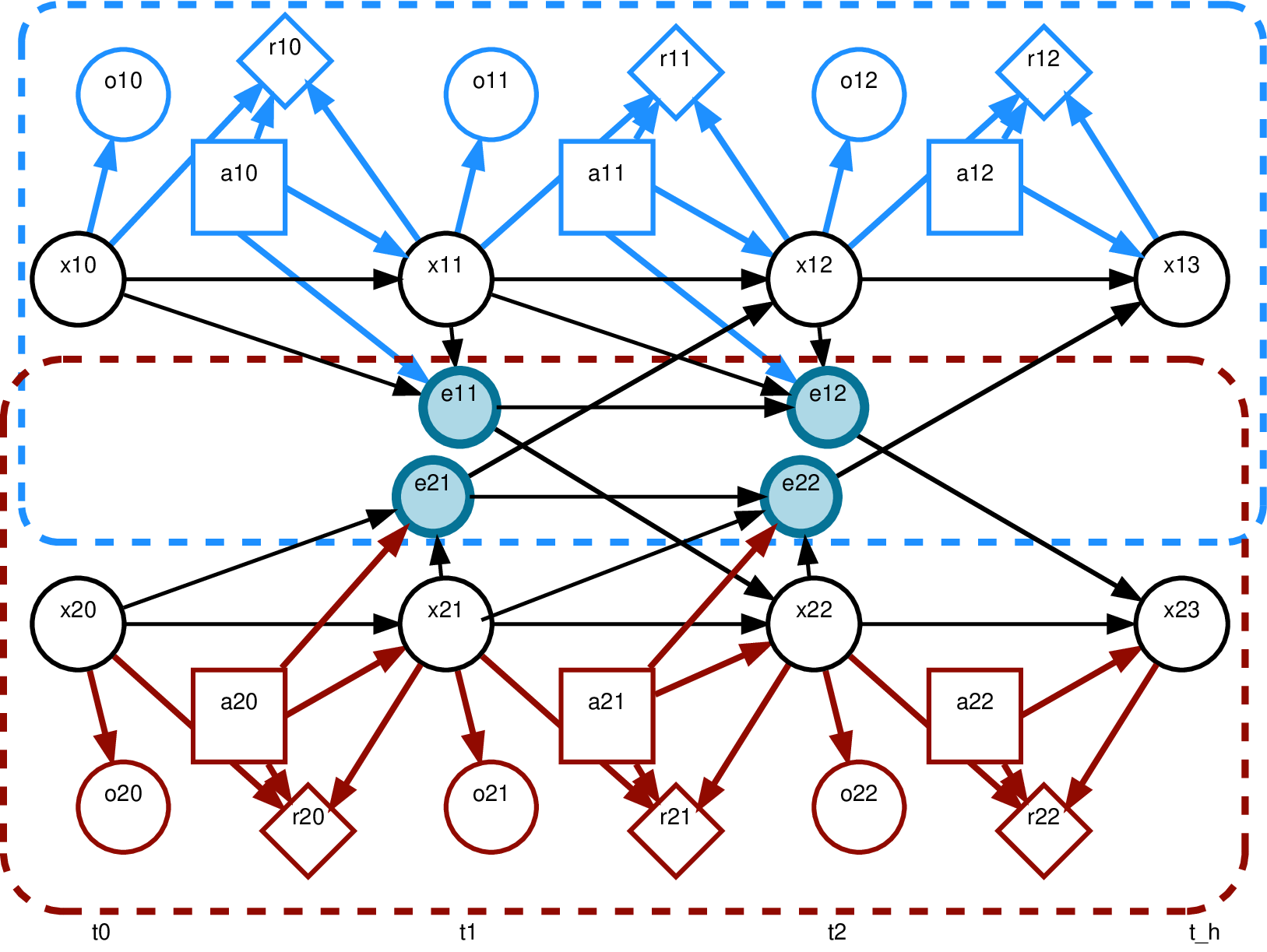}\caption{Local-form representation of the ED-Dec-MDP. The events $e_{1}$ act
as influence destination for agent 2 and vice versa. To avoid clutter
we do not indicate influence sources. The history of all $e_{i}$
serves as a d-separating set for both agents.}

\label{fig:subclasses:ED}

\end{figure}

To deal with such problems \citet{Becker04AAMAS} proposed the \emph{Dec-MDP
with Event-Driven Interaction (EDI-Dec-MDP)}, which provides an explicit
representation for structured transition dependencies between two
agents. Again, we will interpret this model in the IBA framework,
as illustrated in \fig\ref{fig:subclasses:ED}. This figure shows
that, agent $1$'s transition probabilities may be affected by the
prior occurrence of agent~2\textquoteright s event $e_{2}$ (and
vice versa). In this case, the history of these event features are
sufficient for d-separation, i.e., $\dsetA1=\{e_{1},e_{2}\}$. This
leads us to develop a new influence specification for this sub-class
of problems:

\begin{observation} The (induced-CPT form of) influence on EDI-Dec-MDP
agent~$i$, $\ifpiAT i{t+1}(\polA j)$ can be defined as $I(\sAT j\ts,\aAT j{\ts},\sAT j{\ts+1}|\vec{e}_{i}^{\,t},\vec{e}_{j}^{\,t})$.
Moreover, similar to what we saw in \sect\ref{sec:IBA-by-Example},
the history $\vec{e}_{i}^{\,t},\vec{e}_{j}^{\,t}$ can be represented
compactly since events can only switch to true. \end{observation}

The marginal of product of induced CPTs $Pr(e_{1}^{t+1},e_{2}^{t+1}|\vec{e}_{1}^{\,t},\vec{e}_{2}^{\,t})$
is similar to the parameters used by Becker \textit{et al.\ }(in
their application of CSA), but is slightly more compact, since it
does not depend on private factors $s_{i}^{t}$, which our theory
suggests to be unnecessary.  

Having derived a more compact parameter form, we anticipate that this
will translate directly into a more efficient application of CSA.
We note that our reformulation of the TI-Dec-MDP and EDI-Dec-MDP also
serve as influence specifications for the EDI-CR model \citep{Mostafa09WIIAT},
developed to include both event-driven interactions and reward dependencies
(as in the TI-Dec-MDP).

The \emph{distributed POMDP with coordination locales (DPCL) }model~\citep{Varakantham09ICAPS}
can also be reinterpreted using \fig\ref{fig:subclasses:ED}. This
model assumes all agents' observations are conditionally independent
given the state, but that in some specific states, agents can affect
each other's transitions or rewards. Looking at \fig\ref{fig:subclasses:ED},
the events $e_{i}$ precisely can model what \citeauthor{Varakantham09ICAPS}\
refer to as \emph{future-time coordination locales }(``situations
where actions of one agent impact actions of others in the future'').
\citeauthor{Varakantham09ICAPS} also consider \emph{same-time} coordination
locales, which can model simultaneous effects such as robots failing
to move when both try to move to the same grid cell. In \fig\ref{fig:subclasses:ED}
this would be captured by adding arrows from $\vec{e}_{i}^{\,t}$
to $\sAT i{\ts+1}$ (or alternatively by introducing joint events
$E$, as in \fig\ref{fig:subclasses:TOIDecMDP}, at every time step).
While these same-time coordination locales overcome the modeling requirements
of non-concurrency as observed in TD-POMDPs and ED-MDPs, the solution
method proposed by \citeauthor{Varakantham09ICAPS} is heuristic.
In fact, it is precisely our definition of influence presented in
this paper that explains how to deal with such concurrent interaction
in a principled fashion.

\subsection{ND-POMDP}

\begin{figure}[tb]
\begin{centering}
\input{figs/frag_cf_TOI.tex} \includegraphics[scale=0.38]{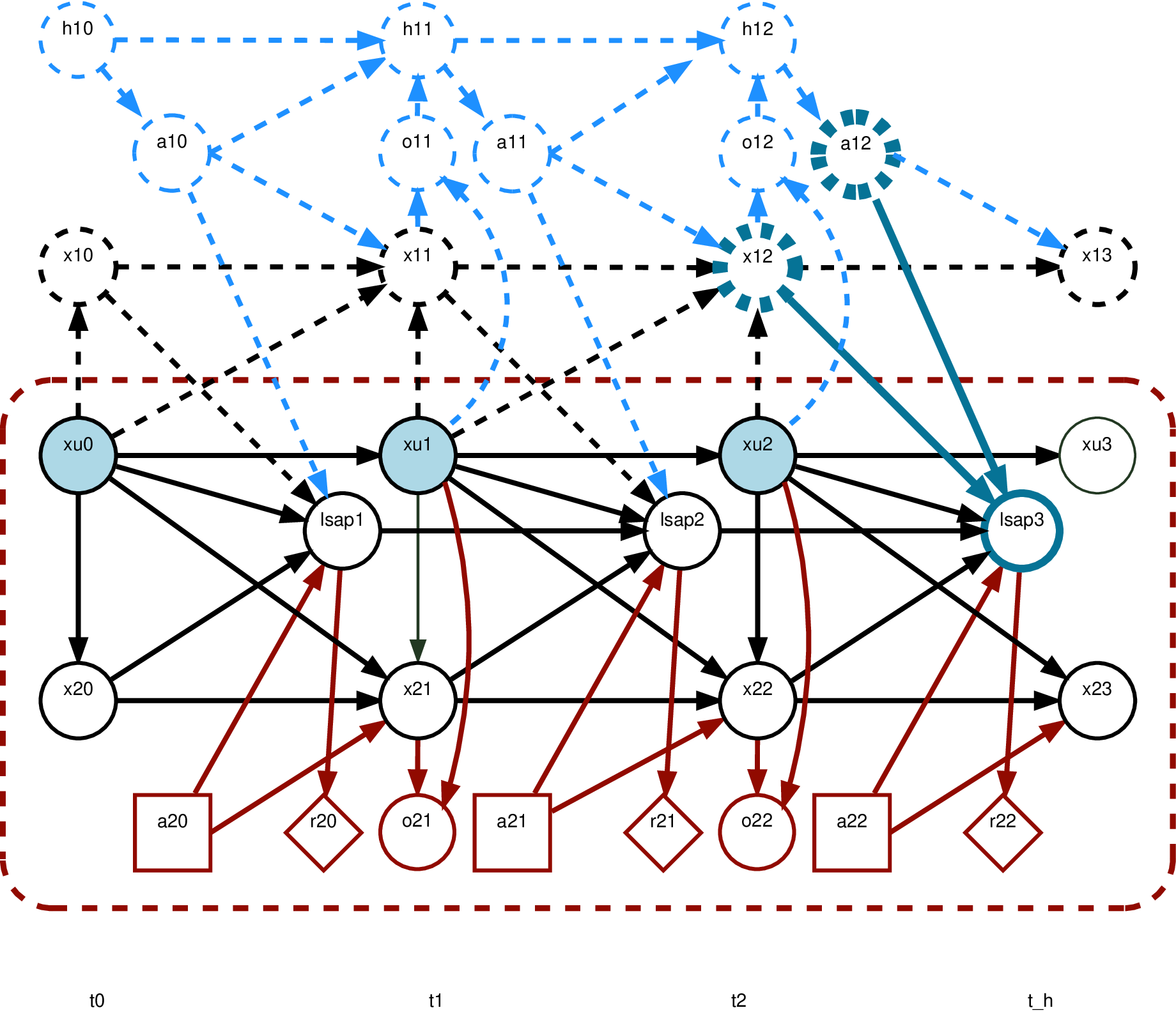}
\par\end{centering}
\centering{}\caption{Local-form representation of agent 2 in a two-agent ND-POMDP. Highlighted
are the influence sources for the ``neighborhood state-action variable''
$\argsT z3$. (Time indices are omitted in the figure to avoid clutter,
but can be inferred from the stages indicated at the bottom.) Note
that because $z$ itself nor any of its descendants are observable
(rewards are not observed in Dec-POMDPs), it does not open a path
of influence to $\sA2$. Therefore only the history of $\sA0$ needs
to be encoded in the d-separating set.}
\label{fig:subclasses:NDPOMDP}
\end{figure}

The \emph{Network Distributed POMDP (ND-POMDP) }introduced by \citet{Nair05AAAI}
is another transition and observation independent model whose structure
can easily be represented in our framework. It was motivated by problems
like sensor networks for intrusion detection, where the sensors need
to select actions to scan their local surroundings. Such actions do
not affect the local state of other sensors, but combinations of actions
of neighboring sensor nodes can lead to higher rewards (e.g., if two
adjacent sensors scan the same area where an intruder is, there might
be a higher detection probability).

The formalization is depicted in \fig\ref{fig:subclasses:NDPOMDP}.
Each agent\textquoteright s observation can be affected by an unaffectable,
mutually-modeled factor $s_{0}$ (e.g., the location of an intruder).
The reward dependencies involving joint actions are captured with
an unobservable variable $z$ encoding the local state-action pair
that in much the same way as did the TI-Dec-MDP's events. The difference
is that these joint actions are not constrained to occur only once,
and may affect the rewards at any time.

In general, an ND-POMDP can consist of multiple local neighborhoods,
which can be modeled using a coordination (hyper-)graph~\citep{Guestrin01NIPS,Nair05AAAI,Kok06JMLR}.
Agents correspond to nodes, while $\edS$ is a set of (hyper-)edges
corresponding to subsets, $\ed,$ of agents. To encode the interactions
between different subsets of agents $\ed\in\edS$, one can introduce
different variables $z_{\ed}$. Our reformulation presented here immediately
leads to the first specification of influence that we are aware of
for this problem class. Let us write $N(i)$ to denote the neighbors
of agent~$i$ excluding agent~$i$ itself: $N(i)=\left\{ j\in\agentS|\exists\ed\in\edS\;i,j\in\ed\;\wedge\;i\neq j\right\} $.

\begin{observation} The influence on ND-POMDP agent~$i$, $\ifpiAT i{t+1}(\polA{\excl i})$
can be defined as 
\begin{equation}
\ifpiAT i{t+1}(\sA{N(i)}^{t},\aA{N(i)}^{t}|\vec{s}_{0}^{~t})=\prod_{j\in N(i)}\ifpoAT{j}{\ts}(\sAT{j}\ts,\aAT j{\ts}|\vec{s}_{0}^{~t}),\label{eq:NDPOMDP_influence}
\end{equation}
with $\ifpoAT{j}{\ts}(\sAT{j}\ts,\aAT j{\ts}|\vec{s}_{0}^{~t})=\Pr(\sAT{j}\ts,\aAT j{\ts}|\vec{s}_{0}^{~t})$
the outgoing influence of agent~$j$.\end{observation}

Like the other mentioned sub-classes, the ND-POMDP affords a compact
influence encoding, suggesting that influence-based planning methods
could gain traction if applied here. Existing forms of influence search
exploit the fact that one can enumerate the \emph{joint influences,
}which describe how agents influence each other \citep{Witwicki10ICAPS,Witwicki12AAMAS}.
Due to the factorization of \eqref{eq:NDPOMDP_influence}, the joint
influence space here is a product space, which is easier to generate
and search through. Moreover, it would be possible to exploit the
graph structure of the ND-POMDP, similar to the approach by \citet[Section 6.6]{Witwicki11PhD}:
the space of joint influences can be decomposed as a factor graph
over which one can optimize more effectively.

\section{Related Work}

\label{sec:Related-Work}

Apart from the models and approaches reviewed in \sect\ref{sec:sub-classes-with-Compact-representations},
there are important connections to be drawn with a large body of other
work. Given the generality of the intuitive notion of `influence'
this should come as no surprise.  Here we describe those relations
to previous work and discuss further insights that our results provide.
We sub-divide these related works in:
\begin{itemize}
\item work on locality of interaction and value factorization in multiagent
systems,
\item other decomposition-like approaches in multiagent systems, and
\item more general forms of abstraction.
\end{itemize}

\subsection{Locality of Interaction and Value Factorization}

Past studies of factored Dec-POMDPs with factored value functions
\citep{Nair05AAAI,Varakantham07AAMAS,Kumar11IJCAI,Witwicki11AAMAS}
have shown that gains in computational efficiency are possible when
the value function can be expressed as the sum of a number of local
components, each of which is specified over subsets of agents and
state factors. In particular, the value in general Dec-POMDPs can
be expressed as a function $\V_{\jpol}(\sT{\ts},\oHistT\ts)$ \citep[e.g., see][chap. 3]{Oliehoek16Book}
of states and joint observation histories. Such a value function is
said to be a\emph{ factored value function }if there is a set of components
$\ed\in\edS$ such that
\begin{equation}
\V_{\jpol}(\sT{\ts},\oHistT\ts)=\sum_{\ed\in\edS}\V_{\jpolG{\ed}}(\sAT\ed\ts,\oHistGT\ed\ts),
\end{equation}
with $\jpolG{\ed}$ and $\oHistGT\ed\ts$ the policies respectively
observation histories of the agents that participate in component
$\ed$, and $\sAT\ed\ts$ the value of the state factors relevant
for $\ed$. For such problems, it is easy to show that they possess
\emph{locality of interaction} \citep{Nair05AAAI}: one can define
a local neighborhood for each agent such that its actions will not
impact the value beyond that neighborhood. This property allows one
to reduce the problem to a form of (distributed) constraint optimization
problem \citep[e.g., see][chap. 8]{Oliehoek16Book}.

However, for general factored Dec-POMDPs, the components $\ed$ involve
all agents and factors. I.e., they are \emph{not} local \citep{Oliehoek08AAMAS,Oliehoek10PhD}.
This paper shows that even in the most general case, \emph{it actually
is possible to find local (i.e., restricted scope) components}, although
this may be at the cost of introducing a dependence on the history
of a subset of the local state factors (the d-separating set $\dsetAT i{}$).
This means that it may be possible to extend the planning-as-inference
method of \citep{Kumar11IJCAI} to exploit structure in general fDec-POMDPs.\footnote{Note that, in general, IBA draws close connections to the paradigm
of planning as inference \citep{Toussaint09KI}; it performs inference
to compute a compact local model; subsequently, inference (among other
choices of solution methods) could be used to solve the IALM.} Researchers in the field of (deep) multiagent reinforcement learning,
have tried to exploit such factorized structure approximately \citep{Guestrin02ICML,Kok06JMLR,Kuyer08ECML,VanDerPol16LICMAS,Sunehag18AAMAS,Rashid18arxivQMIX,Castellini19AAMAS,Bohmer19arxiv,Son19ICML,Wang19arxiv_nearDecomposableVs},
and our work brings deeper understanding of those approaches.

For instance, \citet{Sunehag18AAMAS} proposed a form of factored
value functions~\citep{Guestrin01NIPS} making use of neural networks
that can be understood better using the theory developed in this paper.
Specifically they propose \emph{value-decomposition networks}, a variant
of \emph{deep Q-networks (DQN) }introduced by \citet{Mnih15Nature},
that uses a Q-function 
\begin{equation}
\tilde{Q}(\aoHist,\ja)=\sum_{\agentI i\in\agentS}\argsA{\tilde{Q}}{i}(\aoHistA i,\aA i),\label{eq:VDN-assumption}
\end{equation}
which is implemented in a single neural network with a linear layer
at the end that performs this summation. They state that
\begin{quote}
``the main assumption we make and exploit is that the joint action-value
function for the system can be additively decomposed into value functions
across agents''
\end{quote}
and this assumption has been pointed out as a limitation in subsequent
work \citep{Rashid18arxivQMIX,Bohmer19arxiv}. This paper, however,
demonstrates that \emph{there is a very large class of problems for
which this assumption (approximately) holds}. In particular, we show
that for any factored Dec-POMDP for which we can create a set of local-form
models (cf. \dfn\ref{dfn:lfm}), we have that:
\begin{equation}
V_{\jpol}(\aoHist)=\sum_{\agentI i\in\agentS}V_{i}(\lbA i)=\sum_{\agentI i\in\agentS}\max_{\aA i}Q_{i}(\lbA i,\aA i),\label{eq:IBA-V-decomposition}
\end{equation}
where $\lbA i$ is the local-form belief induced by $\aoHistA i$
and the policies of the other agents $\jpolG{\excl i}$. We also discuss
that, by introducing dummy variables as required (cf.\ the end of \sect\ref{sec:IBA:def-of-influence:links-sources-destinations}),
any factored Dec-POMDP can be re-coded as such set of local-from models.\footnote{Of course, depending on the problem, these components themselves might
be small (need to involve only few state variables) or large. We cannot
claim anything about the size of these components in general problems.
We merely reason that they can in principle be constructed, which
is sufficient to support our argument here.} As such, there is a very large class of problems for which ``the
system can be additively decomposed into value functions across agents''.
However, the devil is the details, we write ``(approximately)''
since the statement by \citet{Sunehag18AAMAS} is about $Q$ not $V$.
In particular, we have that 
\begin{equation}
Q_{\jpol}(\aoHist,\ja)\neq\sum_{\agentI i\in\agentS}Q_{i}(\lbA i,\aA i)
\end{equation}
since each term $Q_{i}(\lbA i,\aA i)$ assumes that the other agents
act according to $\jpolG{\excl i}$, not according to $\ja$. This
can explain the empirical improvements of methods that consider `higher
order approximations' with Q-components that involve subsets of agents~\citep{Oliehoek13AAMAS,Castellini19AAMAS,Bohmer19arxiv}.

We point out that this does not mean that an approximation $Q_{\jpol}(\aoHist,\ja)\approx\sum_{\agentI i\in\agentS}Q_{i}(\lbA i,\aA i)$
is senseless: in fact, we know that for the modified joint policy
$\jpol'$, which is like $\jpol$ but does $\ja$ instead of $\jpol(\aoHist)$,
the decomposition of $V_{\jpol'}$ according to \eqref{eq:IBA-V-decomposition}
also holds. As such, the question ``how good of an approximation
can we get with the individually factored Q-functions from \eqref{eq:VDN-assumption}?''
can be reinterpreted as a question of how the prediction of the components
$Q_{i}(\lbA i,\aA i)$ (which assume the others follow $\jpolG{\excl i}$)
generalize to ``first action modified policies'' $\jpol'$. In other
words, if for all such one-joint-action-modifications $\jpol'$ and
their induced local form beliefs $\argsAT{b}{i}{l\prime}$ we have
that $Q_{i}(\lbA i,\aA i)\approx Q_{i}(\argsAT{b}{i}{l\prime},\aA i)$,
then we expect this approximation to work well. Further formalizing
the impact of such first-action-modifications may be a promising direction
of research, and could lead to a novel notion of \emph{influence strength}
\citep{Allen09IAT,Oliehoek15arxiv_UBs} in multiagent domains.

We also remark that this analysis shows that, at least in cases that
do not exhibit strict locality of interaction, value factorization
inherently depends on the current policies of other agents, and hence
implies an `on-policy characteristic': when learning such factored
value components we can only learn about the $Q_{i}(\lbA i,\aA i)$
that are induced by those $\jpolG{\excl i}$ that are currently being
followed by the other agents. Of course, agent~$i$ itself can still
try to learn its approximation $\argsA{\tilde{Q}}{i}(\aoHistA i,\aA i)$
with off-policy methods, but sudden large changes to the own policy
may affect the ability of other agents to learn their local approximation.
We speculate that in more tightly coupled problems on-policy methods
with factorization may outperform off-policy ones.

\subsection{More General Forms of Decomposition in MASs}

In multiagent decision-making, there is a rich history of trying to
leverage structured interactions. For instance, our approach resembles
the distributed approximate planning method by \citet{Guestrin02UAI}
in that both methods decompose an agent\textquoteright s decision
model into internal and external parts. Our proposed abstraction,
in addition to being sufficient for \emph{optimal} decision-making,
is more general in that it can deal with partial observability.

\citet{Allen07AAAI,Allen09IAT} proposed a different formalization
of `influence' by building upon information-theoretical concepts
(mutual information between individual actions and joint states/observations/rewards).
They show how their notion of influence and influence gap (which measures
differences between the influencing power of agents) can predict the
difficulty of solving a problem. While conceptually closely related
to our work, their proposed notion of influence does not seem to support
doing abstraction in any non-trivial manner, and thus should be seen
as a very different type of object than our influence point.

The work by \citet{Chitnis20CORL} is close in spirit to IBA: they
propose to form a local abstraction of a factored MDP that approximates
the original model well. Their approach is to abstract away a subset
of \emph{exogenous variables} \citep{Boutilier99JAIR} and they propose
a method to select this subset. However, exogenous variables are defined
as variables that can influence our local model, but that \emph{cannot
be influenced by }the local model. This stands in stark contrast to
the non-modeled variables in IBA which can be affected by the local
model.

Another class of related work is that focusing on \emph{anonymous
interactions }such as mean field games, D-SPAIT, and Collective Dec-POMDPs
\citep{Jovanovic88anonymous,Kizilkale12TAC,Varakantham14AAAI,Robbel16AAAI,Nguyen17AAAI,Subramanian19AAMAS}.
These models assume that the interactions between a large set of agents
are governed by low dimensional statistics that capture how the rest
of the population influences each individual. For instance, in disease
propagation, only the \emph{number} (not the identity) of people that
are infected in one's neighborhood might matter~\citep{Robbel16AAAI}.
As argued in the beginning of \sect\ref{sec:IBA-with-IS-deps}, the
ability to include intra-stage connections into the IBA framework
can enable us to model most (if not all) such problems in the IBA
framework. So far, however, we have not yet identified how this can
lead to compact influence representations (as described in \sect\ref{sec:sub-classes-with-Compact-representations})
or more efficient approaches to solving these games.

\citet{Bazinin18GCAI} investigate exploiting a heuristic form of
influence in deterministic multiagent planning problems, formalized
in the qualitative Dec-POMDP~\citep{Brafman13AAAI} framework. Their
approach plans ``per agent'': first each agent computes a plan assuming
the other agents execute the actions that are most beneficial for
it. This creates constraints (influences) for the other agents. Then
the next agent gets to plan, subject to these constraints, and the
process iterates.

In this work, we show how we can define a local-form model based upon
a factored POSG model and a specification of a local state function
$\LSF$. We have not touched the question of how to define this local
state function. E.g., in a multi-robot cleaning task, each agent potentially
could clean every location, leading to local models as large as the
original problem. To counter this, one can apply \emph{organizations}\textbf{
}\citep{Carley99computationalOrg,Ferber04organizations,Vazquez05organizing}
which effectively constrain which agents can address what parts of
the problems. \citet{Sleight12AAMAS,Sleight15AAMAS} investigate such
organizations in a decision-theoretic context, also taking into account
simpler, approximate, forms of influence. Our definition of influence
is different as it critically depends on the d-separating set, which
is not considered by Sleight and Durfee. \citet{Claes17AAMAS} use
heuristics from multi-robot task allocation \citep{Gerkey03MRTA}.

Other models \citep{Spaan08AAMAS,Varakantham09ICAPS,Melo10AAMAS,Melo11AI}
have allowed for approximate decoupled local planning by leveraging
a form of context-specific independence, where agents only influence
each other in certain states. An important direction of research is
to also exploit this type of independence in LFMs. Similar ideas have
been considered in the multiagent RL setting too \citep{Melo09AAMAS,Hauwere10AAMAS}.
Structured interactions between agents are starting to be used for
examining concepts like understanding by agents \citep{Corona19NeurIPS}.

Approaches that take the perspective of a protagonist agent, like
the recursive modeling method \citep{Gmytrasiewicz95ICMAS,Gmytrasiewicz00JAAMAS},
I-POMDPs \citep{Gmytrasiewicz05JAIR}, and work on ad-hoc teams~\citep{Stone10AAAI,Albrecht13AAMAS}
 inherently provide a subjective perspective, which can include modeling
other agents recursively, that is conceptually related to our notion
of the local model. Although the formal definition of these models
is different from the fPOSG, our definition of influence (and thus
IBA) is readily applicable to factored-state versions of these models,
and therefore IBA can be extended to such models. Specifically, influence-based
abstraction is conceptually similar to existing approaches that exploit
\textit{behavioral equivalence} \citep{Pynadath07AAAI,Rathnasabapathy06AAMAS},
but these approaches abstract classes of behaviors down to policies,
whereas we abstract policies down to even more abstract influences.
These relations are illustrated in \fig\ref{fig:behavioral-equivalence}.

\begin{figure}
\begin{centering}
\includegraphics[width=0.8\columnwidth]{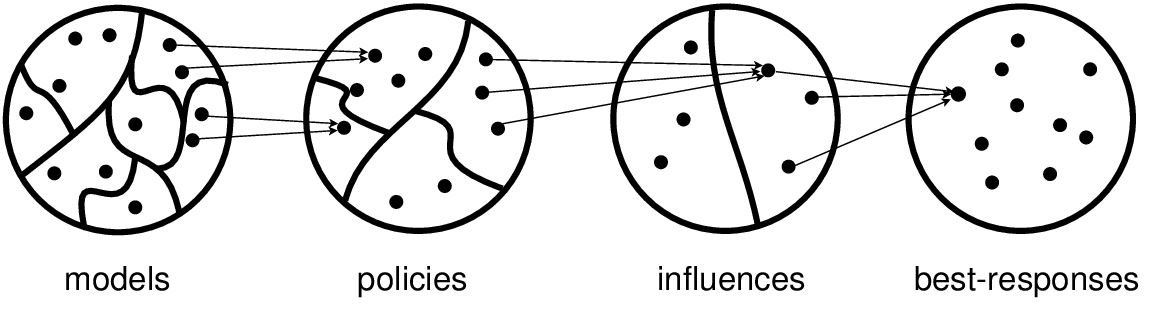}
\par\end{centering}
\caption{Many (e.g., I-POMDP) models for agent~$j$ may be \emph{behaviorally
equivalent, }i.e., map to the same policy $\pi_{j}$. In turn, many
policies $\polA j$ can lead to the same influence $\ifpiA i$ on
agent~$i$. Finally, many influences may map to the same best-response
$\polA i$. (Note that only a small part of the space of $\polA i$
may be a best-response to some influence/policy/model.)}

\label{fig:behavioral-equivalence}
\end{figure}

Also in multiagent learning, the idea that abstract representations
of influence exist and can help in learning are starting to be considered
\citep{HernandezLeal17arxiv}. For instance, \citet{Claes15AAMAS}
investigated an approximate form of influence of team mates in collaborative
spatial task allocation problems. \citet{Foerster17ICML} propose
`fingerprints' (episode indices) for when data was collected to
capture non-stationary due to the changing opponent strategy. \citet{Hong18AAMAS}
propose to augment DQN~\citep{Mnih15Nature} with a module to learn
`policy features' based on observations of the actions of other
agents. \citet{Jaques19ICML} propose to use a mutual information-based
version of influence (similar to \citealp{Allen07AAAI}, discussed
above) as an auxiliary reward, and \citet{Wang20ICLR} extended this
to direct exploration in multiagent reinforcement learning. To some
extent, all forms of agent modeling \citep[e.g.][]{HernandezLeal17arxiv,Hernandez19AIIDE,Tacchetti19ICLR}
or tracking \citep{Sunberg17ACC} can be seen as a some form of influence
prediction, since one can think of the action of another agent as
an influence source. However, few of these approaches further formalize
the structure of this interaction, which means that they have not
exploited the insight that one only may need to remember a subset
of variables, even though this can lead to significant improvements
\citep{Suau19arxiv}.

\subsection{Other Forms of Abstraction}

\label{sec:Other-Forms-of-abstraction}

Influence-based abstraction is a form of state abstraction, which
has a long tradition in AI planning and learning \citep[e.g.,][]{Sacerdoti74AIJ,Knoblock93Book,McCallum93ML,Dearden97AIJ,Dean97AAAI,Hoey99UAI,Givan00AIJ,Boutilier00AI,Ravindran03IJCAI,Jong05IJCAI,Konidaris09IJCAI,Kaelbling12TRHPN,Hostetler14AAAI,Anand16ICAPS,Bai16IJCAI,Abel19AAAI}.
Other types of abstraction \citep{Mahadevan10AAAI} are temporal abstractions,
such as options and macro-actions \citep{Sutton99AIJ,Theocharous03NIPS16,Amato19JAIR,Machado17ICML},
and functional abstraction, which tries to identify appropriate basis
functions \citep{Keller06ICML,Parr07ICML,Mahadevan07JMLR,Petrik07IJCAI_laplacian},
including the huge body of recent work on deep RL \phantom{\c{c}}\citep{Schmidhuber91NIPS,Mnih15Nature,FrancoisLavet18FaT}.
We will focus on related work on state abstraction.

Different manners of performing state abstraction in MDPs exist, such
as state aggregation methods which cluster similar states together,
or starting with one abstract state and subsequently splitting \citep{Givan03AIJ},
or removing state factors with no impact on the policy or rewards~\citep{Jong05IJCAI,Dearden97AIJ}.
At a technical level these approaches are based on the idea that the
original MDP and the abstraction are bisimilar. MDP homomorphisms
\citep{Ravindran02SARA,Ravindran03IJCAI} generalize the idea of bisimilarity
to also consider similarity of different actions. These ideas can
also be used as metrics~\citep{Ferns04UAI,Ferns14UAI}, and many
of these ideas lie at the core of recent model-based (deep) RL approaches
\citep{Corneil18ICML,Gelada19ICML,Biza19AAMAS,VanDerPol20AAMAS}.

Other methods implement abstraction as part of the solution method~\citep{Hoey99UAI,Boutilier00AI,St-Aubin01NIPS13}.
Different notions of which states to group together exist. \citet{Li06ISAIM}
present a unifying framework that discriminates a number of types
of exact state abstraction, and some of these were recently extended
to approximate state abstractions~\citep{Abel16ICML}. The introduced
notions of model irrelevance/similarity are particularly relevant:
they group together states that behave (approximately) identical in
terms of rewards and transitions, which is also what IBA achieves
in its influence-augmented local model.

However, there is one big difference between all these methods and
the influence-based abstraction: in order to achieve a good approximation,
all the previous notions can only group states together that have
very similar (usually measured in L1 norm of) transition probabilities,
which severely limits their applicability. Existing methods can generally
not abstract away an entire state variable that is an influence source
and still provide guarantees of near optimality. In contrast, IBA
does enable abstracting away such influence sources, and thus groups
together states that can have very different transition probabilities.
IBA corrects for this by incorporating the influence in the IALM,
by means of the dependence on the d-separating set $\dsetAT i{}$.

Another body of work casts abstracted, non-Markovian, models as models
with imprecise probabilities~\citep{Givan00AIJ,Iyengar05MOR,Sanner10AAMAS,Delgado11AIJ,Delgado11IJAR,Petrik14NIPS,Delgado16AIJ}.
These typically place intervals on the transition probabilities and
compute `robust' policies that give the optimal worst case (with
respect to the realized transition probabilities) payoff. Essentially
these models are equivalent to a two-player zero-sum game where the
agent faces an adversarial environment that chooses the transition
probabilities to sabotage the agent~\citep{Iyengar05MOR}. A disadvantage
of such approaches is that they are only useful if the uncertainty
intervals are sufficiently small and, as above, this is very hard
to guarantee when abstracting away entire state variables. As such,
the contribution of IBA is complimentary: it shows that it is possible
to create abstract models which have no uncertainty interval at all.

IBA also bears some similarity to the framework of mixed-observability
MDPs~\citep{Ong09RSS,Ong10IJRR}, which splits the state $s=\left\langle o,l\right\rangle $
into observable state factors $o$ and hidden ones $l$. IBA, however,
splits $s=\left\langle \mfA i,\nmfA i\right\rangle $ into modeled
$\mfA i$ and non-modeled factors $\nmfA i$. As such, the frameworks
are complimentary: the local state space of an agent after performing
IBA can have mixed observability\footnote{While the non-modeled factors $\nmfA i$ are hidden, (some of) the
modeled state factors $\mfA i$ can be fully observed: in our formalism
such observability of a factor $\mfI k$ would be modeled by introducing
an observation factor that has $\mfI k$ as its only parent and has
the identity function as its conditional probability table.} and the hidden part $l$ of a mixed-observability MDPs can be abstracted
by using IBA.

Finally, abstractions have also been investigated as the basis for
robotic decision making \citep{Konidaris18JAIR} and multi-robot decision
making \citep{Le18JAIR,Amato19JAIR}. These methods typically combine
temporal and state abstraction. Specifically, \citet{Konidaris18JAIR}
focus on learning abstract state representation that support open
loop planning using a given set of `skills' (also called `options',
\citealt{Sutton99AIJ}, or `macro actions', \citealt{Amato19JAIR})
and demonstrate this on a robot. While the formalization allows for
probabilistic effects (they can reason about probability that the
plan is executable), they assume that the skills are such that the
effect of a skill $\sigma$ does not depend on the previous state,
such that $\Pr(s'|\sigma)$ is well defined. In practice, the approach
typically requires small sets of states $s'$ with positive support,
or the probability of executability drops. As such, the framework
is less suited for highly stochastic environments, such as those affected
by other agents or other type of exogenous events~\citep{Boutilier99JAIR}.
Thus, again, our work here is complementary, since it shows what parts
of history may need to be retained to decrease this stochasticity.
\citet{Le18JAIR} focus on multi-robot motion planning. The difficulty
here is to reason both about detailed motions, as well as the presence
of multiple robots. To deal with this they propose to reason about
the interaction (making use of multiagent path planning) in an abstract
representation, this high-level plan is then used as a heuristic for
the low-level motion planning. \citet{Amato19JAIR} formalize hierarchical
Dec-POMDPs, called Mac-Dec-POMDP (for `macro-action') where multiple
agents act using options. The focus of this work lies on how to plan
with options in the Dec-POMDP setting, but the abstractions at higher
levels are assumed to be given.

\section{Conclusion, Discussion and Future Work}

\label{sec:conclusions}

This paper makes a theoretical contribution to the field of decision
making in factored multiagent settings by giving a rigorous definition
of \emph{influence-based abstraction (IBA) }in such settings. It defines
a notion of `influence' that enables an agent in a POMDP to perform
a lossless abstraction of the decision making problem it faces. That
is, we prove that, for a given abstraction in terms of a \emph{local-form
model}, an \emph{influence point} is a sufficient statistic for the
part of the problem that is abstracted away. The local-form model
and influence point together induce what we call an \emph{influence-augmented
local model~(IALM)}:\emph{ }a local model that is sufficient to compute
an exact best response.

The proof of sufficiency also serves a practical purpose: it isolates
the core technical property (in \sect\ref{sec:suff:local-transitions})
that needs to hold for sufficiency. In this way it conveys insight
into the nature of \emph{how} abstraction of latent state factors
affects value, provides a derivation that can be used to obtain simplifications
of the definition of influence in simpler cases, and provides a recipe
of how to prove similar results in more general cases.

At a higher level, IBA is important for the following reasons:
\begin{enumerate}
\item The theory presented in this paper presents a new perspective on abstraction
in structured settings: it shows that such abstractions can be seen
as special cases of POMDPs, where one only needs to remember about
a subset of variables. Effectively, this can create a problem class
in between MDPs and POMDPs: In this class, in order to predict the
local dynamics, we will need to use memory, but this memory only needs
to store information about the history of a subset of state variables.
\item It can enable more efficient best-response computation in fPOSGs,
as well as providing a very natural form of approximation via approximate
inference.
\item It provides a better understanding of previously identified sub-classes
of fPOSGs \citep{Becker03AAMAS,Becker04AAMAS,Nair05AAAI,Petrik09JAIR,Oliehoek10PhD,Kumar11IJCAI}
and how they relate to each other. The insightful connections that
we have drawn promote extensions of specialized methods beyond their
respective sub-classes as well as comparisons with one another in
more general contexts. For instance, our work has identified a compact
representation of influences in ND-POMDPs (where none was known) and
identified a more compact representation for EDI-Dec-MDPs. 
\item It demonstrates how the value function for essentially \emph{any}
factored Dec-POMDP can be decomposed into the sum of a number of \emph{local}
value functions. As such, IBA demonstrates that all such problems
satisfy a weak form of locality of interaction (also `value factorization')\textemdash a
property that is exploited in several Dec-POMDP solution methods \citep{Nair05AAAI,Oliehoek10PhD,Kumar11IJCAI}
and multiagent RL papers \citep{Guestrin02ICML,Kok06JMLR,Kuyer08ECML,VanDerPol16LICMAS,Sunehag18AAMAS,Rashid18arxivQMIX,Castellini19AAMAS,Bohmer19arxiv,Son19ICML,Wang19arxiv_nearDecomposableVs}.
\item Influences can provide a more compact, yet sufficient statistic for
the behavior of other agents in a MAS. We expect this to be important
in multiagent reinforcement learning, since it is often easier to
learn a compact statistic from the same amount of data.
\end{enumerate}
We emphasize that this definition of influence is not a magic bullet:
while the influence-augmented local model is sufficient to compute
a best-response locally, the computation of the required influence
point itself is an intractable inference problem in general. However,
in certain cases where this problem \emph{is} feasible it can enable
faster best-response computations and search for multiagent plans
via \emph{influence search }\citep{Witwicki10ICAPS,Witwicki12AAMAS}.
As such, an important direction of future work would investigate how
the definition of influence presented in this paper can support influence
search in more general settings.

Moreover, even in cases where influences are intractable to compute,
the concept forms the basis for principled approximations. For instance,
by being \emph{optimistic} with respect to the influence sources,
one is able to compute upper bounds on the optimal value of Dec-POMDPs
with hundreds of agents, thus leading to firm guarantees on the quality
of heuristic solutions \citep{Oliehoek15IJCAI}. Furthermore, there
is evidence, in the context of deep reinforcement learning, that such
approximate versions of influence may in some problems improve learning,
both in terms of speed as well as performance \citep{Suau19ALA}.
As such, a fruitful direction of research is to better understand
such approximate characterization of influence~\citep{Congeduti20arxiv}.
This article has provided the foundations for such an exploration.

An important direction of of future research would explore the applications
of (approximate) forms of influence. For instance, it is possible
that these can make a huge impact on human-robot interactions \citep{Shah11HRI,Nikolaidis15HRI}.
We note that even though the discussion in this paper was based on
the more general case of multiagent systems, there is nothing that
stops us from applying IBA in complex systems with just a single agent.
As a case in point, \citet{Suau19ALA} show improvements of learning
on a single traffic intersection and on Atari games.

Finally, in this paper the discussion is limited to settings where
structure is known. \emph{If} we have structure, this can be exploited
to define influence, possibly leading to more efficient best responses,
or other benefits. Certainly, in many cases knowledge of the structure
is not available. Future work could try to build off the advances
in structure learning algorithms~\citep{Murphy02PHD,KollerFriedman09,Doshi11ICML,Murphy12Book}
and their integration with decision making problems \citep{Degris06ICML,Strehl07AAAI,Walsh10AAAI,Doshi09NIPS,Littman12ICGI,Katt19AAMAS};
as long as it is possible to learn a model our methods would apply.
In particular, even though such an estimated model might be inaccurate,
its reduction to an IALM would add no further estimation error. This
observation may open up a new research direction in sequential decision
making that forsakes approximate solution methods (e.g., Monte Carlo
tree search,~\citealt{Browne12CAAIG}, RL techniques like DQN,~\citealt{Mnih15Nature},
or other forms of approximate dynamic programming, \citealt{Bertsekas05DPBook_vol1,Bertsekas07DPBook_vol2,Powell12AnnalsOR})
in favor of learning useful approximate models that give structured
representations of interactions. Arguably, such approaches that make
exact use of assumed (but approximate) models lie at the basis of
many, if not most, engineering disciplines and thus served human intelligence
well in the past.

\section*{Acknowledgments}

We would like to thank Elena Congeduti, Rolf Starre and Miguel Suau,
and Mikko Lauri for their helpful comments. Major parts of this work
where performed while F.~A.~Oliehoek was affiliated with MIT, University
of Amsterdam, and University of Liverpool, and while S. Witwicki was
affiliated with EPFL. This paper is the result of research that received
funding from various funding agencies, including AFOSR (MURI), NWO
(VENI), and the European Research Council (ERC) under the European
Union\textquoteright s Horizon 2020 research and innovation programme
(grant agreement No.~758824\textemdash INFLUENCE).
\begin{center}
\includegraphics[width=4cm]{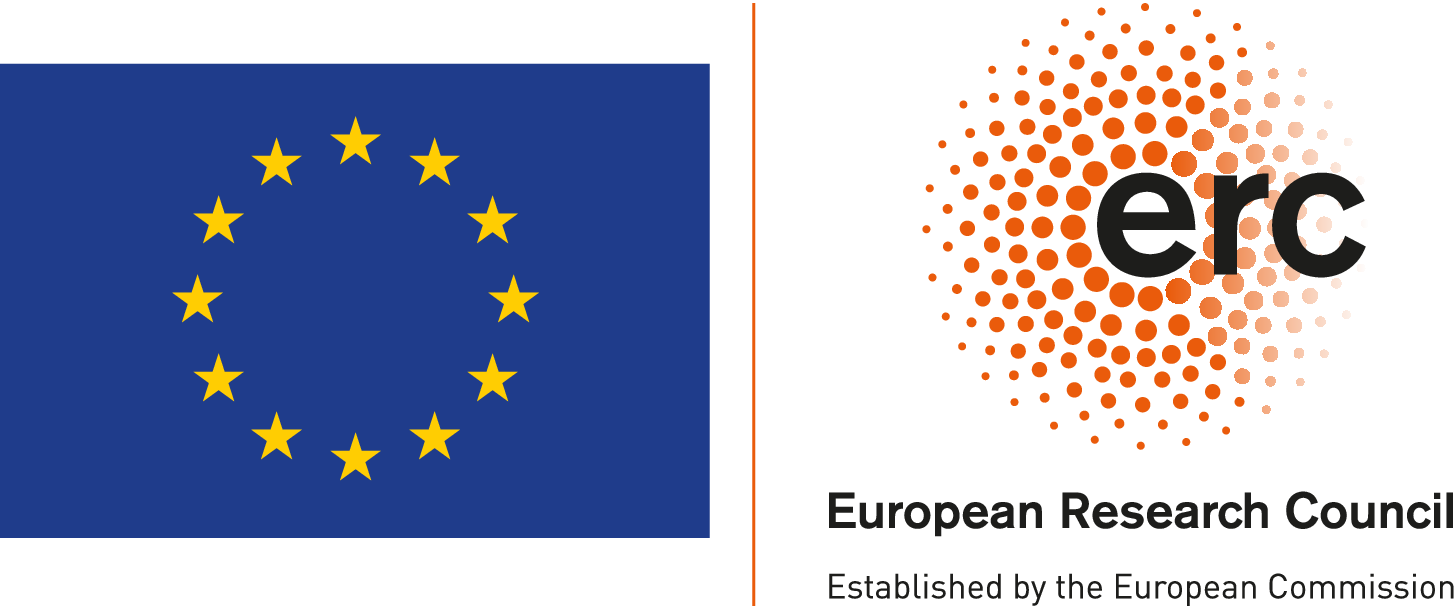}
\par\end{center}

\newpage{}

\appendix

\section{Proofs and Derivations}

Here we give proofs and derivations of a number of results. These
are referred from the main text, and will be stated here without further
explanation.

\subsection{GFBRMs}

\subsubsection{Expected Reward }

\label{app:GFBRM_reward}

\begin{eqnarray*}
R_{i}(\gbA i,\aAT it) & = & \E_{\sAugAT it\sim\gbA i,\sAugAT i{t+1}\sim\Aug T(\sAugAT it,\aAT it,\cdot)}\left[\Aug{R_{i}}(\sAugAT it,\aAT it,\sAugAT i{t+1})\right]\\
 & = & \sum_{\sAugAT it}\gbA i(\sAugAT it)\sum_{\sAugAT i{t+1}}\Aug T(\sAugAT i{t+1}|\sAugAT it,\aAT it)\Aug{R_{i}}(\sAugAT it,\aAT it,\sAugAT i{t+1})\\
 & = & \sum_{\langle\sT{\ts},\aoHistAT{\excl i}{\ts}\rangle}\gbA i(\langle\sT{\ts},\aoHistAT{\excl i}{\ts}\rangle)\sum_{\langle\sT{\ts+1},\aoHistAT{\excl i}{\ts+1}\rangle}\Pr(\langle\sT{\ts+1},\aoHistAT{\excl i}{\ts+1}\rangle|\langle\sT{\ts},\aoHistAT{\excl i}{\ts}\rangle,\aA i)\Aug{R_{i}}(\langle\sT{\ts},\aoHistAT{\excl i}{\ts}\rangle,\aA i,\langle\sT{\ts+1},\aoHistAT{\excl i}{\ts+1}\rangle)\\
 & = & \sum_{\langle\sT{\ts},\aoHistAT{\excl i}{\ts}\rangle}\gbA i(\langle\sT{\ts},\aoHistAT{\excl i}{\ts}\rangle)\sum_{\sT{\ts+1}}\sum_{\jaG{\excl i}}\sum_{\joGT{\excl i}{\ts+1}}\Pr(\sT{\ts+1},\jaG{\excl i},\joGT{\excl i}{\ts+1}|\langle\sT{\ts},\aoHistAT{\excl i}{\ts}\rangle,\aA i)R_{i}(\sT{\ts},\aA i,\jaG{\excl i},\sT{\ts+1})\\
 & = & \sum_{\langle\sT{\ts},\aoHistAT{\excl i}{\ts}\rangle}\gbA i(\langle\sT{\ts},\aoHistAT{\excl i}{\ts}\rangle)\sum_{\sT{\ts+1}}\sum_{\jaG{\excl i}}\Pr(\sT{\ts+1},\jaG{\excl i}|\langle\sT{\ts},\aoHistAT{\excl i}{\ts}\rangle,\aA i)R_{i}(\sT{\ts},\aA i,\jaG{\excl i},\sT{\ts+1})\\
 & = & \sum_{\sT{\ts}}\sum_{\sT{\ts+1}}\sum_{\jaG{\excl i}}\Pr(\sT{\ts+1}|\sT{\ts},\ja)R_{i}(\sT{\ts},\ja,\sT{\ts+1})\sum_{\aoHistAT{\excl i}{\ts}}\Pr(\jaG{\excl i}|\aoHistAT{\excl i}{\ts})\gbA i(\sT{\ts},\aoHistAT{\excl i}{\ts})
\end{eqnarray*}

\subsubsection{Expected Observation Probability}

\label{app:GFBRM_observ}

$\Pr(\oAT i{\ts+1}|\gbA i,\aAT it)$
\begin{eqnarray}
 & = & \E_{\sAugAT it\sim\gbA i,\sAugAT i{t+1}\sim\Aug T(\sAugAT it,\aAT it,\cdot)}\left[\Aug O(\oAT i{\ts+1}|\aAT it,\sAugAT i{t+1})\right]\nonumber \\
 & = & \sum_{\sAugAT it}\gbA i(\sAugAT it)\sum_{\sAugAT i{t+1}}\Aug T(\sAugAT i{t+1}|\sAugAT it,\aAT it)\Aug O(\oAT i{\ts+1}|\aAT it,\sAugAT i{t+1})\nonumber \\
 & = & \sum_{\langle\sT{\ts},\aoHistAT{\excl i}{\ts}\rangle}\gbA i(\langle\sT{\ts},\aoHistAT{\excl i}{\ts}\rangle)\sum_{\langle\sT{\ts+1},\aoHistAT{\excl i}{\ts+1}\rangle}\Pr(\langle\sT{\ts+1},\aoHistAT{\excl i}{\ts+1}\rangle|\langle\sT{\ts},\aoHistAT{\excl i}{\ts}\rangle,\aA i)\Pr(\oA i|\aA i,\langle\sT{\ts+1},\aoHistAT{\excl i}{\ts+1}\rangle)\nonumber \\
 & = & \sum_{\langle\sT{\ts},\aoHistAT{\excl i}{\ts}\rangle}\gbA i(\langle\sT{\ts},\aoHistAT{\excl i}{\ts}\rangle)\sum_{\sT{\ts+1}}\sum_{\jaG{\excl i}}\sum_{\joGT{\excl i}{\ts+1}}\Pr(\sT{\ts+1},\jaG{\excl i},\joGT{\excl i}{\ts+1}|\langle\sT{\ts},\aoHistAT{\excl i}{\ts}\rangle,\aA i)\Pr(\oA i|\aA i,\jaG{\excl i},\sT{\ts+1},\joGT{\excl i}{\ts+1})\nonumber \\
 & = & \sum_{\langle\sT{\ts},\aoHistAT{\excl i}{\ts}\rangle}\gbA i(\langle\sT{\ts},\aoHistAT{\excl i}{\ts}\rangle)\sum_{\sT{\ts+1}}\sum_{\jaG{\excl i}}\sum_{\joGT{\excl i}{\ts+1}}\Pr(\sT{\ts+1}|\sT{\ts},\jaG{\excl i},\aA i)\Pr(\jaG{\excl i}|\aoHistAT{\excl i}{\ts},\jpolG{\excl i})\Pr(\joGT{\excl i}{\ts+1}|\aA i,\jaG{\excl i},\sT{\ts+1})\nonumber \\
 &  & \quad\Pr(\oA i|\aA i,\jaG{\excl i},\sT{\ts+1},\joGT{\excl i}{\ts+1})
\end{eqnarray}
\begin{eqnarray}
 & = & \sum_{\langle\sT{\ts},\aoHistAT{\excl i}{\ts}\rangle}\gbA i(\langle\sT{\ts},\aoHistAT{\excl i}{\ts}\rangle)\sum_{\sT{\ts+1}}\sum_{\jaG{\excl i}}\sum_{\joGT{\excl i}{\ts+1}}\Pr(\sT{\ts+1}|\sT{\ts},\jaG{\excl i},\aA i)\Pr(\jaG{\excl i}|\aoHistAT{\excl i}{\ts},\jpolG{\excl i})\Pr(\joGT{\excl i}{\ts+1}|\aA i,\jaG{\excl i},\sT{\ts+1})\nonumber \\
 &  & \quad\frac{\Pr(\oA i,\joGT{\excl i}{\ts+1}|\aA i,\jaG{\excl i},\sT{\ts+1})}{\Pr(\joGT{\excl i}{\ts+1}|\aA i,\jaG{\excl i},\sT{\ts+1})}\nonumber \\
 & = & \sum_{\langle\sT{\ts},\aoHistAT{\excl i}{\ts}\rangle}\gbA i(\langle\sT{\ts},\aoHistAT{\excl i}{\ts}\rangle)\sum_{\sT{\ts+1}}\sum_{\jaG{\excl i}}\sum_{\joGT{\excl i}{\ts+1}}\Pr(\sT{\ts+1}|\sT{\ts},\ja)\Pr(\jaG{\excl i}|\aoHistAT{\excl i}{\ts},\jpolG{\excl i})\Pr(\oA i,\joGT{\excl i}{\ts+1}|\aA i,\jaG{\excl i},\sT{\ts+1})\nonumber \\
 & = & \sum_{\sT{\ts}}\sum_{\sT{\ts+1}}\sum_{\jaG{\excl i}}\sum_{\joGT{\excl i}{\ts+1}}\Pr(\sT{\ts+1}|\sT{\ts},\ja)\Pr(\joT{\ts+1}|\ja,\sT{\ts+1})\sum_{\aoHistAT{\excl i}{\ts}}\Pr(\jaG{\excl i}|\aoHistAT{\excl i}{\ts},\jpolG{\excl i})\gbA i(\sT{\ts},\aoHistAT{\excl i}{\ts})\label{eq:P(o|gfb,a)__GFM-1}
\end{eqnarray}

\subsection{LFMs}

\subsubsection{Expected Reward}

\label{app:LFM:exp_reward}

Starting with \eqref{eq:R(gfb,a)__GFM}, we have that $R_{i}(\gbA i,\aA i)$
\begin{eqnarray*}
 & = & \sum_{\sT{\ts}}\sum_{\sT{\ts+1}}\sum_{\jaG{\excl i}}\Pr(\sT{\ts+1}|\sT{\ts},\ja)R_{i}(\sT{\ts},\ja,\sT{\ts+1})\sum_{\aoHistAT{\excl i}{\ts}}\Pr(\jaG{\excl i}|\aoHistAT{\excl i}{\ts},\jpolG{\excl i})\gbA i(\sT{\ts},\aoHistAT{\excl i}{\ts})\\
 & = & \sum_{\sT{\ts}}\sum_{\sT{\ts+1}}\sum_{\jaG{\excl i}}\Pr(\sT{\ts+1}|\sT{\ts},\aA i,\jaG{\excl i})R_{i}(\sT{\ts},\ja,\sT{\ts+1})\sum_{\aoHistAT{\excl i}{\ts}}\Pr(\jaG{\excl i}|\aoHistAT{\excl i}{\ts},\jpolG{\excl i})\gbA i(\sT{\ts},\aoHistAT{\excl i}{\ts})\\
 & = & \text{\{restrict to actual dependencies of \ensuremath{R_{i}}\}}\\
 &  & \sum_{\sT{\ts}}\sum_{\sT{\ts+1}}\sum_{\jaG{\excl i}}\Pr(\sT{\ts+1}|\sT{\ts},\aA i,\jaG{\excl i})R_{i}(\mfAT i{\ts},\aA i,\mfAT i{\ts+1})\sum_{\aoHistAT{\excl i}{\ts}}\Pr(\jaG{\excl i}|\aoHistAT{\excl i}{\ts},\jpolG{\excl i})\gbA i(\sT{\ts},\aoHistAT{\excl i}{\ts})\\
 & = & \sum_{\sT{\ts}}\sum_{\jaG{\excl i}}\sum_{\mfAT i{\ts+1},\nmfAT i{\ts+1}}\Pr(\mfAT i{\ts+1},\nmfAT i{\ts+1}|\sT{\ts},\aA i,\jaG{\excl i})R_{i}(\mfAT i{\ts},\aA i,\mfAT i{\ts+1})\sum_{\aoHistAT{\excl i}{\ts}}\Pr(\jaG{\excl i}|\aoHistAT{\excl i}{\ts},\jpolG{\excl i})\gbA i(\sT{\ts},\aoHistAT{\excl i}{\ts})
\end{eqnarray*}

\begin{eqnarray}
 & = & \text{\{via \eqref{eq:P_xm_sa}\}}\nonumber \\
 &  & \sum_{\mfAT i{\ts},\nmfAT i{\ts}}\sum_{\jaG{\excl i}}\sum_{\mfAT i{\ts+1}}\Pr(\mfAT i{\ts+1}|\sT{\ts},\aA i,\jaG{\excl i})R_{i}(\mfAT i{\ts},\aA i,\mfAT i{\ts+1})\sum_{\aoHistAT{\excl i}{\ts}}\Pr(\jaG{\excl i}|\aoHistAT{\excl i}{\ts},\jpolG{\excl i})\gbA i(\sT{\ts},\aoHistAT{\excl i}{\ts})\nonumber \\
 & = & \sum_{\mfAT i{\ts}}\sum_{\mfAT i{\ts+1}}R_{i}(\mfAT i{\ts},\aA i,\mfAT i{\ts+1})\sum_{\nmfAT i{\ts}}\sum_{\jaG{\excl i}}\Pr(\mfAT i{\ts+1}|\sT{\ts},\aA i,\jaG{\excl i})\sum_{\aoHistAT{\excl i}{\ts}}\Pr(\jaG{\excl i}|\aoHistAT{\excl i}{\ts},\jpolG{\excl i})\gbA i(\sT{\ts},\aoHistAT{\excl i}{\ts})\nonumber \\
 & = & \sum_{\mfAT i{\ts}}\sum_{\mfAT i{\ts+1}}R_{i}(\mfAT i{\ts},\aA i,\mfAT i{\ts+1})\Pr(\mfAT i{\ts},\mfAT i{\ts+1}|\gbA i,\aAT it,\jpolG{\excl i}),\label{eq:R(gfb,a)__LFM-1}
\end{eqnarray}
where we implicitly defined (remember $\sT{\ts}=\langle\mfAT i{\ts},\nmfAT i{\ts}\rangle$)

\begin{equation}
\Pr(\mfAT i{\ts},\mfAT i{\ts+1}|\gbA i,\aAT it,\jpolG{\excl i})\defas\sum_{\nmfAT i{\ts}}\sum_{\jaG{\excl i}}\Pr(\mfAT i{\ts+1}|\sT{\ts},\aA i,\jaG{\excl i})\sum_{\aoHistAT{\excl i}{\ts}}\Pr(\jaG{\excl i}|\aoHistAT{\excl i}{\ts},\jpolG{\excl i})\gbA i(\sT{\ts},\aoHistAT{\excl i}{\ts})\label{eq:P_xmxm_gfb-1}
\end{equation}

\subsubsection{Expected Observation Probability}

\label{app:LFM:exp_obs}In this case, the expected observation probability
$\Pr(\oAT i{\ts+1}|\gbA i,\aA i)$ equals

\begin{eqnarray}
 & = & \sum_{\sT{\ts}}\sum_{\sT{\ts+1}}\sum_{\jaG{\excl i}}\sum_{\joGT{\excl i}{\ts+1}}\Pr(\sT{\ts+1}|\sT{\ts},\ja)\Pr(\joT\ts|\ja,\sT{\ts+1})\sum_{\aoHistAT{\excl i}{\ts}}\Pr(\jaG{\excl i}|\aoHistAT{\excl i}{\ts},\jpolG{\excl i})\gbA i(\sT{\ts},\aoHistAT{\excl i}{\ts})\nonumber \\
 & = & \sum_{\sT{\ts+1}}\sum_{\sT{\ts}}\sum_{\jaG{\excl i}}\sum_{\joGT{\excl i}{\ts+1}}\Pr(\oAT i{\ts+1},\joGT{\excl i}{\ts+1}|\aA i,\jaG{\excl i},\sT{\ts+1})\Pr(\sT{\ts+1}|\sT{\ts},\aA i,\jaG{\excl i})\nonumber \\
 &  & \sum_{\aoHistAT{\excl i}{\ts}}\Pr(\jaG{\excl i}|\aoHistAT{\excl i}{\ts},\jpolG{\excl i})\gbA i(\sT{\ts},\aoHistAT{\excl i}{\ts})\nonumber \\
 & = & \text{\{marginalize\}}\nonumber \\
 &  & \sum_{\sT{\ts+1}}\sum_{\sT{\ts}}\sum_{\jaG{\excl i}}\Pr(\oAT i{\ts+1}|\aA i,\jaG{\excl i},\sT{\ts+1})\Pr(\sT{\ts+1}|\sT{\ts},\aA i,\jaG{\excl i})\sum_{\aoHistAT{\excl i}{\ts}}\Pr(\jaG{\excl i}|\aoHistAT{\excl i}{\ts},\jpolG{\excl i})\gbA i(\sT{\ts},\aoHistAT{\excl i}{\ts})\nonumber \\
 & = & \text{\{restrict to actual dependencies\}}\nonumber \\
 &  & \sum_{\sT{\ts+1}}\sum_{\sT{\ts}}\sum_{\jaG{\excl i}}\Pr(\oAT i{\ts+1}|\aA i,\mfAT i{\ts+1})\Pr(\sT{\ts+1}|\sT{\ts},\aA i,\jaG{\excl i})\sum_{\aoHistAT{\excl i}{\ts}}\Pr(\jaG{\excl i}|\aoHistAT{\excl i}{\ts},\jpolG{\excl i})\gbA i(\sT{\ts},\aoHistAT{\excl i}{\ts})\nonumber \\
 & = & \sum_{\mfAT i{\ts+1},\nmfAT i{\ts+1}}\sum_{\sT{\ts}}\sum_{\jaG{\excl i}}\Pr(\oAT i{\ts+1}|\aA i,\mfAT i{\ts+1})\Pr(\mfAT i{\ts+1},\nmfAT i{\ts+1}|\sT{\ts},\aA i,\jaG{\excl i})\sum_{\aoHistAT{\excl i}{\ts}}\Pr(\jaG{\excl i}|\aoHistAT{\excl i}{\ts},\jpolG{\excl i})\gbA i(\sT{\ts},\aoHistAT{\excl i}{\ts})\nonumber \\
 & = & \sum_{\mfAT i{\ts+1}}\Pr(\oAT i{\ts+1}|\aA i,\mfAT i{\ts+1})\sum_{\sT{\ts}}\sum_{\nmfAT i{\ts+1}}\sum_{\jaG{\excl i}}\Pr(\mfAT i{\ts+1},\nmfAT i{\ts+1}|\sT{\ts},\aA i,\jaG{\excl i})\sum_{\aoHistAT{\excl i}{\ts}}\Pr(\jaG{\excl i}|\aoHistAT{\excl i}{\ts},\jpolG{\excl i})\gbA i(\sT{\ts},\aoHistAT{\excl i}{\ts})\nonumber \\
 & = & \sum_{\mfAT i{\ts+1}}\Pr(\oAT i{\ts+1}|\aA i,\mfAT i{\ts+1})\Pr(\mfAT i{\ts+1}|\gbA i,\aA i,\jpolG{\excl i})\label{eq:P(o|gfb,a)__LFM-1}
\end{eqnarray}
where we implicitly defined
\begin{equation}
\Pr(\mfAT i{\ts+1}|\gbA i,\aA i)\defas\sum_{\sT{\ts}}\sum_{\jaG{\excl i}}\Pr(\mfAT i{\ts+1}|\sT{\ts},\aA i,\jaG{\excl i})\sum_{\aoHistAT{\excl i}{\ts}}\Pr(\jaG{\excl i}|\aoHistAT{\excl i}{\ts}\jpolG{\excl i})\gbA i(\sT{\ts},\aoHistAT{\excl i}{\ts}).\label{eq:P_fm__gb-1}
\end{equation}

\subsection{IALMs}

\subsubsection{Expected Observation Probability}

\label{app:IALM:exp_obs}

\begin{align*}
\Pr(\oAT i{\ts+1}|\lbA i,\aAT i\ts) & =\E_{\sAugAT i\ts\sim\lbA i,\sAugAT i{t+1}\sim\Aug T(\sAugAT i\ts,\aAT i\ts,\cdot)}\left[\Aug O(\oAT i{\ts+1}|\aAT it,\sAugAT i{t+1})\right]\\
 & =\sum_{\sAugAT i\ts}\lbA i(\sAugAT it)\sum_{\sAugAT i{t+1}}\Aug T(\sAugAT i{t+1}|\sAugAT i\ts,\aAT i\ts)\Aug O(\oAT i{\ts+1}|\aAT it,\sAugAT i{t+1})
\end{align*}

\begin{eqnarray}
 & = & \sum_{\mfAT i{\ts},\dsetAT i{\ts+1}}\lbA i(\mfAT i{\ts},\dsetAT i{\ts+1})\sum_{\mfAT i{\ts+1},\dsetAT i{\ts+2}}\Pr(\mfAT i{\ts+1},\dsetAT i{\ts+2}|\mfAT i{\ts},\dsetAT i{\ts+1},\aAT it,\ifpiAT i{\ts+1})\Pr(\oAT i{\ts+1}|\aAT it,\mfAT i{\ts+1})\nonumber \\
 & = & \sum_{\mfAT i{\ts},\dsetAT i{\ts+1}}\lbA i(\mfAT i{\ts},\dsetAT i{\ts+1})\sum_{\mfAT i{\ts+1}}\Pr(\mfAT i{\ts+1}|\mfAT i{\ts},\dsetAT i{\ts+1},\aAT it,\ifpiAT i{\ts+1})\Pr(\oAT i{\ts+1}|\aAT it,\mfAT i{\ts+1})\nonumber \\
 & = & \sum_{\mfAT i{\ts+1}}\Pr(\oAT i{\ts+1}|\aAT it,\mfAT i{\ts+1})\left[\sum_{\mfAT i{\ts},\dsetAT i{\ts+1}}\Pr(\mfAT i{\ts+1}|\mfAT i{\ts},\dsetAT i{\ts+1},\aAT it,\ifpiAT i{\ts+1})\lbA i(\mfAT i{\ts},\dsetAT i{\ts+1})\right]\nonumber \\
 & = & \sum_{\mfAT i{\ts+1}}\Pr(\oAT i{\ts+1}|\aAT it,\mfAT i{\ts+1})\Pr(\mfAT i{\ts+1}|\lbA i,\aAT it,\ifpiAT i{\ts+1}),\label{eq:P(o|lfb,a)-1}
\end{eqnarray}
where we implicitly defined

\begin{equation}
\Pr(\mfAT i{\ts+1}|\lbA i,\aA i,\ifpiAT i{\ts+1})\defas\sum_{\mfAT i{\ts},\dsetAT i{\ts+1}}\Pr(\mfAT i{\ts+1}|\mfAT i{\ts},\dsetAT i{\ts+1},\aAT it,\ifpiAT i{\ts+1})\lbA i(\mfAT i{\ts},\dsetAT i{\ts+1}).
\end{equation}
(consistent with equation \ref{eq:P_fm__lb}).

\subsubsection{Expected Reward}

\label{app:IALM:exp_reward}
\begin{eqnarray*}
\RA i(\lbA i,\aAT it) & = & \E_{\sAugAT it\sim\lbA i,\sAugAT i{t+1}\sim\Aug T(\sAugAT it,\aAT it,\cdot)}\left[\Aug{R_{i}}(\sAugAT it,\aAT it,\sAugAT i{t+1})\right]\\
 & = & \sum_{\sAugAT it}\lbA i(\sAugAT it)\sum_{\sAugAT i{t+1}}\Aug T(\sAugAT i{t+1}|\sAugAT it,\aAT it)\Aug{R_{i}}(\sAugAT it,\aAT it,\sAugAT i{t+1})\\
 & = & \sum_{\mfAT i{\ts},\dsetAT i{\ts+1}}\lbA i(\mfAT i{\ts},\dsetAT i{\ts+1})\sum_{\mfAT i{\ts+1},\dsetAT i{\ts+2}}\Pr(\mfAT i{\ts+1},\dsetAT i{\ts+2}|\mfAT i{\ts},\dsetAT i{\ts+1},\aAT it,\ifpiAT i{\ts+1})\RA i(\mfAT i{\ts},\aAT it,\mfAT i{\ts+1})\\
 & = & \sum_{\mfAT i{\ts},\dsetAT i{\ts+1}}\lbA i(\mfAT i{\ts},\dsetAT i{\ts+1})\sum_{\mfAT i{\ts+1}}\Pr(\mfAT i{\ts+1}|\mfAT i{\ts},\dsetAT i{\ts+1},\aAT it,\ifpiAT i{\ts+1})\RA i(\mfAT i{\ts},\aAT it,\mfAT i{\ts+1})
\end{eqnarray*}

\begin{eqnarray}
 & = & \sum_{\mfAT i{\ts}}\sum_{\mfAT i{\ts+1}}\RA i(\mfAT i{\ts},\aAT it,\mfAT i{\ts+1})\left[\sum_{\dsetAT i{\ts+1}}\Pr(\mfAT i{\ts+1}|\mfAT i{\ts},\dsetAT i{\ts+1},\aAT it,\ifpiAT i{\ts+1})\lbA i(\mfAT i{\ts},\dsetAT i{\ts+1})\right]\nonumber \\
 & = & \sum_{\mfAT i{\ts}}\sum_{\mfAT i{\ts+1}}\RA i(\mfAT i{\ts},\aAT it,\mfAT i{\ts+1})\Pr(\mfAT i{\ts},\mfAT i{\ts+1}|\lbA i,\aAT it,\ifpiAT i{\ts+1})\label{eq:R(lfb,a)-1}
\end{eqnarray}
where we implicitly defined
\begin{equation}
\Pr(\mfAT i{\ts},\mfAT i{\ts+1}|\lbA i,\aAT it,\ifpiAT i{\ts+1})\defas\sum_{\dsetAT i{\ts+1}}\Pr(\mfAT i{\ts+1}|\mfAT i{\ts},\dsetAT i{\ts+1},\aAT it,\ifpiAT i{\ts+1})\lbA i(\mfAT i{\ts},\dsetAT i{\ts+1})
\end{equation}
(consistent with equation \ref{eq:P_xmxm_lfb}).

\newpage{}

\section{List of Acronyms}

\label{sec:List-of-Acronyms}

\begin{tabular}{>{\raggedright}p{0.25\columnwidth}>{\raggedright}p{0.73\columnwidth}}
\toprule 
Acronym & description\tabularnewline
\midrule
2DBN & 2-stage dynamic Bayesian network\tabularnewline
AOH & action-observation history\tabularnewline
CPT & conditional probability table\tabularnewline
DBN & dynamic Bayesian network\tabularnewline
Dec-MDP & decentralized Markov decision process\tabularnewline
Dec-POMDP & decentralized partially observable Markov decision process\tabularnewline
EDI-Dec-MDP & Dec-MDP with event-driven interactions\tabularnewline
fDec-POMDP & factored Dec-POMDP\tabularnewline
fPOSG & factored POSG\tabularnewline
GFBRMs & global-form best-response model\tabularnewline
IALM & Influence-augmented local model\tabularnewline
IBA & influence-based abstraction\tabularnewline
ISDs & intra-stage dependencies\tabularnewline
LFM & local-form model\tabularnewline
MDP & Markov decision process\tabularnewline
ND-POMDP & network-distributed POMDP\tabularnewline
NLAF & non-locally affected factor\tabularnewline
NMF & non-modeled factor\tabularnewline
OLAF & only-locally affected factor\tabularnewline
POMDP & partially observable Markov decision process\tabularnewline
POSG & partially observable stochastic game\tabularnewline
RL & reinforcement learning\tabularnewline
TD-POMDP & transition-decoupled POMDP\tabularnewline
TI-Dec-MDP & transition-independent Dec-MDP\tabularnewline
\bottomrule
\end{tabular}

\newpage{}

\section{List of Notation}

\label{sec:List-of-Notation}

\begin{longtable}[c]{>{\raggedright}p{0.14\columnwidth}>{\raggedright}p{0.78\columnwidth}}
\toprule 
symbol & description\tabularnewline
\hline
\endhead
 & \tabularnewline
General & \tabularnewline
$\E\left[\cdot\right]$ & expectation\tabularnewline
$\PrS(\cdot)$ & set of probability distributions over $\cdot$\tabularnewline
$\KroD{\cdot}{\cdot}$ & denotes the Kronecker delta function\tabularnewline
$\argsA{\left(\cdot\right)}{i}$ & a variable of interest $\left(\cdot\right)$ associated with agent
$i$\tabularnewline
$\argsA{\left(\cdot\right)}{\excl i}$ & a tuple of variables associated with all agents except $i$\tabularnewline
$\argsAT{\left(\cdot\right)}{i}{\ts}$ & a variable of interest $\left(\cdot\right)$ associated with agent
$i$ at time step $\ts$\tabularnewline
$\argsT{\left(\cdot\right)}{k:\ts}$ & partial history of values of $\left(\cdot\right)$ (e.g., $l_{tgt}^{k:t}$
is the history of target locations)\tabularnewline
$\vec{\left(\cdot\right)}^{\ts}$ & history of values of $\left(\cdot\right)$. I.e., $\vec{\left(\cdot\right)}^{\ts}=\argsT{\left(\cdot\right)}{0:\ts}$\tabularnewline
 & \tabularnewline
Models & \tabularnewline
$\mathcal{M}^{POSG}$ & A partially observable stochastic game (POSG)\tabularnewline
$\mathcal{M}^{LFM}$ & A local-form model: includes local state definitions for each agent\tabularnewline
$\mathcal{M}_{i}^{GFBR}$ & A global-form best-response model (GFBRM) for agent $i$\tabularnewline
$\mathcal{M}_{i}^{IALM}$ & An influence-augmented local model can be computed from an LFM when
fixing other policies: $\mathcal{M}_{i}^{IALM}(\mathcal{M}^{LFM},\jpolG{\excl i})$\tabularnewline
 & \tabularnewline
\multicolumn{2}{l}{Model components}\tabularnewline
$\agentS$ & the set of agents or (d)ecision makers\tabularnewline
$\sS$ & set of (global) states\tabularnewline
$\s$ & a global (i.e., Markov) state\tabularnewline
$\jaS$ & set of (joint) actions $\jaS=\aAS1\times\dots\aAS\nrA$\tabularnewline
$\ja$ & a (joint) action $\ja=\left\langle \aA1,\dots,\aA{\nrA}\right\rangle $\tabularnewline
$\Tfunc$ & transition function specifies $\Pr(\sT{\ts+1}|\sT{\ts},\jaT{\ts})$\tabularnewline
$\set{\REWF}$ & set of reward functions\tabularnewline
$\RA i$ & reward function of agent $i$\tabularnewline
$\joS$ & set of (joint) observations $\joS=\oAS1\times\dots\times\oAS\nrA$\tabularnewline
$\jo$ & a (joint) observation $\jo=\left\langle \oA1,\dots,\oA{\nrA}\right\rangle $\tabularnewline
$\Ofunc$ & observation function specifies $\Pr(\jo|\ja,\s')$\tabularnewline
$\gamma$ & the discount factor\tabularnewline
$\hor$ & horizon of the problem\tabularnewline
$\bO$ & initial state distribution: $\bO\in\PrS(\sS)$\tabularnewline
$\bar{\sAS i}$ & set of \emph{augmented }states. E.g., in a GFBRM\tabularnewline
$\bar{T}_{i},\bar{R}_{i}$,etc. & transitions, rewards, etc. over augmented states\tabularnewline
 & \tabularnewline
\multicolumn{2}{l}{histories and beliefs}\tabularnewline
$\aoHistAT i{\ts}$ & the action-observation history (AOH) of agent~$i$ at stage $\ts$\tabularnewline
$\aoHistATS i\ts$ & the set of AOHs of agent~$i$ at stage $\ts$\tabularnewline
$\bA{}$ & belief of a single POMDP agent $\bA{}(\s)\defas\Pr(\s|\bO,\aoHistAT{}{\ts})$\tabularnewline
$BU()$ & The belief update $\bA{}'=BU(\bA{},\aA{},\oA{})$\tabularnewline
$\gbA i$ & global-form belief of agent~$i$\tabularnewline
$\lbA i$ & local-form belief of agent~$i$\tabularnewline
$\polA i$ & policy of agent~$i$\tabularnewline
 & \tabularnewline
\multicolumn{2}{l}{Value functions}\tabularnewline
$\V^{{\ts}}$ & The optimal value function at stage $\ts$ with $\hor-\ts$ stages-to-go\tabularnewline
$\QT{\ts}$ & The optimal action-value function, or `Q-function'\tabularnewline
 & \tabularnewline
\multicolumn{2}{l}{Factored States}\tabularnewline
$\sfacS$ & the set of state factors $\sfacS=\left\{ \sfacI1,\dots,\sfacI\nrSF\right\} $
in a factored model, we have that $\sS=\sfacvIS1\times\dots\times\sfacvIS\nrSF$\tabularnewline
$\sfacI k$ & the $k$-th state factor\tabularnewline
$\sfacvIS k$ & the set of values $\sfacvI k\in\sfacvIS k$ that $\sfacI k$ can take
\tabularnewline
$\sfacvI k$ & a value of state factor $k$\tabularnewline
$\text{ORel}{}_{i}(\sfac)$ & observation relevant factor of agent~$i$\tabularnewline
$\text{RRel}{}_{i}(\sfac)$ & reward relevant factor of agent~$i$\tabularnewline
 & \tabularnewline
\multicolumn{2}{l}{Local states of agent $i$}\tabularnewline
$\sAS i$ & state space of agent~$i$ (general term: also outside LFMs)\tabularnewline
$\sA i$ & state for agent~$i$ (general term: also outside LFMs)\tabularnewline
$\LSF(i)$ & the local state function of an LFM for agent $i$: partitions $\sfacS$
into modeled state factors $\mfI k$ and non-modeled ones $\nmfI k$\tabularnewline
$\mfI k$ & $k$-th modeled factor\tabularnewline
$\mflI k$ & $k$-th  only-locally-affected factor (OLAF): a modeled factor that
is not an influence destination\tabularnewline
$\mfnI k$ & $k$-th a non-locally-affected factor (NLAF): a modeled factor that
is an influence destination\tabularnewline
$\slfmAS i$ & local state space (of \emph{modeled }factors) in an LFM\tabularnewline
$\mfA i$ & local state of agent~$i$ in an LFM\tabularnewline
$\nmfI k$ & a non-modeled factor\tabularnewline
$\nmfA i$ & instantiation of all non-modeled factors. I.e., $\s=\left\langle \mfA i,\nmfA i\right\rangle $\tabularnewline
$OLAF(i)$ & the set of OLAFs\tabularnewline
$NLAF(i)$ & the set of NLAFs\tabularnewline
 & \tabularnewline
\multicolumn{2}{l}{Influence notation}\tabularnewline
$\ifsAT i{\ts}$ & instantiation of all direct influence sources for stage $\ts$: $\ifsAT i{\ts}=\langle\nmfAT u{\ts-1},\jaGT u{\ts-1},\nmfAT u{\ts}\rangle$\tabularnewline
$\nmfAT u{\ts-1}$ & the (non-modeled) state factors that are direct influence sources\tabularnewline
$\nmfAT u{\ts}$ & the (non-modeled) state factors that are direct intra-stage influence
sources\tabularnewline
$\jaGT u{\ts-1}$ & the actions (of some subset of agents) that are direct influence sources\tabularnewline
$\aoHistGT u{\ts-1}$ & the AOHs of those other agents whose action is an influence source
(i.e., $\aoHistGT u{\ts-1}$ involves the same agents as $\jaGT u{\ts-1}$)\tabularnewline
$v$ & indirect sources: $\nmfAT v{\ts-1}$, $\jaGT u{\ts-1}$, $\nmfAT v{\ts}$
can effect the direct sources $\ifsAT i{\ts}$\tabularnewline
$w$ & union of direct and indirect sources: $w=u\cup v$; e.g., $\jpolG w$
is the joint policy of those agents whose action is either a direct
or an indirect influence source\tabularnewline
$\dsetAT i{\ts}$ & a d-separating set for agent~$i$'s influence at stage $t$\tabularnewline
$\dsetUF$ & the d-set update function: $\dsetAT i{\ts+2}=\dsetUF(\mfAT i{\ts},\aAT i{\ts},\mfAT i{\ts+1},\dsetAT i{\ts+1})$\tabularnewline
$\dsetCompF$ & the d-set compression function $\dsetCompF(\dsetAT i{\ts+1})$ that
computes a sufficient statistic for $\dsetAT i{\ts+1}$\tabularnewline
$\ifpiA{i}(\jpolG{\excl i})$ & $\ifpiA{i}(\jpolG{\excl i})=\left(\ifpiAT i1(\jpolG{\excl i}),\dots,\ifpiAT i{\h}(\jpolG{\excl i})\right)$
is an incoming influence point $\ifpiA{i}(\jpolG{\excl i})$\tabularnewline
$\ifpiAT i{\ts}(\jpolG{\excl i})$ & The incoming influence at stage~$\ts$: a conditional probability
distribution over values of the influence sources\tabularnewline
$\iffunc(\ifsAT i{\ts}|\dsetAT i{\ts})$ & shorthand for $\ifpiAT i{\ts}(\ifsAT i{\ts}|\dsetAT i{\ts},\bO,\jpolG{\excl i})=\ifpiAT i{\ts}(\jpolG{\excl i})(\ifsAT i{\ts}|\dsetAT i{\ts},\bO)$\tabularnewline
$p_{\ifpiAT i{\ts+1}}$ & influence-induced CPT that specifies $p_{\ifpiAT i{\ts+1}}(\mfnT{\ts+1}|\mfAT i{\ts},\dsetAT i{\ts+1},\aA i)$\tabularnewline
\bottomrule
\end{longtable}

~

\bibliographystyle{theapa}%UNCOMMENT

\bibliography{bib}

\end{document}